\renewcommand{\cite}{\citep}
\newcommand{\algorithmicelsif}{\textbf{else if}}
\begin{document}

% If your paper is accepted and the title of your paper is very long,
% the style will print as headings an error message. Use the following
% command to supply a shorter title of your paper so that it can be
% used as headings.
%
%\runningtitle{I use this title instead because the last one was very long}

% If your paper is accepted and the number of authors is large, the
% style will print as headings an error message. Use the following
% command to supply a shorter version of the authors names so that
% they can be used as headings (for example, use only the surnames)
%
%\runningauthor{Surname 1, Surname 2, Surname 3, ...., Surname n}

\twocolumn[

\aistatstitle{Nearly Optimal Algorithms for Level Set Estimation}

% \aistatsauthor{ Blake Mason \And Romain Camilleri \And  Subhojyoti Mukherjee \AND Kevin Jamieson \And Robert Nowak \And Lalit Jain }
\aistatsauthor{ Blake Mason \And Romain Camilleri \And  Subhojyoti Mukherjee }
\aistatsaddress{ Rice University \And  University of Washington \And University of Wisconsin-- Madison } 
\aistatsauthor{Kevin Jamieson \And Robert Nowak \And Lalit Jain }
\aistatsaddress{ University of Washington \And  University of Wisconsin-- Madison \And University of Washington }
]

\begin{abstract}
The level set estimation problem seeks to find all points in a domain $\X$ where the value of an unknown function $f:\X\rightarrow \mathbb{R}$ exceeds a threshold $\alpha$. The estimation is based on noisy function evaluations that may be acquired at sequentially and adaptively chosen locations in $\X$. The threshold value $\alpha$ can either be \emph{explicit} and provided a priori, or \emph{implicit} and defined relative to the optimal function value, i.e.  $\alpha = (1-\epsilon)f(\bx_\ast)$ for a given $\epsilon > 0$ where $f(\bx_\ast)$ is the maximal function value and is unknown.  In this work we provide a new approach to the level set estimation problem by relating it to recent adaptive experimental design methods for linear bandits in the Reproducing Kernel Hilbert Space (RKHS) setting. We assume that $f$ can be approximated by a function in the RKHS up to an unknown misspecification and provide novel algorithms for both the implicit and explicit cases in this setting with strong theoretical guarantees. 
Moreover, in the linear (kernel) setting, we show that our bounds are nearly optimal, namely, our upper bounds match existing lower bounds for threshold linear bandits. 
To our knowledge this work provides the first instance-dependent, non-asymptotic upper bounds on sample complexity of level-set estimation that match information theoretic lower bounds.
\end{abstract}

\section{Introduction}

The level-set of a function is a subset of its domain where it exceeds a specific value. 
Level set estimation is the problem of identifying a subset that approximates the true level-set based on a finite set of potentially noisy function evaluations. 
As an example, consider the goal of detecting a region in a body of water, such as a channel, that is at least $20m$ deep for ships to safely pass. 
Given that we can obtain noisy estimates of depth using a sonar device at the locations of our choosing, where should we measure in order to acquire the most accurate level-set estimation while using as few total measurements as possible?
Level-set estimation can also be interpreted as a kind of classification rule. For example, using as few total experiments as possible, we may want to identify all compounds among a given finite set under consideration that have some property (e.g., binding affinity) that exceeds some target threshold.

% For example, this goal could be related to testing water quality in lakes to detect the presence of contaminants above a safe level or to finding all product designs that exceed a minimal consumer score/rating. 
% \kevin{In each of these examples, its not clear what the features are and what the level-set represents. Are we sure we don't want a more visual depth example?}
% \rob{I don't really like the sonar example, and I think the problem can be motivated/illustrated by just one simple sentence, like the one above.}
% To  illustrate, consider the goal of identifying the regions of a body of water, say, a canal, that are of depth at least 20m for ship traffic. 
% To estimate this level-set of depth 20m, one can imagine going to different locations in the body of water and using a sonar measurement device that simply approximates the distance to the bottom by multiplying the speed of sound in the water by the round trip time. 
% Unfortunately, the water conditions like salinity and temperature alter the speed of sound so the observation is only approximating the truth. 
% Importantly, these sources of ``error'' are not independent or even stochastic, but are unlikely to be too large in magnitude.   
% Given that we can obtain an approximate depth at the locations of our choosing, where should we measure in order to acquire the most accurate level-set estimation using as few total measurements as possible?

While level-set estimation is somewhat of a well-studied problem, to date there is a lack of theoretical understanding of the limits and tradeoffs of estimation accuracy and number of measurements. 
Most algorithms proceed by sequentially and greedily optimizing an \emph{acquisition function} that is constructed using all the measurements observed up to the current time. 
These heuristics are known to work very well in practice, but their guarantees are ad hoc and, at best, worst-case (minimax).
In this work we are interested in understanding the instance-dependent sample complexity of level-set estimation.
That is, we would like for an algorithm to output a satisfactory estimate of the level-set as fast as \emph{any} algorithm could for \emph{this particular} instance, not some worst-case instance.

In contrast to prior works that propose a sampling heuristic--usually based on identifying an informative point--and bound its sample complexity, we work backwards. 
Namely, we first consider an information theoretic lower bound for the level-set estimation problem that suggests an ``optimal'' sampling strategy. 
Because this ideal sampling strategy is a function of the true (unknown) function, it is a priori impossible to realize.
Instead, we propose a series of sampling strategies, based on \emph{experimental designs}, that mimic this optimal sampling strategy given the information available at the current time. By the end, these strategies provably achieve the optimal sample complexity with minimal overhead. Furthermore, we show that our sampling strategy leads to a upper bound on the sample complexity that is tighter than those in the existing literature.
In what follows, we first formally state the problem and our desired objectives. 
We then review the related work in context before proceeding to our lower bounds and algorithms. 
We finish with experiments contrasting with existing work. 

\subsection{Problem statement}

We assume there exists an unknown function $f: \R^d \rightarrow [-B,B]$ and a  subset of allowable sampling locations $\mc{X}\subset \mathbb{R}^d$ which span $\R^d$.
Though the function $f$ is unknown, we may query its value for any $\bx \in \X$ and receive a noisy estimate $f(\bx) + \eta$ where $\eta$ is iid, $\E[\eta] = 0$, and $\E[\eta^2] \leq \sigma^2$.
We define two objectives.  

%\kevin{We can change this model to be that errors are time-varying and the only thing we know is that they are totally bounded by $h$. 
% That is, the errors may change over time but they don't average to anything (since they're not random)

\textbf{Explicit Level Set Estimation:} Given a specified threshold $\alpha \in \mathbb{R}$, the goal is to identify $G_{\alpha} := \{\bx\in \mc{X}: f(x) > \alpha\}$. 

\textbf{Implicit Level Set Estimation:} Let $\bx_{\ast} \in \arg\max_{\bx\in \mc{X}} f(\bx)$. Given $\epsilon > 0$, the goal is to identify $G_{\epsilon}:=\{\bx\in \mc{X}: f(\bx)> (1-\epsilon)f(\bx_{\ast})\}$.

% \textbf{Explicit Level Set Estimation:} Given a specified threshold $\alpha \in \mathbb{R}$ and a tolerance $\widetilde{\beta}$, the goal is to identify a set $\hG$ such that $\hG \subset G_{\alpha, \widetilde{\beta}} := \{x\in \mc{X}: f(x) > \alpha - \widetilde{\beta}\}$. 

% \textbf{Implicit Level Set Estimation:} Let $x^{\ast} \in \arg\max_{x\in \mc{X}} f(x)$. Given $\epsilon > 0$ and a tolerance $\widetilde{\beta}$, the goal is to identify a set $\hG$ such that $\hG \subset G_{\epsilon, \widetilde{\beta}}:=\{x\in \mc{X}: f(x)> (1-\epsilon)f(x^{\ast}) - \widetilde{\beta}\}$. 

Consider an algorithm that at each time $t$ selects an arm $\bx_{t} \in \X$ that is measurable with respect to a $\sigma$-algebra $\mathcal{F}_{t-1} = \sigma(\bx_1, y_1, \cdots, \bx_{t-1},y_{t-1})$ and receives a value $y_t = f(\bx_{t}) + \eta_t$. %The objective of the algorithm is to identify all points in $G_{\alpha,\widetilde{\beta}}$ in the explicit setting and $G_{\epsilon, \widetilde{\beta}}$ in the implicit setting in as few samples as possible. We take $G_\alpha = G_{\alpha, 0}$ and $G_\epsilon = G_{\epsilon, 0}$. 
To be precise, we say that an algorithm is \emph{PAC-$\delta$} for the explicit (respectively implicit) level set problem if it stops at a time $T_\delta$ which is measurable with respect to the filtration $(\mc{F}_t)_{t\geq 1}$ and returns $G_{\alpha}$ (and in the implicit setting returns $G_{\epsilon}$).
If $f(\bx)$ is very close to the threshold, it may take an enormous number of samples to determine whether it is above or below the threshold, so in practice we introduce a $\widetilde{\beta} \geq 0$ tolerance that ensures that any learner has a finite sample complexity (see theorems). 
But in the discussion that follows, assume that $f(x)$ is bounded away from the threshold.

%$G_{\alpha, \widetilde{\beta}}$ (resp. $G_{\epsilon, \widetilde{\beta}}$) with probability greater than $1-\delta$. 

Our approach is based on modeling $f$ in a Reproducing Kernel Hilbert Space (RKHS) $\mc{H}$.  Let $\phi : \R^d \mapsto \mc{H}$ be the ``feature map'' associated with the RKHS.  Since $|f(x)|\leq B$ for all $\bx\in \X$, there exists a $\theta_\ast \in \mc{H}$ and a scalar $h\geq 0$ such that $\max_{\bx\in\mc{X}} |f(\bx) - \langle \theta_*,\phi(\bx)\rangle_{\mc{H}}| \leq h$. Our sample complexity bounds will depend on $h$ and $\|\theta^*\|_{\mc{H}}$ which we denote $\|\theta_\ast\|$.
% To aid in learning $G_{\alpha,\widetilde{\beta}}$ and $G_{\epsilon,\widetilde{\beta}}$, we place the following structure on $f$:
% \textbf{Assumption:} There exists a known feature map $\phi : \R^d \mapsto \mc{H}$ that maps each $x \in \mc{X}$ to a (possibly infinite dimensional) Hilbert space $\mc{H}$, and moreover, there exists an unknown $\theta_* \in \mc{H}$ and an unknown $h \geq 0$ such that $\max_{x\in\mc{X}} |f(\bx) - \langle \theta_*,\phi(\bx)\rangle_{\mc{H}}| \leq h$. \rob{how is this an assumption if $h\geq 0$ is unknown... this is just a fact (if $\mc{H}$ is infinite), not an assumption, since $\X$ is finite.}
If $h$ is small, then $f$ is well approximated as a linear function of the feature maps $\phi(\bx)$. We refer to the case when $h > 0$ as being \emph{misspecified} and otherwise when $h = 0$ as being \emph{well-specified}. 
This class of functions is frequently used for level-set estimation because it is often sufficiently rich to model real-world functions but also contains enough structure to quantify the uncertainty of generalizing a learned function to unmeasured locations. 
One note of departure from the existing literature is that we do not assume the unknown function is precisely captured by a function in an RKHS, only that it is well approximated by one (i.e., the misspecified setting). In the discussion that follows, we additionally assume $|\mc{X}| < \infty$  for simplicity since in practice given an arbitrary bounded domain we can replace $\mc{X}$ with a finite cover.

\section{Related Works}

The level-set estimation problem naturally connects to several related ideas in Bayesian optimization and multi-armed bandits. In the former setting, methods tend to sample greedily according to an acquisition function that seeks to minimize the uncertainty of the learner about the level set.
The first work on level set estimation that employed the use of Gaussian processes and introduced the Straddle heuristic is due to \citet{bryan2005active}. These ideas were further developed in \citet{gotovos2013active} which proposed the LSE and LSE-imp algorithms for explicit and implicit level set respectively. They provide a theoretical guarantee on the sample complexities of LSE and LSE-imp, and as we will show below, our sample complexity is always at least as good as their stated bounds.  \citet{bogunovic2016truncated} further connected Bayesian optimization with level set estimation and considered the setting of heteroscedastic noise. The work of \citet{shekhar2019multiscale} focuses on the level-set problem in a continuous domain, and provides an algorithm that maintains a notion of uncertainty over regions, providing a potentially improved computational complexity, along with tighter sample complexity bounds compared to LSE for certain kernels and smoothness assumptions.  The work of \citet{zanette2018robust} reposes level-set estimation as a classification problem and introduces a novel acquisition acquisition function. \citet{iwazaki2020bayesian} extends the work of \citet{zanette2018robust} to improve model robustness in quality control applications.
This line of work is also related to Gaussian Process Bandits, namely the GPUCB algorithm and improved variants \cite{srinivas2009gaussian, chowdhury2017kernelized, valko2013finite}. \citet{ha2020high} introduces a Bayesian Neural Network approach for active level set estimation using Monte Carlo dropout techniques. 
Table~\ref{table:lse} in the appendix summarizes the results we are aware of in the Gaussian process setting. 

In the multi-armed and linear bandit setting, the explicit level set estimation problem is related to threshold bandits where one seeks to find all arms above an explicit threshold \cite{locatelli2016optimal,jamieson2018bandit, degenne2020gamification}. The approach of \citet{degenne2020gamification}, would provide an  asymptotically optimal algorithm in the linear setting, however we are not aware of any other works that provide an optimal finite-time guarantee. The implicit level set problem in the standard multi-armed bandit setting is equivalent to the \emph{multiplicative all-$\epsilon$} problem introduced by \citet{mason2020finding}. Algorithm~\ref{alg:MILK} recovers the sample complexities of the instance-optimal $(\texttt{ST})^2$ algorithm given there. 
Finally, our experimental design techniques are inspired by \citet{soare2014best, fiez2019sequential}, and especially
the recent work of \citet{camilleri2021highdimensional} that introduces the RIPS estimator which we use to perform experimental design in an RKHS. 
% \sm{cite houng ha's paper \citep{ha2020high}. possibly another paper but slightly different setting (Houng cites it) \citep{iwazaki2019bayesian}}

% \begin{itemize}
%     The level-set estima

%     \item Gaussian process related works: key ones, LSE, Zanette, Volkan. A slightly different line of work is GP-UCB, Chaudhury, Valko
%     \item Connection to multi-armed bandits: Threshold bandits (Locatelli - APT)
%     \item Connection to All-Epsilon (and single epsilon?): Blake's paper, Julian's paper
%     \item Rob's paper, Javidi - slightly different objective?
%     \item Finally, Romain's paper providing the key ideas necessary to compute estimators. 
% \end{itemize}

%There have been a ton of works on black box level set. Two main areas: GP papers, kernel bandits. In the standard MAB case, (which is equivalent to when we take the RKHS bandwith to inf), this problem has been studied by Mason et al. 

%In Table~\ref{table:lse}, we summarize the results of different LSE algorithms. In the case of the \texttt{TruVar} algorithm, we restrict to the case of homoskedastic noise and equal costs for sampling each point. In Table~\ref{table:lse-imp}, we summarize algorithms and theoretical guarantees for implicit level set estimation. 

%\subsection{Optimal Design techniques for active learning}

%RAGE, PEACE, RIPS paper. State estimator from RIPS - give a quick run through of the ideas of optimal design so that we can transition into them in the experiment section. 

\section{Explicit Level Set Estimation}

 In recent years, adaptive experimental design has arisen as a popular paradigm for active learning in structured settings, for example in linear bandits and RKHS \cite{soare2014best, fiez2019sequential, camilleri2021highdimensional}, and we adapt these ideas for the level set problem. To motivate this paradigm, in the following example we focus on the well-specified linear case where $\phi(\bx) = \bx, \widetilde{\beta}=0, h=0$. Imagine we have access to a collection of $n$-measurements $\{(\bx_i, y_i)\}_{i=1}^n$ and let $\widehat{\theta} = \arg\min_{\theta\in \R^d} \sum_{i=1}^n(y_i - \bx_i^{\top}\theta)^2$ be the least squares estimator. Standard results show that with probability greater than $1-\delta$, we have for all $\bx\in \mc{X}$ simultaneously
 \[|\bx^{\top}(\widehat{\theta} -\theta_\ast)| \leq  \|\bx\|_{\left(\sum_{i=1}^n \bx_i\bx_i^{\top}\right)^{-1}}\sqrt{\frac{2\log(2|\X|/\delta)}{n}},\]
where the additional factor of $|\X|$ in the logarithm arises from a union bound over $\X$. In particular, if our data is chosen so that for each arm $\bx\in \mc{X}$
\begin{equation}\label{ref:ci_bound}
    |\bx^{\top}\theta_* -\alpha| >  \|\bx\|_{\left(\sum_{i=1}^n \bx_i\bx_i^{\top}\right)^{-1}}\sqrt{\frac{2\log(2|\X|/\delta)}{n}},
\end{equation}
we see that $\{\bx:\bx^{\top}\widehat{\theta} > \alpha \} = \{\bx:\bx^{\top}\theta_* > \alpha \} = G_\alpha$, i.e. we have a high probability guarantee that we return the correct set of arms above the threshold. 
Letting $\lambda_x = n_x/n$ be the proportion of times we sample $\bx\in \mc{X}$, we see see that equation~\eqref{ref:ci_bound} is equivalent to
\begin{equation}
    n \geq \max_{x\in \X} \frac{\|\bx\|^2_{\left(\sum_{\bx\in \X} \lambda_x \bx\bx^{\top}\right)^{-1}}}{(\theta_\ast^{\top}\bx - \alpha)^2}.
\end{equation}
% \kevin{Mismatch of $n$ above and $N$ here.}
In particular, this implies that to achieve a good sample complexity we can minimize the right side of this expression over all possible distributions $\lambda\in \triangle_{\mc{X}}$ where $\triangle_{\X} = \{\lambda\in \mathbb{R}^{|\mc{X}|}: \sum_{x\in \X} \lambda_x = 1, \lambda_x \geq 0 \ \forall \bx\}$. Indeed as the following theorem shows, this gives a lower bound on this problem.

\begin{theorem}\label{thm:explicit_lower_bound} 
% \todo{port to RKHS with a min over $\gamma$?}
Assume $\eta_t \stackrel{iid}{\sim} {\cal N}(0, 1) \ \forall t$.
In the well-specified linear setting when $\phi(\bx) = \bx$ and $f(\bx) = \theta_\ast^{\top}\bx$, for any $\delta > 0$, any PAC-$\delta$ algorithm with stopping time $T_\delta$ that returns the set $G_\alpha$ with probability at least $1-\delta$
must satisfy 
\vspace{-1em}
\begin{align*}
    \frac{\E[T_\delta]}{\log(1/2.4\delta)} \geq 2\min_{\lambda\in \triangle_{\X}}\max_{\bx \in \X}\frac{\|\bx \|_{A(\lambda)^{-1}}^2}{(\theta_\ast^{\top}\bx -\alpha)^2}
\end{align*}
where $A(\lambda) := \sum_{\bx\in \X}\lambda_x \bx\bx^{\top}$.
\end{theorem}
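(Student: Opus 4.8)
The plan is to use a standard change-of-measure (transportation) argument, specialized to the PAC level-set objective and Gaussian noise. Fix the true parameter $\theta_\ast$ and consider any alternative parameter $\theta'$ whose induced level set differs from $G_\alpha$, i.e.\ there is an arm $\bx_0 \in \X$ with $\mathrm{sign}(\theta'^\top \bx_0 - \alpha) \neq \mathrm{sign}(\theta_\ast^\top \bx_0 - \alpha)$. Let $\mathcal{E}$ be the event that the algorithm outputs the set $G_\alpha = G_\alpha(\theta_\ast)$. Since the algorithm is PAC-$\delta$ on both instances, $\mathbb{P}_{\theta_\ast}(\mathcal{E}) \geq 1-\delta$, while $\mathbb{P}_{\theta'}(\mathcal{E}) \leq \delta$ because under $\theta'$ the correct output is the different set $G_\alpha(\theta')$. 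The Wald-type KL decomposition for bandits (the transportation lemma of Kaufmann et al.) together with the data-processing inequality then yields
\[
\sum_{\bx \in \X} \E_{\theta_\ast}[N_\bx(T_\delta)]\,\mathrm{KL}_\bx(\theta_\ast, \theta') \;\geq\; \mathrm{kl}(1-\delta,\delta) \;\geq\; \log\!\frac{1}{2.4\delta},
\]
where $N_\bx(T_\delta)$ is the number of pulls of $\bx$, $\mathrm{KL}_\bx$ is the per-arm divergence, and $\mathrm{kl}$ is the binary relative entropy with its standard lower bound.

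Next I would specialize to unit-variance Gaussian noise, for which $\mathrm{KL}_\bx(\theta_\ast, \theta') = \tfrac12\big((\theta_\ast - \theta')^\top \bx\big)^2$, so the left-hand side equals $\tfrac12(\theta_\ast - \theta')^\top A\,(\theta_\ast - \theta')$ with $A := \sum_{\bx \in \X} \E_{\theta_\ast}[N_\bx(T_\delta)]\,\bx\bx^\top$. Because the inequality must hold for \emph{every} confusing alternative, it holds in particular for the cheapest one. For a fixed flipped arm $\bx_0$, the cheapest alternative lies on the boundary hyperplane $\{\theta' : \theta'^\top \bx_0 = \alpha\}$, and minimizing $\tfrac12(\theta_\ast - \theta')^\top A (\theta_\ast - \theta')$ subject to $\theta'^\top \bx_0 = \alpha$ is an elementary constrained quadratic program: substituting $u = \theta_\ast - \theta'$ reduces it to minimizing $u^\top A u$ under $u^\top \bx_0 = \theta_\ast^\top\bx_0 - \alpha$, whose value is $(\theta_\ast^\top\bx_0-\alpha)^2/\|\bx_0\|_{A^{-1}}^2$ by Cauchy-Schwarz in the $A$-norm. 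Minimizing over which arm to flip gives
\[
\min_{\bx_0 \in \X} \frac{(\theta_\ast^\top \bx_0 - \alpha)^2}{2\,\|\bx_0\|_{A^{-1}}^2} \;\geq\; \log\!\frac{1}{2.4\delta}.
\]

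Finally I would convert pull counts into a design. Writing $T := \E[T_\delta] = \sum_{\bx} \E_{\theta_\ast}[N_\bx(T_\delta)]$ and $\lambda_\bx := \E_{\theta_\ast}[N_\bx(T_\delta)]/T \in \triangle_\X$, we have $A = T\,A(\lambda)$ and hence $\|\bx_0\|_{A^{-1}}^2 = \tfrac1T \|\bx_0\|_{A(\lambda)^{-1}}^2$. Substituting and rearranging turns the $\min$ in the denominator into a $\max$, giving $T/\log(1/2.4\delta) \geq 2\max_{\bx}\|\bx\|_{A(\lambda)^{-1}}^2/(\theta_\ast^\top\bx - \alpha)^2$; since $\lambda$ is merely the particular design induced by the algorithm, the right-hand side is at least its minimum over all $\lambda \in \triangle_\X$, which is exactly the claimed bound. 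I expect the main obstacle to be getting the alternative set right: precisely characterizing the confusing parameters and justifying that single-arm flips onto the boundary hyperplane are the cheapest confusions (so that the inner minimization is the stated quadratic program). Once that is pinned down, the Gaussian KL computation, the quadratic program, and the $\min$/$\max$ and normalization bookkeeping are routine.
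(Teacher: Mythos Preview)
Your proposal is correct and follows essentially the same route as the paper: both invoke the Kaufmann--Capp\'e--Garivier transportation inequality, decompose the alternative set $\mathrm{Alt}(\theta_\ast)$ into single-arm flips across the threshold hyperplanes $\{\theta:\theta^\top\bx=\alpha\}$, and compute the minimum $A(\lambda)$-distance to each such hyperplane (you do it via the constrained quadratic program / Cauchy--Schwarz, the paper via the projection lemma of Fiez et al.; these are the same calculation). The only cosmetic difference is that the paper quotes the lower bound already in the $\min_\lambda$ form and then evaluates $\min_{\theta'\in\mathrm{Alt}}\|\theta'-\theta_\ast\|_{A(\lambda)}^2$, whereas you keep the raw counts $A=\sum_\bx \E[N_\bx]\bx\bx^\top$ and normalize to $\lambda$ at the end.
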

\textbf{Remark.} We prove this result for completeness in the appendix using ideas from \citet{fiez2019sequential}. A similar result has appeared previously in the Appendix of \citet{degenne2020gamification}.
% \lalit{Ask romain about lower bound in RKHS - why isn't this a lower bound?}

We now operationalize this lower bound to provide an
algorithm for level set estimation that has a nearly matching upper bound. In the following sections, we will explain our algorithm and the adaptations necessary to handle the general setting of the RKHS.

% \lalit{Put an informal theorem here for the RKHS setting??}

\subsection{Algorithm}
Motivated by this lower bound, we now provide an experimental design approach. 
% In contrast to past approaches for level set estimation which design acquisition functions and then choose a single sample at a time based on posterior updates, our algorithm proceeds in rounds and plans many samples in advance. 
Ideally, we would sample according to the distribution that achieves the minimum in the lower bound in Theorem~\ref{thm:explicit_lower_bound}, however this is not possible since $\theta_{\ast}$ is not known a priori. Instead, we approximate this distribution by solving a series of designs based on the information we have thus far.
% up to that point. 

Our approach, \texttt{MELK} (Misspecified Explicit Level set via Kernelization), for the generalized RKHS setting is given in Algorithm~\ref{alg:MELK}. \texttt{MELK} proceeds in phases. 
%In each phase it computes an experimental design which it uses to separate as many points $\bx \in \X$ from the threshold $\alpha$ as possible. 
To keep track of the points it has identified so far, \texttt{MELK} maintains two sets: 1) $\widehat{G}_t$ is the set of all points that up to round $t$ have been declared as being in $G_\alpha$ by \texttt{MELK}, that is $f(\bx) > \alpha$. 2) $\widehat{B}_{t}$ is the set of all points declared as being in $G_\alpha^c$. The remaining, uncertain points are \emph{active} and in the set $\A_t$. 
Motivated by the lower bound from the linear setting, it then computes the experimental design: $\lambda_t = \arg\min_{\lambda\in \triangle_{\X}}\max_{\bx\in \A_t}\|\phi(\bx)\|_{A^{(\gamma)}(\lambda)^{-1}}^2$ with $A^{(\gamma)}(\lambda) := \sum_{\bx\in \X} \lambda_x\phi(\bx)\phi(\bx)^\top + \gamma I$ where $\gamma$ is a necessary regularization in the kernelized (infinite-dimensional) setting. 
Indeed, the number of samples taken in each round equals $N_t\approx \min_\lambda\max_{\bx\in \A_t}\tfrac{\|\phi(\bx)\|_{A^{(\gamma)}(\lambda)^{-1}}^2}{(2^{-t})^2}$ from $\lambda_{t}$. This guarantees that at the end of the round, $\A_{t+1}\subset \{\bx\in \X:|\theta_\ast^\top\bx - \alpha|\leq 2^{-(t+1)}\}$ and, we can interpret our design as an approximation to the lower bound on the points that are remaining. \texttt{MELK} declares that $\bx \in G_\alpha$ if $\htheta^T\phi(\bx) -2^{-t} \gtrsim \alpha$ and adds $\bx$ to the set $\hG_t$. Similarly, \texttt{MELK} adds $\bx$ to declares $\bx \in G_\alpha^c$ and adds $\bx$ to $\hB_t$ if $\htheta^T\phi(\bx) +2^{-t} \lesssim \alpha$. Finally, \texttt{MELK} terminates when either all arms have been added to the sets $\hG_t$ or $\hB_t$ or when $t \gtrsim \log_2(1/\widetilde{\beta})$ and it has achieved the practitioner's desired tolerance of $\widetilde{\beta}$.

%In order to obtain sufficiently precise estimates of $\theta_\ast^\top \bx$ from the $N_t$ samples, 
\texttt{MELK} leverages a robust inverse propensity scoring (RIPS) estimator introduced in \citet{camilleri2021highdimensional} and reviewed in Appendix~\ref{sec:rips}. 
Previous works in linear bandits have utilized rounding procedures for sampling followed by ordinary least squares that are not applicable in the infinite dimensional setting. Instead, the RIPS estimator appeals to an inverse propensity score estimator plus robust mean estimation.
% though we point out that it leads to an additional multiplicative dependence on $B^2$. 
% \lalit{Maybe leave a soft version of the theorem in?}
We state the guarantee of the RIPS estimator below. 
\begin{theorem}[Theorem 1, \citep{camilleri2021highdimensional}]\label{thm:robust_estimator}
Consider the model $y = \langle \phi(\bx), \theta_\ast\rangle_{\H} + \zeta_{\bx} + \eta$ for misspecification $|\zeta_{\bx}| \leq h$ where it is assumed that $|\langle \phi(\bx), \theta_\ast\rangle_{\H} + \zeta_{\bx}| \leq B$, $\E[\eta]=0$, and $\E[\eta^2]\leq\sigma^2$. Fix any finite sets $\X \subset \R^d$ and $\V \subset \H$, feature map $\phi: \R^d \rightarrow \H$, number of samples $\tau$, regularization $\gamma>0$, and distribution $\lambda\in \triangle_{\X}$. 
%If the RIPS procedure of Algorithm~\ref{alg:rips} is run with $\tfrac{\delta}{|\V|}$-robust mean estimator  $\widehat{\mu}(\cdot)$ and if $\tau \geq c_1 \log(|\V|/\delta)$ then with probability at least $1-\delta$, we have 
If $\tau \geq 2 \log(|\V|/\delta)$ then with probability at least $1-\delta$, RIPS returns $\htheta$ satisfying
\begin{align*}
    \max_{\bv \in \V} \frac{ |\langle \widehat{\theta},\bv\rangle - \langle \theta_\ast, \bv \rangle| }{\|v\|_{\A^{(\gamma)}(\lambda)^{-1}} } \leq  &2\sqrt{\gamma} \|\theta_\ast \| + 2h \\
    & + 4 \sqrt{\tfrac{(B^2 + \sigma^2)}{\tau}\log\left(\tfrac{2|\V|}{\delta}\right) }.
\end{align*}
%Moreover, $W^{(v)} = \widehat{\mu}( \{ \bv^\top A^{(\gamma)}(\lambda)^{-1} \phi(\bx_t) y_t \}_{t=1}^\tau )$ can be replaced by $\langle \widehat{\theta},\bv\rangle$ by multiplying the RHS by a factor of $2$.
\end{theorem}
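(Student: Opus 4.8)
The plan is to analyze the inverse‑propensity‑scoring (IPS) statistic that underlies RIPS, control its bias and variance, and then pass the (heavy‑tailed) per‑sample estimates through a robust median‑of‑means aggregation to obtain the high‑probability, union‑bounded guarantee. For a single draw $\bx_i \sim \lambda$ with observation $y_i$, the basic IPS statistic for a test direction $\bv$ is $z_i(\bv) = \langle \bv, A^{(\gamma)}(\lambda)^{-1}\phi(\bx_i)\rangle\, y_i$. Writing $\Sigma = \sum_{\bx\in\X} \lambda_{\bx}\phi(\bx)\phi(\bx)^\top$ so that $A^{(\gamma)}(\lambda) = \Sigma + \gamma I$, a direct computation of $\E[z_i(\bv)]$ using $\E[\phi(\bx_i)y_i] = \Sigma \theta_\ast + \sum_{\bx\in\X} \lambda_{\bx}\zeta_{\bx}\phi(\bx)$ and the identity $A^{(\gamma)}(\lambda)^{-1}\Sigma = I - \gamma A^{(\gamma)}(\lambda)^{-1}$ shows that $z_i(\bv)$ is a biased estimate of $\langle \bv, \theta_\ast\rangle$ with two bias terms: a regularization bias $-\gamma \langle \bv, A^{(\gamma)}(\lambda)^{-1}\theta_\ast\rangle$ and a misspecification bias $\langle \bv, A^{(\gamma)}(\lambda)^{-1}\sum_{\bx\in\X} \lambda_{\bx}\zeta_{\bx}\phi(\bx)\rangle$.

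First I would bound the two biases, each after normalizing by $\|\bv\|_{A^{(\gamma)}(\lambda)^{-1}}$. For the regularization bias, Cauchy--Schwarz in the $A^{(\gamma)}(\lambda)^{-1}$ inner product together with $A^{(\gamma)}(\lambda)^{-1}\preceq \gamma^{-1}I$ gives $\gamma|\langle \bv, A^{(\gamma)}(\lambda)^{-1}\theta_\ast\rangle| \leq \gamma \|\bv\|_{A^{(\gamma)}(\lambda)^{-1}}\|\theta_\ast\|_{A^{(\gamma)}(\lambda)^{-1}} \leq \sqrt{\gamma}\,\|\theta_\ast\|\, \|\bv\|_{A^{(\gamma)}(\lambda)^{-1}}$. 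For the misspecification bias, using $|\zeta_{\bx}|\leq h$ and Jensen/Cauchy--Schwarz over $\lambda$ reduces the problem to the key inequality $\sum_{\bx\in\X} \lambda_{\bx} \langle \bv, A^{(\gamma)}(\lambda)^{-1}\phi(\bx)\rangle^2 = \langle \bv, A^{(\gamma)}(\lambda)^{-1}\Sigma A^{(\gamma)}(\lambda)^{-1}\bv\rangle \leq \|\bv\|^2_{A^{(\gamma)}(\lambda)^{-1}}$, which follows from $\Sigma \preceq A^{(\gamma)}(\lambda)$; this yields a misspecification bias at most $h\,\|\bv\|_{A^{(\gamma)}(\lambda)^{-1}}$. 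These are precisely the $\sqrt{\gamma}\|\theta_\ast\|$ and $h$ contributions (the stated factors of two are absorbed by the robust aggregation step).

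Next I would control the fluctuations. The same key inequality, combined with $\E[y_i^2\mid \bx_i] = (\langle \phi(\bx_i),\theta_\ast\rangle + \zeta_{\bx_i})^2 + \E[\eta_i^2] \leq B^2 + \sigma^2$, shows the per‑sample statistic has second moment $\E[z_i(\bv)^2] \leq (B^2+\sigma^2)\|\bv\|^2_{A^{(\gamma)}(\lambda)^{-1}}$. Because the inverse‑propensity weights make $z_i(\bv)$ heavy‑tailed, I would not average directly; instead I would split the $\tau$ samples into $k \asymp \log(|\V|/\delta)$ blocks, form the block IPS means, and aggregate them robustly. A Chebyshev‑plus‑Chernoff argument then gives, for each fixed $\bv$, deviation $O(\sqrt{(B^2+\sigma^2)\|\bv\|^2_{A^{(\gamma)}(\lambda)^{-1}}\,\log(|\V|/\delta)/\tau})$ with probability $1-\delta/|\V|$, which is the $4\sqrt{(B^2+\sigma^2)\tau^{-1}\log(2|\V|/\delta)}$ term after normalization; a union bound over $\V$ (which is why $\tau \geq 2\log(|\V|/\delta)$ blocks suffice) combines the bias and deviation bounds into the claimed inequality.

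The main obstacle is the robust aggregation step, not the bias/variance algebra. Ordinary sample means of the IPS statistics have tails too heavy for sub‑Gaussian concentration, so one must invoke a median‑of‑means estimator; the delicate point is producing a single $\widehat{\theta}$ whose functionals $\langle\widehat{\theta},\bv\rangle$ satisfy the bound simultaneously for every $\bv \in \V$, rather than a separate median per direction. This requires the geometric‑median‑of‑means construction and its analysis (the source of the benign constant factors such as the $2$'s), and care that the trace computations, the identity $A^{(\gamma)}(\lambda)^{-1}\Sigma = I - \gamma A^{(\gamma)}(\lambda)^{-1}$, and the operator inequality $\Sigma \preceq A^{(\gamma)}(\lambda)$ remain valid in the infinite‑dimensional RKHS, where they hold because $\Sigma$ is finite rank and every relevant vector lies in $\mathrm{span}\{\phi(\bx):\bx\in\X\}$.
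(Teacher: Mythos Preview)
The paper does not actually prove this statement: it is quoted verbatim as Theorem~1 of \citet{camilleri2021highdimensional} and used as a black box. The appendix restates it (Theorem~\ref{thm:supp_robust_estimator}) and records that with Catoni's estimator one may take $c_1=2$ and $c=4$, but no proof is given here. So there is no ``paper's own proof'' to compare against; one can only compare your sketch to the RIPS construction as described in Algorithm~\ref{alg:rips} and the accompanying remarks.

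Your bias/variance algebra is the right one: the decomposition of $\E[z_i(\bv)]$ via $A^{(\gamma)}(\lambda)^{-1}\Sigma = I - \gamma A^{(\gamma)}(\lambda)^{-1}$, the Cauchy--Schwarz bound on the regularization bias, the $\Sigma \preceq A^{(\gamma)}(\lambda)$ inequality for both the misspecification bias and the second moment, and the resulting variance bound $\E[z_i(\bv)^2]\le (B^2+\sigma^2)\|\bv\|^2_{A^{(\gamma)}(\lambda)^{-1}}$ are exactly what the argument needs.

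Where your plan diverges from RIPS is the aggregation step, and this is worth correcting. You propose a geometric median-of-means to produce a \emph{single} $\widehat\theta$ whose linear functionals are simultaneously accurate. RIPS does something simpler: it applies a one-dimensional robust mean estimator (Catoni, not median-of-means) \emph{separately} to each scalar sequence $\{z_i(\bv)\}_{i=1}^\tau$, yielding $W^{(\bv)}$ with $|W^{(\bv)}-\langle\theta_\ast,\bv\rangle|/\|\bv\|_{A^{(\gamma)}(\lambda)^{-1}}\le \sqrt{\gamma}\|\theta_\ast\|+h+c\sqrt{(B^2+\sigma^2)\tau^{-1}\log(2|\V|/\delta)}$ after a union bound over $\bv\in\V$. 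It then \emph{defines} $\widehat\theta:=\arg\min_\theta\max_{\bv\in\V}|\langle\theta,\bv\rangle-W^{(\bv)}|/\|\bv\|_{A^{(\gamma)}(\lambda)^{-1}}$. Because $\theta_\ast$ is feasible for this min--max, the optimal value is at most the bound above, and a triangle inequality through $W^{(\bv)}$ yields the stated bound for $\langle\widehat\theta,\bv\rangle$ at the cost of the extra factor of~$2$. So the factors of two you attribute to the robust aggregation in fact come from this min--max plus triangle-inequality step; no multivariate geometric-median machinery is needed (or used), and Catoni rather than median-of-means is what delivers the specific constants $c_1=2$, $c=4$.
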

% For RIPS, we leverage Catoni's estimator \citep{lugosi2019mean} for which $c_1=2$ and $c=4$ suffice. 

% Naively, this approach may seem less adaptive than the single step look-ahead policies.
% \lalit{This discussion needs to reflect the algorithm now.}
% To do so, keeps track of a set of $\A_t \in \X$ of points that its unsure about whether they belong in $G_\alpha$ or $G_\alpha^c$. 
% Intuitively, since the lower bound for explicit level-set estimation depends on a variance term $\|\bx \|_{A(\lambda)^{-1}}^2$, one should choose a design to minimize this quantity. For any $\bx$ achieving this maximum, if an algorithm can simultaneously minimize the variance while ensuring that $\bx$ never receives more than $(\theta_\ast^T\bx -\alpha)^{-2}\log(1/\delta)$ samples, then it can achieve this lower bound.  Algorithm~\ref{alg:MELK}, \texttt{MELK}, does exactly this. 
 %\footnote{melk translates to `milk' in Norwegian} 
% Let $\phi \circ \X := \{\phi(\bx_1), \cdots, \phi(\bx_{|\X|})\}$.
% \texttt{MELK} proceeds in rounds and in each round $t$

%In each round $t$, \texttt{MELK} constructs a design to identify as many points in $\A_t$ as possible. 

%Algorithm~\ref{alg:MELK} \texttt{MELK} 
%keeps track of an active set of arms $\A_t \subset \X$ of which it is unsure if they belong to $G_\alpha$ or its complement $G_\alpha^c$. 
%On these arms, . The addition of $\gamma I$ is a necessary regularization in the kernelized setting and corresponds to ridge regression. 
\newcommand{\cX}{\mathcal{X}}

\textbf{Computational Considerations.} We note briefly that while we state the optimal design in terms of the potentially infinite dimensional $\phi(\bx)$ for clarity, we never explicitly compute $\phi(\bx)$ and instead resort to the kernel trick (see Appendix~\ref{sec:kernel_compute}). Furthermore the design can be computed using first order optimization methods, such as Frank-Wolfe \cite{lattimore2020bandit,todd2016minimum}. The total computational cost of each design is $\text{poly}(|\cX|)$. Though these designs can be expensive to compute, this is done very rarely by the algorithm. In particular, for $T$ total samples drawn by \texttt{MELK}, the design is computed $O(\log_2(T))$ times leading to an overall computational cost of $O(\text{poly}(|\cX|)\log_2(T))$ for the design. By contrast, any algorithm that computes an acquisition function at every sample suffers computational complexity $\Omega(T)$ for the design. Furthermore, for Gaussian process approaches, the added cost of computing posterior means and variances leads to an overall computational cost of either $\Omega(\text{poly}(|\X|)T)$ or $\Omega(|\X|\text{poly}(T))$ depending on implementation for computing acquisition functions. Hence, when many samples are drawn, \texttt{MELK} can be significantly more efficient than past approaches. 

% \lalit{Pull computational considerations from reviews}

%In order to obtain estimates of $f(\bx)$ for each $\bx \in \A_t$, \texttt{MELK} leverages a robust inverse propensity scoring (RIPS) estimator introduced in \citep{camilleri2021highdimensional} and reviewed in Appendix~\ref{sec:rips}. 
%This allows \texttt{MELK} to draw samples from the optimal $\lambda_t$ and obtain precise estimates of $ \theta_\ast^T\phi(\bx)$ denoted $\htheta^T\phi(\bx)$. In each round $t$, the number of samples is chosen so that $|(\theta_\ast - \htheta)^T\phi(\bx)| \lesssim 2^{-t}$ for all $\bx$ in the active set $\A_t$. This allows \texttt{MELK} to declare that $\bx \in G_\alpha$ if $\htheta^T\phi(\bx) -2^{-t}\gtrsim \alpha$. If so, it adds $\bx$ to the set $\hG_t$ of points it believes are in $G_{\alpha}$. Similarly, \texttt{MELK} adds arms to a set $\hB_t$ of arms it believes are in $G_\alpha^c$. 
%The algorithm terminates when either all arms have been added to a set $\hG_t$ or $\hB_t$, either when $t \gtrsim \log_2(1/\widetilde{\beta})$ and it has achieved the practitioner's desired tolerance. It returns all points it has not proven to be below the threshold: $\X \setminus \hB_t$.

\setlength{\textfloatsep}{5pt}
\begin{algorithm}[tbh]
\caption{\texttt{MELK}: \textbf{M}isspecified \textbf{E}xplicit \textbf{L}evel set via \textbf{K}ernelization}
\label{alg:MELK}
\begin{algorithmic}[1]
\Require{Arms $\X$, $\phi$, $\sigma\geq 0$, $\delta > 0$, $\gamma\geq 0$, threshold $\alpha$, tolerance $\widetilde\beta$ }
\State{$t \leftarrow 1$, $\hG_1 \rightarrow \emptyset$, $\hB_1 \leftarrow \emptyset$, $\A_1 \leftarrow \X$}
\While{$|\hG_t \cup \hB_t| < |\X|$ and $t\leq \lceil\log_2(4/\widetilde\beta)\rceil$}
    \State{$\delta_t \leftarrow \delta/2t^2$}
    % \State{Let $\displaystyle{\lambda}_t \in \triangle_{\X}$ minimize $f(\lambda; \A_t; \gamma)$ where}
    %     $$
    %     f(\V; \gamma) = \inf_{\lambda \in \triangle_{\X}} f(\lambda; \V; \gamma) 
    %         \!=\! \!\inf_{\lambda \in \triangle_{\X}} \!\max_{\bx \in \V}\!\|\phi(\bx)\|_{(A^{(\gamma)}(\lambda))^{-1}}^2
    %     $$
    % \State{$\lambda_t\!\!\gets\!\!\min_{\lambda\in \triangle_{\X}}g(\lambda; \A_t; \gamma)$ for $g(\lambda; \V; \gamma) 
    %         \!\!\!:=\!\!\!\max_{\bx \in \V}\!\|\phi(\bx)\|_{(A^{(\gamma)}(\lambda))^{-1}}^2$}
    \State{Let $\displaystyle{\lambda}_t \in \triangle_{\X}$ minimize $g(\lambda; \A_t; \gamma)$ where}
        $$
         g(\lambda; \V; \gamma) 
            :=\max_{\bx \in \V}\!\|\phi(\bx)\|_{A^{(\gamma)}(\lambda)^{-1}}^2
        $$
        % $$
        % f(\V; \gamma) = \inf_{\lambda \in \triangle_{\X}} g(\lambda; \V; \gamma) 
        %     \!=\! \!\inf_{\lambda \in \triangle_{\X}} \!\max_{\bx \in \V}\!\|\phi(\bx)\|_{(A^{(\gamma)}(\lambda))^{-1}}^2
        % $$
    \State{$q_t \leftarrow 16 \cdot 2^{2t} g(\lambda_t;\A_t;\gamma) (B^2 + \sigma^2) \log(2t^2|\X|^2/\delta)$}\\
    \State{Set $N_t \leftarrow \left\lceil\max\left\{ q_t, 2\log(|\X|/\delta)\right\}\right\rceil$ and sample $x_{1}, \cdots, x_{N_t}$ observing noisy function values $y_1, \cdots, y_{N_t}$ according to $\lambda_t$.}
    % \State{Use Alg~\ref{alg:rips}, RIPS, with $\X$, $\V_t= \phi(\A_t), \phi, \gamma$, sampling $N_t$ measurements $x_1, \ldots, x_{N_t}$ to get $\widehat{\theta}_t$ } 
    \State{$\widehat{\theta}_t \!\gets\! \text{RIPS}(\A_t, \{A^{(\gamma)}(\lambda_t)^{-1}\phi(\bx_i)y_i\}_{i=1}^{N_t})$, Alg~\ref{alg:rips} in Appendix \ref{sec:rips}}
    \For{$\bx \in \A_t$}
        \If{$\htheta^T\phi(\bx) < \alpha -2 \cdot 2^{-t}$}
            \State{$\hB_{t+1} \leftarrow \bx$ }
            \State{$\A_{t+1} \leftarrow \A_t \backslash \{\bx\}$}
        \ElsIf{$\htheta^T\phi(\bx) > \alpha + 2 \cdot 2^{-t
        }$}
            \State{$\hG_{t+1} \leftarrow \hG_t \cup \{\bx\}$ }
            \State{$\A_{t+1} \leftarrow \A_t \backslash \{\bx\}$}
        \EndIf
    \EndFor
    % \For{$\bz \in \Z \backslash (\hG_t \cup \hB_{t+1})$ }
    %     \If{$\Y_{z}\cap \A_t = \emptyset$}
    %         \State{$\hG_{t+1} \leftarrow \hG_t \cup \{\bz\}$}        
    %     \EndIf
    % \EndFor
    \State{$t \leftarrow t + 1$}
\EndWhile
%\Return{$\hG_{t}$}
\Return{$\widehat{\mc{R}} := \X \setminus \hB_{t}$}
\end{algorithmic} 
\end{algorithm}

\subsection{Optimal sample complexity for explicit level set estimation}

Next we state \texttt{MELK}'s complexity, deferring constants and doubly logarithmic factors to the appendix.  

% Due to the misspecifity of the model, one cannot guarantee that we recover $G_\alpha$ exactly. Instead for a misspecification $h \geq 0$, we guarantee that we return a set $\hG$ satisfying $G_{\alpha - h} \subset \hG \subset G_{\alpha + h}$. Hence, we recover the true level set $G_\alpha$ up to a tolerance of our misspecification $h$. 

% \todo{for Romain tau goes to alpha and add beta tolerance ("sought gap"}
\begin{theorem}\label{thm:MELK_complex}
Fix $\delta >0$, threshold $\alpha > 0$, tolerance $\widetilde\beta$, and regularization $\gamma\geq 0$. Define $\Delta_{\min}(\alpha) := \min_{\bx \in \X} |\phi(\bx)^T\theta_\ast - \alpha|$. 
% Assume that there exists a $\theta_\ast \in \H$ such that $|f(\bx) - \theta_\ast^T\bx| \leq h$ for all $\bx \in \X$. 
Define also 
% \begin{align*}
%     &\bar\beta(\alpha) = \min \bigg\{ \beta>0 : 4(\sqrt{\gamma} \|\theta_* \| + h)\times\\
%     &\hspace{-0.5cm}\left.\left(2\!+\!\sqrt{f(\X,\left\{\phi(\bx)| \bx \in\X, |\phi(\bx)^T\theta_\ast - \alpha| \leq  \beta\right\};\gamma)}\right) \leq  \beta \right\}.
% \end{align*}
\begin{align*}
    &\bar\beta(\alpha) = \min \bigg\{ \beta>0 : 4(\sqrt{\gamma} \|\theta_* \| + h)\times\\
    &\hfill\Big(\!2\!+\!\sqrt{\min_{\lambda\in \triangle_{\X}}\max_{\bx\in\cX: |\phi(\bx)^\top\theta_\ast - \alpha| \leq  \beta}\|\phi(\bx)\|_{A^{(\gamma)}(\lambda)^{-1}}^2} \Big)\!\leq\!\beta \bigg\}\!.\!
\end{align*}
With probability at least $1-\delta$,  
\texttt{MELK} returns a set $\widehat{\mc{R}}$ at time $T_{\delta}$
such that
\begin{align*}
    & \widehat{\mc{R}} \supseteq \{\bx\in \mc{X}: f(\bx) \geq \alpha + \bar{\beta}(\alpha)\} \\
     & \text{ and }\widehat{\mc{R}} \subseteq \{\bx\in \mc{X}: f(\bx) \geq \alpha - \widetilde{\beta} -  \bar{\beta}(\alpha)\}
\end{align*}
and for any $\alpha$, $\widetilde\beta$ such that $\max(\Delta_{\min}(\alpha), \widetilde\beta) \geq \bar\beta(\alpha)$
\begin{align*}
    T_\delta \leq (B^2 + \sigma^2)&\min_{\lambda \in \tX}  \max_{\bx\in \X}  \frac{\|  \phi(\bx)\|_{A^{(\gamma)}(\lambda)^{-1}}^2}{\max\{(\phi(\bx)^T\theta_\ast - \alpha)^2, \Tilde{\beta}^2\}}\\
    &\times \log((\Delta_{\min}(\alpha)\vee \widetilde{\beta})^{-1})\log\left(|\X|\delta^{-1}\right).
\end{align*}
\end{theorem}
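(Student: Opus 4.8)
The plan is to condition on a single high-probability event on which the RIPS estimator of Theorem~\ref{thm:robust_estimator} is accurate in \emph{every} phase, and then run parallel correctness and sample-complexity arguments off the resulting per-phase confidence widths. First I would instantiate Theorem~\ref{thm:robust_estimator} in round $t$ with $\V=\A_t$, $\tau=N_t$, and confidence $\delta_t=\delta/2t^2$ (the regularity condition $\tau\ge 2\log(|\V|/\delta)$ being guaranteed by the $N_t\ge 2\log(|\X|/\delta)$ floor), and union bound over $t$; since $\sum_t\delta_t\le\delta$ this defines an event $\mc E$ with $\Pr[\mc E]\ge 1-\delta$. On $\mc E$, for every $\bx\in\A_t$ I would combine the RIPS bound with the choice $N_t\ge 16\cdot 2^{2t}g(\lambda_t;\A_t;\gamma)(B^2+\sigma^2)\log(2t^2|\X|^2/\delta)$ and the design optimality $\|\phi(\bx)\|^2_{A^{(\gamma)}(\lambda_t)^{-1}}\le g(\lambda_t;\A_t;\gamma)$ to collapse the statistical term to exactly $2^{-t}$, yielding the clean width
\[
|\phi(\bx)^\top\htheta_t-\phi(\bx)^\top\theta_\ast|\ \le\ b_t+2^{-t},\qquad b_t:=2\sqrt{g(\lambda_t;\A_t;\gamma)}\,(\sqrt\gamma\|\theta_\ast\|+h).
\]

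The crux is to control $b_t$, which depends on the design value, which in turn depends on how close to $\alpha$ the surviving arms are — a circular dependence that the definition of $\bar\beta(\alpha)$ is engineered to break, and the step I expect to be the main obstacle. I would prove by induction the active-set invariant $\A_t\subseteq\{\bx:|\phi(\bx)^\top\theta_\ast-\alpha|\le 3\cdot 2^{-(t-1)}+b_{t-1}\}$, using that a surviving arm has $|\phi(\bx)^\top\htheta_{t-1}-\alpha|\le 2\cdot 2^{-(t-1)}$ and adding the previous width. Writing $\rho(\beta):=\min_{\lambda\in\triangle_\X}\max_{\bx\in\X:|\phi(\bx)^\top\theta_\ast-\alpha|\le\beta}\|\phi(\bx)\|^2_{A^{(\gamma)}(\lambda)^{-1}}$, monotonicity gives $g(\lambda_t;\A_t;\gamma)\le\rho(\cdot)$, and the defining inequality $4(\sqrt\gamma\|\theta_\ast\|+h)(2+\sqrt{\rho(\bar\beta)})\le\bar\beta$ — which in particular forces $\sqrt\gamma\|\theta_\ast\|+h\le\bar\beta/8$ — lets me conclude that once $2^{-t}\lesssim\bar\beta$ the bias is pinned at $b_t\le\tfrac12\bar\beta(\alpha)$. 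Verifying that the active radius and the bias contract simultaneously, and that the fixed-point inequality closes the recursion at exactly the right scale, is the delicate part.

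Correctness then follows directly from the decision rule. If $\bx$ is placed in $\hB$ at round $t$ then $\phi(\bx)^\top\htheta_t<\alpha-2\cdot 2^{-t}$, whence $\phi(\bx)^\top\theta_\ast<\alpha-2^{-t}+b_t$ and $f(\bx)<\alpha-2^{-t}+b_t+h<\alpha+\bar\beta(\alpha)$; contrapositively no arm with $f(\bx)\ge\alpha+\bar\beta(\alpha)$ is ever discarded, giving $\widehat{\mc R}\supseteq\{f\ge\alpha+\bar\beta(\alpha)\}$. Symmetrically, any kept arm — placed in $\hG$, or still active at the tolerance-driven stopping round where $2^{-t}\le\widetilde\beta/4$ — satisfies $\phi(\bx)^\top\theta_\ast\ge\alpha-2\cdot 2^{-t}-b_t$ and hence $f(\bx)\ge\alpha-\widetilde\beta-\bar\beta(\alpha)$, giving the upper inclusion.

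Finally I would bound $T_\delta=\sum_t N_t\lesssim(B^2+\sigma^2)\log(|\X|\delta^{-1})\sum_t 2^{2t}g(\lambda_t;\A_t;\gamma)$. For each phase I would replace the phase-local optimal design by the single global minimizer $\lambda^\ast$ of $\max_{\bx}\|\phi(\bx)\|^2_{A^{(\gamma)}(\lambda)^{-1}}/\max\{(\phi(\bx)^\top\theta_\ast-\alpha)^2,\widetilde\beta^2\}$, so that $g(\lambda_t;\A_t;\gamma)\le\max_{\bx\in\A_t}\|\phi(\bx)\|^2_{A^{(\gamma)}(\lambda^\ast)^{-1}}$; the invariant gives $|\phi(\bx)^\top\theta_\ast-\alpha|\lesssim 2^{-t}$ for $\bx\in\A_t$, so $2^{2t}\le O(\max\{(\phi(\bx)^\top\theta_\ast-\alpha)^2,\widetilde\beta^2\}^{-1})$ and $2^{2t}g(\lambda_t;\A_t;\gamma)$ is bounded by the global objective uniformly in $t$. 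Summing over the $O(\log((\Delta_{\min}(\alpha)\vee\widetilde\beta)^{-1}))$ rounds — where the hypothesis $\max(\Delta_{\min}(\alpha),\widetilde\beta)\ge\bar\beta(\alpha)$ ensures the misspecification floor is not reached before arms are resolved — produces the stated $\log((\Delta_{\min}(\alpha)\vee\widetilde\beta)^{-1})$ factor and completes the bound.
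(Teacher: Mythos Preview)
Your proposal is correct and follows essentially the same route as the paper's proof: a union-bounded RIPS event across rounds, the per-round width $2^{-t}+(\sqrt\gamma\|\theta_\ast\|+h)\sqrt{g(\lambda_t;\A_t;\gamma)}$, an active-set containment $\A_t\subset\{\bx:|\phi(\bx)^\top\theta_\ast-\alpha|\le 2^{-t+2}\}$, correctness via the decision rule and the misspecification bound $h$, and the sample-complexity bound by replacing $\lambda_t$ with the global minimizer and using the active-set invariant to convert $2^{2t}$ into the gap denominator. The one organizational difference is that the paper avoids your inductive recursion on $b_t$ by defining a threshold round $\bar t=\max\{t:(\sqrt\gamma\|\theta_\ast\|+h)(2+\sqrt{\rho(4\cdot 2^{-t})})\le 2^{-t}\}$ up front, so that for all $t\le\bar t$ the bias is absorbed into a clean $2\cdot 2^{-t}$ width in one stroke; your fixed-point argument is equivalent but the paper's packaging sidesteps what you flagged as ``the delicate part.'' One small slip: you write ``once $2^{-t}\lesssim\bar\beta$ the bias is pinned,'' but the direction is the reverse---the bias is controlled precisely while $2^{-t}\gtrsim\bar\beta$ (i.e., for $t\le\bar t$), and the hypothesis $\max(\Delta_{\min}(\alpha),\widetilde\beta)\ge\bar\beta(\alpha)$ is exactly what guarantees the algorithm terminates before crossing that threshold.
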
 
% \kevin{Oh wow, I misread this theorem. The second part of $\hat R$ is easy to miss under the theorem. just formatting but we should be careful we don't get a bad line break - BLAKE: incorporated by adding space after alg}

We now contextualize the result of our theorem. In the well specified setting with $\phi(\bx) = \bx$, $h = 0$, $\widetilde{\beta} = 0$, and $\gamma = 0$ \texttt{MELK} will terminate and return $G_\alpha$ in a time
\begin{align*}
    \vspace{-0.2cm}T_\delta \!\lesssim\! (B^2\!\!+\!\sigma^2)\min_{\lambda \in \tX}  \max_{\bx\in \X}  \tfrac{\|\bx\|_{A(\lambda)^{-1}}^2}{(\bx^T\theta_\ast - \alpha)^2} \log(\Delta_{\min}^{-1})
    % \log(|\X|\delta^{-1})
    \log\!\left(\!\tfrac{|\X|}{\delta}\!\right)
\end{align*}
samples which nearly matches the rate suggested by the linear lower bound in Theorem~\ref{thm:explicit_lower_bound}.
The added factor of $\log(|\X|)$ stems from a union bound, while the dependence on $\log(\Delta_{\min}^{-1})$ is an additional overhead incurred as \texttt{MELK} builds up an estimate of the optimal sample allocation over rounds. We visualize this estimation process in Figure~\ref{fig:allocations} in the experiments.

In the more general misspecified setting when $h > 0$, we cannot expect to return $G_\alpha$ exactly and $\overline{\beta}(\alpha)$ characterizes the limit of how well one can estimate $f(\bx)$. Hence, $\bx$'s with gaps smaller than $\bar{\beta}(\alpha)$ cannot reliably be detected by \texttt{MELK}. To better understand this quantity, note that for any $\gamma'\in \R$ if we run \texttt{MELK} with $\gamma = \gamma'/T$, Lemma 2 of \citet{camilleri2021highdimensional} can be used to show  that $\bar{\beta}(\alpha) \lesssim (\sqrt{\gamma}\|\theta_\ast\| + h) \sqrt{\Gamma_T}$ 
% \kevin{Something is broken here. Something should be squared or under the square root. Romain please check.} 
where $\Gamma_T := \sup_{\lambda \in \tX} \log\det(T A^{(0)}(\lambda) + \gamma' I)$ is the \emph{maximum information gain} as defined by \citet{srinivas2009gaussian, gotovos2013active, bogunovic2016truncated}. Additionally, it can be shown that $\Gamma_T \leq d_{eff}$, where $d_{eff}$ is the \emph{effective dimension} of $\phi(\bx_1), \ldots, \phi(\bx_n) \in \H$ as defined in \citet{alaoui2014fast, derezinski2020bayesian}. 
% \begin{align*}
%     \bar{\beta}(\alpha) &\lesssim (\gamma\|\theta_\ast\| + h)\cdot \text{Trace}\left(\left(A^{(0)}(\lambda^\ast)\right)\left(A^{(\gamma)}(\lambda^\ast)\right)^{-1}\right)\\
%     & = (\gamma\|\theta_\ast\| + h)\cdot d_{eff}
% \end{align*}
% where $\lambda^\ast\in \arg\max_{\lambda\in \triangle_{\X}} \log\text{det}(A^{(\gamma)}(\lambda))$ and $d_{eff}$ is the \emph{effective dimension} of $\phi(\bx_1), \ldots, \phi(\bx_n) \in \H$ as defined in \cite{alaoui2014fast, derezinski2020bayesian}. 
In particular, to ensure that \texttt{MELK} correctly identifies all points that are at least some gap $\Delta > h$ away from the threshold, then we can choose $\gamma$ so that $\Delta > (\sqrt{\gamma}\|\theta_\ast\| + h) \sqrt{\Gamma_T}$. 
% \kevin{something is broken here}
In practice we find that $\gamma = 1/T$ works well. Finally, the user may additionally set a tolerance $\widetilde{\beta} > 0$. In this case, we err on the side of potentially returning extra arms that are not in $G_\alpha$ and show that the returned set $\widehat{\mc{R}}$ contains all $\bx$ such that $f(\bx) > \alpha +\overline{\beta}(\alpha)$ and none such that $f(\bx) < \alpha - \widetilde{\beta} - \overline{\beta}(\alpha)$. If however, a more selective criteria is desired, the following remark characterizes the output if $\widehat{G}_t$ is returned instead.

\textbf{Remark.} If \texttt{MELK} instead returns $\widehat{\mc{R}} =  \hG_t$ then 
with probability at least $1-\delta$
    $\widehat{\mc{R}} \supseteq \{\bx\in \mc{X}: f(\bx) \geq \alpha + \tilde{\beta} + \bar{\beta}(\alpha)\} \text{ and }$ and 
    $\widehat{\mc{R}} \subseteq \{\bx\in \mc{X}: f(\bx) \geq \alpha - \bar{\beta}(\alpha)\}.$

% Hence, \texttt{MELK} achieves lower bound on the sample complexity of explicit level set estimation given in Theorem~\ref{thm:explicit_lower_bound} up to logarithmic factors and is therefore optimal. The additional factor of $\log((\Delta_{\min}(\alpha)\vee \widetilde{\beta})^{-1})$ stems from a bound on the maximum number of rounds \texttt{MELK} runs for. 
% % \blake{To do: add in discussion of the Theorem}

\textbf{Contrast with Existing Approaches. } The experimental design based sampling approach is a departure from past work on level set estimation. As opposed to constructing an acquisition function and then bounding the sample complexity of the resulting algorithm as past works have done, we instead begin with an oracle sampling scheme that arises from a lower bound and attempt to design a practical sampling scheme that matches it as more data is collected. In what follows, we show that this leads to tighter results than have previously been shown in works such as \citet{gotovos2013active, shekhar2019multiscale,  bogunovic2016truncated}. 

The past  state of the art sample complexities all scale as $O(\Gamma_T \Delta_{\min}(\alpha)^{-2})$ up to log factors (cf. Thm 1 of \citep{gotovos2013active}, Cor. 3.1 of \cite{bogunovic2016truncated}, Thm 1 of \cite{shekhar2019multiscale}, etc.).
If we run \texttt{MELK} with $\gamma = \gamma'/T$ then 
% $\min_{\lambda\in \triangle_{\mc{X}}}  \max_{\bx}  \frac{\| \phi(\bx)\|_{A^{(\gamma)}(\lambda)^{-1}}^2}{(\phi(\bx)^T\theta_\ast - \alpha)^2}  \leq 3\Gamma_T \Delta_{\min}(\alpha)^{-2} $
\begin{align*}
     &\min_{\lambda\in \triangle_{\mc{X}}}  \max_{\bx}  \frac{\| \phi(\bx)\|_{A^{(\gamma)}(\lambda)^{-1}}^2}{(\phi(\bx)^T\theta_\ast - \alpha)^2} \\ 
%   &\leq 
%     \min_{\lambda \in \triangle_{\mc{X}}}  \max_{\bx}  \frac{\|  \phi(\bx)\|_{(A(\lambda)+\gamma I)^{-1}}^2}{\Delta_{\min}(\alpha)^2}\leq 3\Gamma_T
%      \Delta_{\min}(\alpha)^{-2} \\
    &{\leq 
    \min_{\lambda \in \triangle_{\mc{X}}}    \frac{\max_{\bx} \|  \phi(\bx)\|_{(A(\lambda)+\gamma I)^{-1}}^2}{\min_{\bx} (\phi(\bx)^T\theta_\ast - \alpha)^2}\leq 3\Gamma_T
     \Delta_{\min}(\alpha)^{-2} }
\end{align*}
% \kevin{I think the statement in blue makes clear why we're so much tighter}
where the final
inequality follows from Lemma 2 of \citet{camilleri2021highdimensional} and the definition of $\Delta_{\min}(\alpha)$.
Hence, as a consequence of Theorem~\ref{thm:MELK_complex}, \texttt{MELK} likewise achieves a sample complexity at least as tight as $O(\Gamma_T \Delta_{\min}(\alpha)^{-2})$, though potentially much tighter.  
Indeed, the first inequality is only tight in the pathological case when $|\phi(\bx)^T\theta_\ast - \alpha| = \Delta_{\min}(\alpha) \ \forall  \bx\in \X$. 
% \lalit{What is $\gamma'$ - in the GP case it's the noise variance. }
% In particular this highlights that the result of Theorem~\ref{thm:MELK_complex} is at least as tight as the result of \citep{gotovos2013active} but can be much tighter depending on the instance.

%We remark that while our theoretical results are tighter, the sampling policy itself is not wholly dissimilar to some previously proposed acquisition functions. Both \citep{bogunovic2016truncated, zanette2018robust} propose sampling strategies aimed and reducing the variance as quickly as possible. Our experimental design likewise attempts to minimize the variance as quickly as possible but does so by drawing many samples from a distribution to enable the use of tight confidence widths.

%with the key difference being that we design and sample according to a distribution to minimize variance, rather than sampling one arm at a time. 

% \blake{to do: add remark about last round similar to other section}
% \lalit{Need a remark about implementation.}

\section{Implicit Level Set Estimation}

In the \emph{implicit} level-set problem, for an $\epsilon \geq 0$ we seek to identify the set $G_{\epsilon} = \{\bx: f(\bx) > (1-\epsilon)f(\bx_\ast)\}$.
Note that unlike the explicit setting where the threshold $\alpha$ was a given input to the algorithm, now the equivalent notion of a threshold value $\alpha$ is equal to $(1-\epsilon)f(\bx_\ast)$, an unknown quantity since it relies on knowledge of the unknown function $f$. 
A naive strategy would be to attempt estimate $(1-\epsilon)f(\bx_\ast)$ 
% \kevin{shouldn't this just be $(1-\epsilon)f(\bx_\ast)$ without the LHS? Especially with the next sentence} 
directly and then apply explicit level-set estimation techniques using this estimated threshold value. 
Indeed, this is precisely the strategy of past works \citep{mason2020finding, gotovos2013active}.
Perhaps surprisingly however, it turns out that estimating the threshold is unnecessary and potentially wasteful. 
Towards developing lower bound to guide an experimental design, we begin with a simple but powerful observation.

\begin{lemma}\label{lem:mult_alle_sets}
In the well specified setting where $h = 0$, $\bx \in G_\epsilon \iff \forall \bx' \in\X: \theta_\ast^{\top}(\phi(\bx) - (1-\epsilon)\phi(\bx')) \geq 0$ and conversely, $\bx \in G_\epsilon^c \iff \exists \bx': \theta_\ast^{\top}(\phi(\bx) - (1-\epsilon)\phi(\bx')) < 0$.
\end{lemma}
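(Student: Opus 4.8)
The plan is to prove the first equivalence directly from the definition of $G_\epsilon$ and the well-specified assumption $f(\bx) = \langle \theta_\ast, \phi(\bx)\rangle$, and then obtain the second (the characterization of $G_\epsilon^c$) as its logical negation. First I would recall that $\bx_\ast \in \arg\max_{\bx' \in \X} f(\bx')$, so $f(\bx_\ast) = \max_{\bx' \in \X} \langle \theta_\ast, \phi(\bx')\rangle$. Under well-specification, the defining condition $\bx \in G_\epsilon$ reads $\langle \theta_\ast, \phi(\bx)\rangle > (1-\epsilon) f(\bx_\ast)$. Here I would be careful about the strict-versus-weak inequality: the lemma states the characterization with a weak inequality ($\geq 0$) over all $\bx'$, whereas $G_\epsilon$ is defined with a strict inequality; I expect the intended reading is that the $\forall \bx'$ quantifier converts the per-arm comparison into a comparison against the maximum, and I would flag and reconcile this boundary subtlety rather than gloss over it.

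The key step is to rewrite $(1-\epsilon)f(\bx_\ast)$ using the maximum. Since $f(\bx_\ast) = \max_{\bx'} \langle \theta_\ast, \phi(\bx')\rangle$ and $\epsilon \geq 0$ implies $(1-\epsilon) \geq 0$ (I would note this is the regime of interest, as $(1-\epsilon)$ must be nonnegative for the scaling to behave monotonically), we have
\begin{equation*}
    (1-\epsilon) f(\bx_\ast) = \max_{\bx' \in \X} (1-\epsilon)\langle \theta_\ast, \phi(\bx')\rangle.
\end{equation*}
Therefore the condition $\langle \theta_\ast, \phi(\bx)\rangle \geq (1-\epsilon)f(\bx_\ast)$ is equivalent to $\langle \theta_\ast, \phi(\bx)\rangle \geq \max_{\bx'} (1-\epsilon)\langle \theta_\ast, \phi(\bx')\rangle$, which in turn holds if and only if $\langle \theta_\ast, \phi(\bx)\rangle \geq (1-\epsilon)\langle \theta_\ast, \phi(\bx')\rangle$ for every $\bx' \in \X$. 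Rearranging the last inequality into the stated form $\theta_\ast^\top(\phi(\bx) - (1-\epsilon)\phi(\bx')) \geq 0$ completes the forward direction, and both steps are reversible, giving the $\iff$.

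For the second statement I would simply negate: $\bx \in G_\epsilon^c$ iff the universally quantified condition fails, i.e. there exists some $\bx'$ with $\theta_\ast^\top(\phi(\bx) - (1-\epsilon)\phi(\bx')) < 0$, which is exactly the claimed characterization. The main obstacle is not the algebra, which is routine, but getting the strict/weak inequality bookkeeping exactly right at the threshold: the definition of $G_\epsilon$ uses a strict inequality while the lemma's characterization uses $\geq$ inside the quantifier, so I would verify that taking the maximum over $\bx'$ (which is attained at $\bx_\ast$ in the finite-$\X$ setting) correctly aligns the two, and state explicitly that the equivalence is clean precisely because $\bx_\ast$ attains the max. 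I would also briefly confirm that the role of the $\forall \bx'$ quantifier is exactly to enforce comparison against the best arm, since the binding constraint is $\bx' = \bx_\ast$.
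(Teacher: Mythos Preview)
Your proposal is correct and follows essentially the same route as the paper: rewrite $(1-\epsilon)f(\bx_\ast)$ as $\max_{\bx'}(1-\epsilon)\langle\theta_\ast,\phi(\bx')\rangle$, convert the comparison against the max into a universally quantified per-arm inequality, and obtain the $G_\epsilon^c$ characterization by negation. Your attention to the strict/weak inequality at the boundary is in fact more careful than the paper's own proof, which silently passes from the strict defining inequality of $G_\epsilon$ to the weak $\geq$ in the characterization.
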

\begin{proof}
\begin{align*}
    \bx \in G_\epsilon  &\iff \not\exists \bx': (1-\epsilon)\theta_\ast^{\top}\phi(\bx') > \theta_\ast^{\top}\phi(\bx) \\
    &\iff \forall \bx': (1-\epsilon)\theta_\ast^{\top}\phi(\bx') \leq \theta_\ast^{\top}\phi(\bx) \\
    & \iff \forall \bx':  \theta_\ast^{\top}(\phi(\bx) - (1-\epsilon)\phi(\bx)) \geq 0
\end{align*}
where the second equivalence holds by definition since $\bx_\ast$ maximizes $(1-\epsilon)\theta_\ast^{\top}\phi(\bx')$ and we have that $\theta_\ast^{\top}\phi(\bx) \!>\! (1-\epsilon)\theta_\ast^{\top}\phi(\bx_\ast)$ for any $\bx \in G_\epsilon$. The statement for $\bx\in G_\epsilon^c$ holds via the negation. 
\end{proof}

This lemma highlights that to determine if $\bx \in G_\epsilon$, one need only check if 
\[\theta_\ast^{\top}(\phi(\bx) - (1-\epsilon)\phi(\bx')) > 0 \text{ for all } \ \bx'\in \mc{X}.\] 
In particular, this does not require any estimate of the threshold $(1-\epsilon)f(\bx^\ast)$. 
% This lemma highlights the fact that declaring that $\theta_\ast^{\top}\phi(\bx) > (1-\epsilon)\theta_\ast^{\top}\phi(\bx_\ast)$ is \emph{equivalent} to a declaration about the difference between $\theta_\ast^{\top}\phi(\bx)$ and \emph{every} other $\theta_\ast^{\top}\phi(\bx')$. 
% Concretely, it is sufficient to check a series of linear inequalities to determine if $\bx \in G_\epsilon$:
% \[\theta_\ast^{\top}(\phi(\bx) - (1-\epsilon)\phi(\bx')) > 0 \text{ for all } \ \bx'\in \mc{X}.\] 
%To understand the optimality of \texttt{MILK}, we begin with a lower bound in the linear setting. 
Next, to guide our algorithm design we look to an information-theoretic lower bound. 
\begin{theorem}\label{thm:mult_lower_bound}
In the well-specified linear setting when $\phi(\bx) = \bx$ and $f(\bx) = \theta_\ast^{\top}\bx$, for any $\delta > 0$, any algorithm that returns the set $G_\epsilon$ with probability at least $1-\delta$
must satisfy 
\begin{align*}
    \tfrac{\E[T_{\delta}]}{\log(1/2.4\delta)} \!\geq\!2\!\min_{\lambda\in \triangle_{\mc{X}}}\!&\max\left\{\max_{\bz \in G_\epsilon} \max_{\bx'\in \X}\tfrac{\|\bx -(1-\epsilon)\bx'\|_{A(\lambda)^{-1}}^2}{(\theta_\ast^{\top}(\bx - (1-\epsilon)\bx'))^2},\right.\\ 
    &\left.\ \max_{\bx \in G_\epsilon^c} \min_{\bx'\in \X}\tfrac{\|\bx -(1-\epsilon)\bx'\|_{A(\lambda)^{-1}}^2}{(\theta_\ast^{\top}(\bx - (1-\epsilon)\bx'))^2} \right\}
\end{align*}
where $T_{\delta}$ denotes the random stopping time. 
\end{theorem}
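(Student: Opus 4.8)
The plan is to mirror the change-of-measure (transportation) argument used for Theorem~\ref{thm:explicit_lower_bound}, now with the answer set characterized combinatorially by Lemma~\ref{lem:mult_alle_sets}. Fix the true parameter $\theta_\ast$ and a $\delta$-PAC algorithm; let $\E[N_{\bx}]$ be its expected pull counts, $\E[T_\delta] = \sum_{\bx}\E[N_{\bx}]$, and $\bar\lambda_{\bx} = \E[N_{\bx}]/\E[T_\delta]$ the induced allocation. For any alternative $\theta'$ whose implicit level set $G'_\epsilon$ differs from $G_\epsilon$, the algorithm must output different sets under the two instances, so the transportation inequality with $\mathrm{kl}(1-\delta,\delta)\geq\log(1/2.4\delta)$ and the unit-variance Gaussian per-pull divergence $\tfrac12((\theta_\ast-\theta')^\top\bx)^2$ gives $\tfrac12\E[T_\delta]\|\theta_\ast-\theta'\|^2_{A(\bar\lambda)}\geq\log(1/2.4\delta)$. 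Taking the cheapest confusing $\theta'$ and, since the resulting bound holds for the algorithm's induced $\bar\lambda$, lower bounding by the minimizing design reduces the problem to
\begin{align*}
\frac{\E[T_\delta]}{\log(1/2.4\delta)}\ \geq\ 2\min_{\lambda\in\triangle_{\X}}\ \frac{1}{\min_{\theta'\ \mathrm{confusing}}\|\theta_\ast-\theta'\|^2_{A(\lambda)}},
\end{align*}
so the entire task becomes computing the cheapest confusing perturbation for a fixed design $\lambda$.

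By Lemma~\ref{lem:mult_alle_sets}, $G'_\epsilon\neq G_\epsilon$ exactly when some $\bx\in G_\epsilon$ is turned into an element of $(G'_\epsilon)^c$, or some $\bx\in G_\epsilon^c$ is turned into an element of $G'_\epsilon$; these two families partition the confusing set, so the cheapest confusion is the smaller of the two case-minima. The basic computation I will use repeatedly is the projection cost onto a single comparison direction $v_{\bx,\bx'}:=\bx-(1-\epsilon)\bx'$: by a one-line Lagrange/Cauchy--Schwarz calculation, the minimal $\|\theta_\ast-\theta'\|^2_{A(\lambda)}$ subject to $\theta'^\top v_{\bx,\bx'}=0$ equals $(\theta_\ast^\top v_{\bx,\bx'})^2/\|v_{\bx,\bx'}\|^2_{A(\lambda)^{-1}}$.

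The two cases then resolve asymmetrically, which is precisely what produces the $\max/\min$ structure in the statement. Membership $\bx\in G_\epsilon$ is a \emph{conjunction} ($\theta_\ast^\top v_{\bx,\bx'}\geq 0$ for all $\bx'$), so to push $\bx$ out of the level set it suffices to violate a single comparison; the adversary chooses the cheapest one, so the case-one minimal cost is $\min_{\bx\in G_\epsilon}\min_{\bx'}(\theta_\ast^\top v_{\bx,\bx'})^2/\|v_{\bx,\bx'}\|^2_{A(\lambda)^{-1}}$, whose reciprocal is exactly the first ($\max_{\bx\in G_\epsilon}\max_{\bx'}$) term. Conversely, membership $\bx\in G'_\epsilon$ is again a conjunction, but now every comparison must be satisfied \emph{simultaneously}, so to pull a bad arm into the level set the adversary must clear them all at once; the governing (most expensive) comparison is the one against the maximizing arm, giving cost $\max_{\bx'}(\theta_\ast^\top v_{\bx,\bx'})^2/\|v_{\bx,\bx'}\|^2_{A(\lambda)^{-1}}$, whose reciprocal yields the second ($\max_{\bx\in G_\epsilon^c}\min_{\bx'}$) term. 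Assembling the two cases, passing the reciprocal through the inner optimizations, and taking $\min_\lambda$ recovers the claimed inequality.

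The step I expect to be the main obstacle is the bad-arm case. For a good arm the confusion activates a single half-space, so the projection formula applies verbatim; but for a bad arm the feasible region $\{\theta':\theta'^\top v_{\bx,\bx'}\geq0\ \forall\bx'\}$ is an intersection of many half-spaces, and in general projecting $\theta_\ast$ onto it can activate several constraints and cost strictly more than clearing the single hardest one. The crux is therefore to show that the binding comparison is the one with the maximizing arm $\bx_\ast$ and that clearing it---while keeping $\bx_\ast$ the maximizer under $\theta'$---produces a valid confusing instance realizing the single-comparison cost, i.e.\ that the inner $\min_{\bx'}$ correctly identifies this constraint. I would handle the remaining bookkeeping---relating the adaptive counts $N_{\bx}$ to the expected design $\bar\lambda$, and invoking the transportation inequality at the data-dependent stopping time $T_\delta$---exactly as in the proof of Theorem~\ref{thm:explicit_lower_bound}.
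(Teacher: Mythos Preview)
Your plan mirrors the paper's proof almost step for step: the transportation inequality of \citet{kaufmann2016complexity}, the decomposition of $\mathrm{Alt}(\theta_\ast)$ via Lemma~\ref{lem:mult_alle_sets} into the $G_\epsilon$ and $G_\epsilon^c$ pieces, and the single--half-space projection formula (Lemma~\ref{lem:proj_onto_diff}) giving cost $(\theta_\ast^\top v_{\bx,\bx'})^2/\|v_{\bx,\bx'}\|^2_{A(\lambda)^{-1}}$. For $\bx\in G_\epsilon$ the alternative set is a \emph{union} of half-spaces, your $\min_{\bx'}$ projection is exact, and this matches the paper verbatim.

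The obstacle you single out---the $G_\epsilon^c$ case, where the alternative set is an \emph{intersection} of half-spaces---is precisely where the paper's own proof is delicate. The paper does not attempt your proposed resolution (exhibiting $\bx_\ast$ as the unique binding comparison and checking that the corresponding single-constraint projection lands inside the full intersection). It simply asserts
\[
\min_{\theta'\in\bigcap_{\bx'}H_{\bx,\bx'}}\|\theta'-\theta_\ast\|_{A(\lambda)}\ \le\ \max_{\bx'}\|\theta^\epsilon_{\bx,\bx'}(\lambda)-\theta_\ast\|_{A(\lambda)},
\]
labels it ``possibly very loose,'' and moves on. But, as you correctly observe, the intersection is contained in each half-space, so the distance to the intersection can only be \emph{at least} the distance to the farthest single half-space; the displayed $\le$ goes the wrong way. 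With the correct $\ge$ the reciprocal flips and one only obtains a quantity no larger than the stated $\max_{\bx\in G_\epsilon^c}\min_{\bx'}$ term, so the transportation argument by itself does not justify that second term in the theorem. In short: your approach is the paper's approach, your flagged concern is real, and the paper does not supply the additional argument you were looking for---its handling of the bad-arm case is exactly the loose step you identified.
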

Notably, the directions $\phi(\bx) - (1-\epsilon)\phi(\bx')$ naturally arise in the lower bound. This suggests an optimal sampling distribution $\lambda^\ast$ that achieves the minimum of the inequality in \ref{thm:mult_lower_bound}. As was the case in explicit level set estimation, this sampling distribution also depends on the unknown $\theta_\ast$. 

%\romain{(Conjecture) We also provide an information theoretic lower bound.

% \begin{theorem}\label{thm:implicit_lower_bound_esp0}
% In the well-specified linear setting when $f(\bx) = \theta_\ast^{\top}\phi(\bx)$, for any $\delta > 0$, any algorithm with stopping time $\tau$ that returns the set $G_\varepsilon$ with probability at least $1-\delta$
% must satisfy 
% \begin{align*}
%     \E[\tau] \geq\min_{\lambda}\max_{\bx, \bx' \in \X}\inf_{\gamma > 0} \frac{\|\phi(\bx) - (1-\epsilon)\phi(\bx') \|_{(A(\lambda)+\gamma I)^{-1}}^2}{(\theta_\ast^{\top}(\phi(\bx) - (1-\epsilon)\phi(\bx')))^2}\log(1/2.4\delta)
% \end{align*}
% where $T$ denotes the potentially random number of samples drawn and $A(\lambda) := \sum_{\bx\in \X}\lambda_x \phi(\bx)\phi(\bx)^{\top}$.
% \end{theorem}
% %}

\subsection{Algorithm}

% Note that while in general $(1-\epsilon)\phi(\bx') \not\in \phi(\X)$, we may estimate it's value by sampling $\phi(\bx')$ and scaling the reward by $1-\epsilon$. \lalit{Do you mean you want an estimator for $(1-\epsilon)\theta_{\ast}^{\top}\phi(x)$.}
Motivated by the lower bound, we propose Algorithm~\ref{alg:MILK} called \texttt{MILK} which proceeds in phases where we attempt to progressively match the optimal distribution from the lower bound as was done by \texttt{MELK} for the explicit setting. 
% Here we introduce Algorithm~\ref{alg:MILK} called \texttt{MILK} designed for the implicit setting level-set estimation problem. 
% % The algorithm centers around estimating directions $\phi(\bx) - (1-\epsilon)\phi(\bx')$. 
% \texttt{MILK} follows the same flow as \texttt{MELK} by first computing an optimal design $\lambda$, and then giving this design to Algorithm~\ref{alg:rips} which returns the RIPS estimates. 
The key difference, however is that \texttt{MILK} instead computes a design to optimally estimate $\theta_{\ast}^{\top}(\phi(\bx) - (1-\epsilon)\phi(\bx'))$ rather than $\theta_\ast^\top\phi(\bx)$ as in \texttt{MELK}.
% Whereas in \texttt{MELK}, the active set was over arms $\bx \in \X$, in \texttt{MILK}, the active set is instead over \emph{differences} of arms. 
Given active set $\mc{A} \subset \X\times \X$ of \emph{pairs} of arms define,
\[\Y^\epsilon(\A) := \{\phi(\bx) - (1-\epsilon)\phi(\bx'): (\bx,\bx') \in \A\}.\]
% The active set in round $1$ is initialized as $\A_1 = \X\times \X$ and the optimal design is computed over $\Y^\epsilon(\A_1)$. As the algorithm proceeds, pairs $(\bx, \bx')\in \A$ are removed from $\A_t$. For any given $\bx\in \X$, $\bx$ may still be present in other pairs $(\bx, \bx'')$ even if $(\bx, \bx')$ has been eliminated. 
% This reflects the fact that even if an arm has been declared as being in $G_\epsilon$ or $G_\epsilon^c$ it may be necessary to continue sampling it to check if other arms are in $G_\epsilon$. 
The active set in round $1$ is initialized as $\A_1 = \X\times \X$. 
\texttt{MILK} keeps track of sets $\hG_t\subset \X$ and $\hB_t\subset\X$ of arms it believes to be in $G_\epsilon$ and $G_\epsilon^c$ and makes use of the RIPS procedure to robustly estimate means. 
As the algorithm proceeds, in each round $t$ an optimal design is computed over remaining difference vectors in $\Y^\epsilon(\A_t)$ and the number of samples $N_t$ is sufficient to ensure that
$|(\theta_\ast - \htheta)^{\top}(\phi(\bx) - (1-\epsilon)\phi(\bx'))| \leq 2^{-t+1}$. Then for every arm that has not been added to $\hG_t$ or $\hB_t$, \texttt{MILK} does the following: 
\begin{align*}
    \text{ if } \exists \bx': \htheta^{\top}((\phi(\bx) - (1-\epsilon)\phi(\bx')) < 2^{-t}
\end{align*}
then $\bx$ is added to $\hB_t$. In our proof, we show this condition occurs if and only if there exists a $\bx'$ such that $\theta_\ast^{\top}(\phi(\bx) - (1-\epsilon)\phi(\bx')) < 0$. If this occurs, all pairs of the form $(\bx,\bx')$ or $(\bx', \bx)$, $\bx'\in \X$ are removed from $\A_t$\footnote{We assume that pairs are ordered, i.e. $(\bx,\bx') \neq (\bx', \bx)$ for $\bx \neq \bx'$.}.
Semantically, if \texttt{MILK} can ensure that $\bx$ is not in $G_\epsilon$, then $\bx$ is never sampled again. Otherwise, for any $\bx'$ if $\htheta^{\top}(\phi(\bx) - (1-\epsilon)\phi(\bx') > 2^{-t}$, the single pair $(\bx,\bx')$ is removed from $\A_t$. An arm $\bx$ is only ever added to $\hG_t$ if $\{(\bx, \bx'), \bx'\in \X\} \cap \A_t = \emptyset$ which occurs when
\begin{align*}
    \forall\bx': \exists t' \ \text{ such that } \ \htheta_{t'}^{\top}((\phi(\bx) - (1-\epsilon)\phi(\bx')) > 2^{-t'}.
\end{align*}
In our proof, we show that this occurs if and only if $\theta_\ast^{\top}(\phi(\bx) - (1- \epsilon)\phi(\bx')) > 0$ for all $\bx'\in \X$ which is both necessary and sufficient by Lemma~\ref{lem:mult_alle_sets}. Note that even if $\bx$ has been added to $\hG_t$ implying that all pairs $(\bx, \bx')$ have been removed from $\A_t$, $\bx$ may be present in other pairs $(\bx', \bx)$ which can be necessary to determine if $\bx'\in G_\epsilon$. Finally, the algorithm terminates when either every arm has been added to either $\hG_t$ or $\hB_t$ or it has reached a round $t \gtrsim \log_2(1/\widetilde{\beta})$ when the desired tolerance $\widetilde{\beta}$ is achieved.

\setlength{\textfloatsep}{0pt}
\begin{algorithm}[tbh]
\caption{\texttt{MILK}: \textbf{M}ultiplicative \textbf{I}mplicit \textbf{L}evel set via \textbf{K}ernelization}
\label{alg:MILK}
\begin{algorithmic}[1]
\Require{Arms $\X$, $\phi$, $\delta > 0$, $\epsilon > 0$, $\gamma\geq 0$, tolerance $\widetilde\beta$}
\State{$t \leftarrow 1$, $\hG_1 \rightarrow \emptyset$, $\hB_1 \leftarrow \emptyset$, $\A_1 \leftarrow \{(\bx,\bx'), \bx,\bx'\in \X\}$}
%\State{$\A_1 \leftarrow \Y^\epsilon(\X)$}
% \State{$\A_1 \leftarrow \{(\bx,\bx'), \bx,\bx'\in \X\}$}

\While{$|\hG_t \cup \hB_t| < |\X|$ and $t\leq \lceil\log_2(4/\widetilde\beta)\rceil$}
    \State{$\delta_t \leftarrow \delta/2t^2$}
    \State{Let $\displaystyle{\lambda}_t \in \triangle_{\X}$ minimize $g(\lambda;  \A_t; \gamma)$ where}
        $$
        g(\lambda, \V; \gamma) :=  \!\max_{(\bx, \bx') \in \V}\!\|\phi(\bx) - (1-\epsilon)\phi(\bx')\|_{A^{(\gamma)}(\lambda)^{-1}}^2
        $$
    % \State{Let $\displaystyle{\lambda}_t \in \triangle_{\X}$ minimize $g(\lambda;  \Y^\epsilon(\A_t); \gamma)$ where}
    %     $$
    %     f(\V; \gamma) = \inf_{\lambda \in \triangle_{\X}} f(\lambda; \V; \gamma) 
    %         \!=\! \!\inf_{\lambda \in \triangle_{\X}} \!\max_{\by \in \V}\!\|\by\|_{\left(A^{(\gamma)}(\lambda)\right)^{-1}}^2
    %     $$
    % \State{$q^{(1)}_t \gets c_1\log(|\X|/\delta)$}
    \State{$q_t \leftarrow 16 \cdot 2^{2t} g(\lambda_t; \A_t;\gamma) (B^2 + \sigma^2) \log(2t^2|\X|^2/\delta)$}\\
    \State{Set $N_t \leftarrow \left\lceil\max\left\{ q_t,  2\log(|\X|/\delta)\right\}\right\rceil$ and sample $x_{1}, \cdots, x_{N_t}$ observing noisy function values $y_1, \cdots, y_{N_t}$ according to $\lambda_t$.}
    % \State{Use Alg~\ref{alg:rips} with with sets $\X$, $\V_t = \Y^\epsilon(\A_t)$, sampling $N_t$ measurements $x_1, \ldots, x_{N_t}$ to get $\widehat{\theta}_t$}   
    \State{$\widehat{\theta}_t \!\gets\! \text{RIPS}(\Y^\epsilon(\A_t), \{A^{(\gamma)}(\lambda_t)^{-1}\phi(x_i)y_i\}_{i=1}^{N_t})$}
    \For{$(\bx,\bx')\in \A_t$}%$\phi(\bx) - (1-\epsilon)\phi(\bx') \in \A_t$}
        \If{$\htheta_t^{\top}(\phi(\bx) - (1-\epsilon)\phi(\bx')) < -2 \cdot 2^{-t}$}
            \State{$\hB_{t+1} \leftarrow \bx$}
            %\State{$\A_{t+1} \leftarrow \A_t \backslash \Y^\epsilon_{x}\cup \{\phi(\bx') - (1-\epsilon)\phi(\bx): \forall \bx'\}$}
            % \State{$\bx\text{-pairs}\!\gets\!\! \{\!(\bx,\bx')|\bx'\!\in\! \!\X\!\}\!\cup\! \{\!(\bx',\bx)|\bx'\!\in\!\X\!\}$}
            \State{$\bx\text{-pairs}\gets \{(\bx,\bx') \text{ and } (\bx',\bx)|\bx'\in\X\}$}
            \State{$\A_{t+1} \leftarrow \A_t \setminus \bx\text{-pairs}$}
            % \State{$\A_{t+1} \leftarrow \A_t \setminus \{(\bx,\bx')|\bx'\in \X\}\!\cup\!\!\{(\bx',\bx)|\bx'\in\X\}$}
        \NoThenElseIf{$\htheta_t^{\top}(\phi(\bx) - (1-\epsilon)\phi(\bx')) > 2 \cdot 2^{-t
        }$}
            \State{$\A_{t+1} \leftarrow \A_t \setminus \{(\bx,\bx')\}$}
            \If{$\{(\bx,\bx')|\bx'\in \X\}\cap \A_t = \emptyset$}
            \State{$\hG_{t+1} \leftarrow \hG_t \cup \{\bx\}$ }
        \EndIf
        \EndIf
    \EndFor
    % \For{$\bz \in \Z \backslash (\hG_t \cup \hB_{t+1})$ }
    %     \If{$\Y_{z}\cap \A_t = \emptyset$}
    %         \State{$\hG_{t+1} \leftarrow \hG_t \cup \{\bz\}$}        
    %     \EndIf
    % \EndFor
    \State{$t \leftarrow t + 1$}
\EndWhile
\Return{$\widehat{\mc{R}} := \X\setminus \hB_t$}
\end{algorithmic} 
\end{algorithm} 

\subsection{Theoretical Guarantees}

% Note that due to the misspecificity of the model, one cannot hope to correctly identify $G_\epsilon(f)$. In general, as $|f(\bx) - \theta_\ast^{\top}\phi(x)| \leq h \ \forall\bx\in \X$ the best we may hope to achieve is to correctly classify all points with means greater than $O(h)$ away from the threshold of $(1-\epsilon)f(\bx_\ast)$ where $\bx_\ast$ maximizes $f$. 

Next we state \texttt{MILK}'s complexity, again deferring constants and doubly logarithmic factors to the appendix.  
% As in the case of Theorem~\ref{thm:MELK_complex}, we cannot guarantee exact recovery of $G_\epsilon$ due to the misspecification $h \geq 0$ but instead guarantee that we identify the set up to a tolerance defined by $h$. 

% The set that one can hope to identify is $G^\phi_\epsilon(f) := \{\bx \in \X: \langle \phi(\bx), \theta^\ast\rangle_{\H} \geq \max_{\bx'}\langle \phi(\bx'), \theta^\ast\rangle_{\H}- \epsilon\}$. 
% \begin{lemma}
% An algorithm identifying $G^\phi_{\epsilon}(\theta_\ast)$ will only return elements in $G_{\epsilon+2h}(\theta_\ast)$ and will return all the elements of $G_{\epsilon-2h}(\theta_\ast)$
% \end{lemma}
% \begin{proof}
% It is easy to check that $G_{\epsilon-2h}(\theta_\ast) \subset G^\phi_{\epsilon}(\theta_\ast) \subset G_{\epsilon+2h}(\theta_\ast)$.
% \end{proof}
% Without loss of generality, consider the model $y = \langle \phi(\bx), \theta^\ast\rangle_{\H} + \zeta + \xi$ where is assumed that $|y| \leq B$, $|\zeta| \leq h$ and $\E[\xi^2]\leq\sigma^2$. 
% Let $\Y_{z} := \{\phi(\bz') - \phi(\bz) \in \Y(\Z)\}$, the set of all differences with respect to $\phi(\bz)$ where we assume $\Y(\Z)$ includes all $|\Z|^2$ directions $\phi(\bz') - \phi(\bz)$ and $\phi(\bz) - \phi(\bz')$. 
% \todo{for Romain alpha goes to beta and add beta tolerance ("sought gap" + update algos (algo 1 and 3)}

\begin{theorem}\label{thm:MILK_complex}
Fix $\delta >0$, $\epsilon > 0$, tolerance $\widetilde\beta$, and regularization $\gamma>0$. 
Define $\Delta_{\min}(\epsilon) = \min_{\bx}|\theta_\ast^{\top}(\phi(\bx) - (1-\epsilon)\phi(\bx^\ast))|$. 
% Define the quantities $\Delta_{\min}^{\text{Above}}(\epsilon) = \min_{\bx \in G_\epsilon}\min_{\bx'}  \theta_\ast^{\top}(\phi(\bx) - (1-\epsilon)\phi(\bx')) $ and $\Delta_{\min}^{\text{Below}}(\epsilon) = \min_{\bx \in G_\epsilon^c}\max_{\bx': (\phi(\bx) - (1-\epsilon)\phi(\bx'))^{\top}\theta_\ast < 0}  (\phi(\bx) - (1-\epsilon)\phi(\bx'))^{\top}\theta_\ast$, and $\Delta_{\min}(\epsilon) = \min\left\{\Delta_{\min}^{\text{Above}}(\epsilon), \Delta_{\min}^{\text{Below}}(\epsilon) \right\}$.
% \begin{align*}
%   \Delta_{\min}(\epsilon) &:= \min \left\{\Delta_{\min}^{\text{Good}}(\epsilon), \Delta_{\min}^{\text{Bad}}(\epsilon) \right\}\\
%   &= \min \left\{ \min_{\bz \in G^\phi_\epsilon}\min_{\bz'}\epsilon - (\phi(\bz') - \phi(\bz))^{\top}\theta_\ast, \min_{\bz \in (G^\phi_\epsilon)^c} \max_{\bz': (\phi(\bz') - \phi(\bz))^{\top}\theta_\ast > \epsilon} (\phi(\bz') - \phi(\bz))^{\top}\theta_\ast - \epsilon\right\}.
% \end{align*}
Define also 
% \begin{align*}
%     &\bar \beta(\epsilon) = \min \bigg\{ \beta>0 : 4(\sqrt{\gamma} \|\theta_* \| + h)\times\\
%     &\left.\left(2+ \sqrt{f(\X,\left\{\by \in \Y^\epsilon(\X \times \X): |\by^{\top}\theta_\ast| \leq  \beta\right\};\gamma)}\right) \leq  \beta \right\}.
% \end{align*}
% \begin{align*}
%     &\bar \beta(\epsilon) = \min \bigg\{ \beta>0 : 4(\sqrt{\gamma} \|\theta_* \| + h)\times\\
%     &\hspace{-1em}\left.\left(\!2\!+\!\!\!\! \sqrt{\min_{\lambda\in \triangle_{\X}}\hspace{-1em}\max_{\underset{|\theta_\ast^\top(\phi(\bx - (1-\epsilon)\phi(\bx'))| \leq  \beta}{(\bx, \bx')\in\cX\times \cX} }\hspace{-1em}\|\theta_\ast^\top(\phi(\bx - (1-\epsilon)\phi(\bx'))\|_{A^{(\gamma)}(\lambda)^{-1}}^2}\right)\!\! \leq\!\!  \beta\! \right\}.
% \end{align*}
\begin{align*}
    &\bar \beta(\epsilon) = \min_{\beta>0} \left\{  4(\sqrt{\gamma} \|\theta_* \| + h)\left(2\!\! +\!\! \sqrt{\min_{\lambda\in \tX} \nu(\lambda, \beta)}\right)\leq \beta\right\},\\
    &{\nu(\lambda, \beta) :=}\hspace{-2em} \max_{\underset{|\theta_\ast^\top(\phi(\bx) - (1-\epsilon)\phi(\bx'))| \leq  \beta}{(\bx, \bx')\in\cX\times \cX} }\hspace{-1.5em} \|\theta_\ast^\top(\phi(\bx) - (1-\epsilon)\phi(\bx'))\|_{A^{(\gamma)}(\lambda)^{-1}}^2.
\end{align*}
With probability $1-\delta$, \texttt{MILK} returns a set $\widehat{\mc{R}}$ at a time $T_\delta$ such that 
\begin{align*}
    &\widehat{\mc{R}}\supseteq \{\bx \in \X: f(\bx) \geq (1-\epsilon)f(\bx_\ast) + \bar\beta(\epsilon)\} \text{ and } \\
    &\widehat{\mc{R}} \subseteq \{\bx \in \X: f(\bx) \geq (1-\epsilon)f(\bx_\ast) - \widetilde\beta - \bar\beta(\epsilon)\}
\end{align*}
and for any $\epsilon$, $\widetilde\beta$ such that $\max(\Delta_{\min}(\epsilon), \widetilde\beta) \geq \bar\beta(\epsilon)$
\begin{align*}
    T_\delta \leq & 
     \text{\footnotesize{\((B^2 \!+\! \sigma^2)\)}} H^\texttt{MILK}(\theta_\ast) \text{\footnotesize{\(\log_2((\Delta_{\min}(\epsilon)\vee \widetilde{\beta})^{-1})\)}}\text{\footnotesize{\(\log\)}}\left(\tfrac{|\X|}{\delta}\right)
\end{align*}
for $H^\texttt{MILK}(\theta_\ast) = \underset{\lambda \in \tX}{\min}\left\{H_\lambda^{\texttt{MILK-}G_\epsilon}(\theta_\ast)\vee H_\lambda^{\texttt{MILK-}G_\epsilon^c}(\theta_\ast)\right\}$, 
% where $H^\texttt{MILK}(\theta_\ast) = \underset{\lambda \in \tX}{\min}\max\left\{H_\lambda^{\texttt{MILK-}G_\epsilon}(\theta_\ast), H_\lambda^{\texttt{MILK-}G_\epsilon^c}(\theta_\ast)\right\}$, 
where
\begin{align*}
    H_\lambda&^{\texttt{MILK-}G_\epsilon}(\theta_\ast) :=\\
    &\max_{\bx \in G_\epsilon}\max_{\bx'\in \X} \frac{\|\phi(\bx) - (1-\epsilon)\phi(\bx')\|_{A^{(\gamma)}(\lambda)^{-1}}^2}{\max\{((\phi(\bx) - (1-\epsilon)\phi(\bx'))^{\top}\theta_\ast)^2, \widetilde \beta^2\}},
    \vspace{-1em}
\end{align*}    
\begin{align*}
    \text{ and }&H_\lambda^{\texttt{MILK-}G_\epsilon^c}(\theta_\ast) :=\\
    &\vspace{-1em}\max_{\bx \in G_\epsilon^c}\max_{\bx'} \frac{\|\phi(\bx) - (1-\epsilon)\phi(\bx')\|_{A^{(\gamma)}(\lambda)^{-1}}^2}{\max\{((\phi(\bx) - (1-\epsilon)\phi(\bx_\ast))^{\top}\theta_\ast)^2, \widetilde \beta^2\}}.
\end{align*}
% Further, for any $\epsilon$ such that $\Delta_{\min}(\epsilon) \geq \max\{\Tilde{\beta},\bar{\beta}(\epsilon)\}$, \texttt{MILK} returns a set  $\hG_\epsilon$ which satisfies 
% $\{\bx: f(\bx) > (1-\epsilon)f(\bx_\ast) - 2h\} \subset \hG_\epsilon \subset \{\bx: f(\bx) > (1-\epsilon)f(\bx_\ast) + 2h\}$. 
\end{theorem}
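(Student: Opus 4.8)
The plan is to follow the phased-elimination template: first establish a high-probability \emph{good event} on which every RIPS estimate is accurate throughout the run, then prove the two containment (correctness) statements, and finally bound the stopping time on that event. Concretely, I would invoke Theorem~\ref{thm:robust_estimator} once per round $t$ with confidence $\delta_t = \delta/2t^2$ and set $\V = \Y^\epsilon(\A_t)$, so that a union bound over rounds (using $\sum_t \delta_t \le \delta$) gives, simultaneously for all $t$ and all active pairs $(\bx,\bx')\in\A_t$, that $|(\htheta_t-\theta_\ast)^{\top} v| \le (2\sqrt{\gamma}\|\theta_\ast\|+2h)\,\|v\|_{A^{(\gamma)}(\lambda_t)^{-1}} + 2^{-t}$ where $v = \phi(\bx)-(1-\epsilon)\phi(\bx')$. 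The choice $N_t \approx 16\cdot 2^{2t} g(\lambda_t;\A_t;\gamma)(B^2+\sigma^2)\log(\cdot)$ is calibrated exactly so that the stochastic term of Theorem~\ref{thm:robust_estimator}, after multiplying by $\|v\|_{A^{(\gamma)}(\lambda_t)^{-1}}\le\sqrt{g(\lambda_t;\A_t;\gamma)}$, collapses to at most $2^{-t}$, leaving only the non-vanishing bias governed by $\sqrt{\gamma}\|\theta_\ast\|+h$.

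Next I would prove the containments using Lemma~\ref{lem:mult_alle_sets}, which reduces membership in $G_\epsilon$/$G_\epsilon^c$ to the sign of $\theta_\ast^{\top} v$ over pairs. The quantity $\bar\beta(\epsilon)$ absorbs the irreducible bias: by its fixed-point definition, once the active gaps shrink to scale $\bar\beta(\epsilon)$ the bias $4(\sqrt{\gamma}\|\theta_\ast\|+h)(2+\sqrt{\min_\lambda\nu(\lambda,\beta)})$ no longer falls below the gap, so pairs with $|\theta_\ast^{\top} v| > \bar\beta(\epsilon)$ have signal strictly dominating the bias while pairs with $|\theta_\ast^{\top} v| \le \bar\beta(\epsilon)$ form an irreducible gray zone. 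I would show that whenever $\htheta_t^{\top} v < -2\cdot 2^{-t}$ triggers an elimination to $\hB_{t+1}$, the good-event bound forces $\theta_\ast^{\top} v < 0$, certifying $\bx\in G_\epsilon^c$; hence no arm deep in $G_\epsilon$ (margin $\bar\beta(\epsilon)$) is ever placed in $\hB$, giving $\widehat{\mc{R}}\supseteq\{f\ge (1-\epsilon)f(\bx_\ast)+\bar\beta(\epsilon)\}$. Conversely, a well-separated $G_\epsilon^c$ arm's witness pair $(\bx,\bx_\ast)$ eventually clears $\htheta_t^{\top} v < -2\cdot 2^{-t}$ and the arm is eliminated, yielding $\widehat{\mc{R}}\subseteq\{f\ge (1-\epsilon)f(\bx_\ast)-\widetilde\beta-\bar\beta(\epsilon)\}$; these steps require translating between $f$ and $\theta_\ast^{\top}\phi$ through the misspecification bound $h$.

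For the sample complexity I would bound $T_\delta=\sum_{t\le T}N_t$ by controlling the number of rounds $T$ and the per-round count $N_t$. The central observation is that a pair survives into $\A_t$ only if its gap is small: survival through round $t-1$ forces $|\htheta_{t-1}^{\top} v|\le 2\cdot 2^{-(t-1)}$, hence $|\theta_\ast^{\top} v|\lesssim 2^{-t}$ on the good event (down to the $\bar\beta(\epsilon)$ floor), which bounds $T$ by $\log_2((\Delta_{\min}(\epsilon)\vee\widetilde\beta)^{-1})$. For the per-round count I would show $2^{2t}g(\lambda_t;\A_t;\gamma)\lesssim H^{\texttt{MILK}}(\theta_\ast)$ by plugging the minimizer $\lambda^\ast$ of $H^{\texttt{MILK}}$ into $g$ and splitting active pairs by the class of their first coordinate. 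Pairs with $\bx\in G_\epsilon$ satisfy $2^{2t}(\theta_\ast^{\top} v)^2\lesssim 1$, so $2^{2t}\|v\|^2_{A^{(\gamma)}(\lambda^\ast)^{-1}}\lesssim H_{\lambda^\ast}^{\texttt{MILK-}G_\epsilon}(\theta_\ast)$; pairs with $\bx\in G_\epsilon^c$ are charged against the single witness $\bx_\ast$, since the most-negative pair $(\bx,\bx_\ast)$ is resolved by round $\approx\log_2(1/|\theta_\ast^{\top}(\phi(\bx)-(1-\epsilon)\phi(\bx_\ast))|)$, forcing $2^{2t}(\theta_\ast^{\top}(\phi(\bx)-(1-\epsilon)\phi(\bx_\ast)))^2\lesssim 1$ and matching the asymmetric denominator of $H_{\lambda^\ast}^{\texttt{MILK-}G_\epsilon^c}(\theta_\ast)$. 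Taking the maximum over the two cases and then the minimum over $\lambda$ gives $2^{2t}g(\lambda_t;\A_t;\gamma)\lesssim H^{\texttt{MILK}}(\theta_\ast)$; summing $N_t\lesssim H^{\texttt{MILK}}(\theta_\ast)(B^2+\sigma^2)\log(\cdot)$ over the $O(\log_2((\Delta_{\min}(\epsilon)\vee\widetilde\beta)^{-1}))$ rounds yields the claim.

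The hardest part is the $G_\epsilon^c$ accounting: a pair $(\bx,\bx')$ with an accidentally tiny gap $\theta_\ast^{\top} v$ would, if it had to be resolved, blow up $g(\lambda_t;\A_t;\gamma)$ and force many extra rounds. The proof must exploit that a $G_\epsilon^c$ arm is eliminated \emph{wholesale} as soon as its witness certifies $\theta_\ast^{\top} v<0$, and that the best witness is always $\bx_\ast$. Making this rigorous requires showing that the witness pair $(\bx,\bx_\ast)$ can never be removed through the positive pair-removal branch (its estimate stays negative on the good event), so it remains in $\A_t$ and is estimated until it triggers elimination of \emph{all} of $\bx$'s pairs before any small-gap pair can inflate the design. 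Coupling this with careful tracking of the non-vanishing bias through the $\bar\beta(\epsilon)$ fixed point, so that it never corrupts the sign tests for well-separated pairs, is where the bulk of the technical effort lies.
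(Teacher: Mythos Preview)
Your proposal is correct and follows essentially the same architecture as the paper's proof: a RIPS-based good event (the paper's Lemma~\ref{lem:high_prob_event_RIPS_milk}), correctness via sign-soundness of the elimination/inclusion rules combined with a $\bar t$/$\bar\beta(\epsilon)$ fixed-point case analysis (Lemmas~\ref{lem:milk_contains} and~\ref{lem:milk_is_contained}), and a sample-complexity bound obtained by passing from the random $\A_t$ to a deterministic superset $\S_t$ split into $\S_t^{\text{Above}}$/$\S_t^{\text{Below}}$ (Lemmas~\ref{lem:milk_active_good} and~\ref{lem:milk_active_bad}) and then charging each round to $H^{\texttt{MILK}}$. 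In particular, the ``hardest part'' you single out---that a $G_\epsilon^c$ arm's witness pair $(\bx,\bx_\ast)$ is never removed through the positive branch and so triggers wholesale elimination before any small-gap pair can inflate the design---is exactly the content of Lemma~\ref{lem:milk_active_bad}, whose extra constraint $(\phi(\bx)-(1-\epsilon)\phi(\bx_\ast))^\top\theta_\ast \ge -2^{-t+2}$ is what produces the asymmetric $\bx_\ast$-denominator in $H_\lambda^{\texttt{MILK-}G_\epsilon^c}$.
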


The statement of Theorem~\ref{thm:MILK_complex} for \texttt{MILK} is similar that of \ref{thm:MELK_complex} for \texttt{MELK}. In the well specified case when $\widetilde{\beta} = 0$, \texttt{MILK} returns $G_\epsilon$ exactly at a time $T_\delta$ that satisfies
\begin{align*}
    T_\delta \lesssim (B^2 \!+\! \sigma^2) H^\texttt{MILK}(\theta_\ast) \log_2(\Delta_{\min}(\epsilon))\log\left(|\X|\delta^{-1}\right)
\end{align*}
In this case, however, $H^\texttt{MILK}(\theta_\ast)$ is a maximum of two different complexity terms. $H_\lambda^{\texttt{MILK-}G_\epsilon}$ represents the complexity of identifying all $\bx \in G_\epsilon$. Similarly, $H_\lambda^{\texttt{MILK-}G_\epsilon^c}$ represents the complexity of identifying all $\bx\in G_\epsilon^c$. Similar to the explicit setting, in the misspecified case when $h > 0$, $\overline{\beta}(\epsilon)$ similarly represents the limit of how well we can estimate $f(\bx)$ for any $\bx \in \X$ and $\widetilde{\beta}$ allows for an additional tolerance such that \texttt{MILK} detects all $\bx$ for which $f(\bx) > (1-\epsilon)f(\bx_\ast)+ \overline{\beta}(\epsilon)$ and none worse than $f(\bx) < (1-\epsilon)f(\bx_\ast) - \overline{\beta}(\epsilon) - \widetilde{\beta}$. The following remark addresses the setting where \texttt{MILK} returns $\widehat{G}_t$ instead. 

\textbf{Remark:} If the algorithm instead returns $\widehat{\mc{R}} = \hG_t$, then with probability at least $1-\delta$
\begin{align*}
    &\widehat{\mc{R}} \supseteq\{\bx \in \X: f(\bx) \geq (1-\epsilon) f(\bx_\ast) +\widetilde{\beta} + \bar{\beta}(\epsilon) \} \text{ and }\\
    &\widehat{\mc{R}} \subseteq \{\bx \in \X: f(\bx) \geq (1-\epsilon)f(\bx_\ast) -\bar\beta(\epsilon)\}.
\end{align*}

% each $\bx \in \hG_t$ satisfies $f(\bx) \geq (1-\epsilon)f(\bx_\ast) +\widetilde{\beta} - \bar{\beta}(\epsilon)$. Furthermore, $\{\bx \in \X: f(\bx) \geq (1-\epsilon)f(\bx_\ast) + \widetilde{\beta} + \bar{\beta}(\epsilon)\} \subset \hG_t$.

% \textbf{Remark:} If the algorithm instead returns $\hG_t$, then each $\bx \in \hG_t$ satisfies $f(\bx) \geq (1-\epsilon)f(\bx_\ast) +\widetilde{\beta} - \bar{\beta}(\epsilon)$. Furthermore, $\{\bx \in \X: f(\bx) \geq (1-\epsilon)f(\bx_\ast) + \widetilde{\beta} + \bar{\beta}(\epsilon)\} \subset \hG_t$.

% The above theorem guarantees that for $\epsilon$ satisfying a constraint stemming from the misspecification parameter $h$ and the regularization $\gamma$, we correctly identify the set $G_\epsilon$ up to a tolerance of $O(h)$. In the well specified setting where $h = 0$, by choosing $\widetilde\beta < \Delta_{\min}$, this ensures that we recover $G_\epsilon$ exactly. 
\textbf{Comparison with the Lower bound}

The complexity term $H^\texttt{MILK}(\theta_\ast)$ naturally breaks into two terms.  $H^{\texttt{MILK-}G_\epsilon}(\theta_\ast)$ represents the complexity of finding arms in $G_\epsilon$ and it matches a corresponding term in the lower bound. 
$H^{\texttt{MILK-}G_\epsilon^c}(\theta_\ast)$ represents the complexity of removing arms in $G_\epsilon^c$ but is slightly different than the term in the lower bound. 
As a consequence of Theorem 4.1 of \citet{mason2020finding} however, one can show the term given in the lower bound for $\bx \in G_\epsilon^c$ is not achievable except asymptotically as $\delta \rightarrow 0$ in general. 
Instead, the problem of implicit level set estimation reduces to the problem of all $\epsilon$-good arm identification in multi-armed bandits studied by \citet{mason2020finding} when $\phi(\bx) = \bx$, $h = 0$, and $\bx_i = e_i$.
We show in the appendix that \texttt{MILK}'s sample complexity matches the optimal finite time rate up to logarithmic factors as shown in \citet{mason2020finding}.

\textbf{Contrast with Existing Results}

As was shown in the explicit setting, we can show that the sample complexity bound in Theorem~\ref{thm:MILK_complex} improves on the current state of the art. Take $\gamma = \gamma' / T$ for any $\gamma'\in \R$. Then we may bound $ H^{\texttt{MILK}-G_\epsilon}(\theta_\ast)$ as
\begin{align*}
    &\min_{\lambda \in \tX}  \max_{\bx, \bx'} \ \left\{ \frac{\|\phi(\bx) - (1-\epsilon)\phi(\bx')\|_{A^{(\gamma)}(\lambda)^{-1}}^2}{((\phi(\bx) - (1-\epsilon)\phi(\bx'))^{\top}\theta_\ast)^2}\ \right\}\\
    & \stackrel{(a)}{\leq} 2\min_{\lambda \in \tX}  \max_{\bx, \bx'} \left\{ \frac{(1-\epsilon)^2\|\phi(\bx) - \phi(\bx')\|_{A^{(\gamma)}(\lambda)^{-1}}^2 }{((\phi(\bx) - (1-\epsilon)\phi(\bx'))^{\top}\theta_\ast)^2}\ \right.\\
    &\hspace{2.5cm} \vee\left. \frac{ \epsilon^2\|\phi(\bx)\|_{A^{(\gamma)}(\lambda)^{-1}}^2}{((\phi(\bx) - (1-\epsilon)\phi(\bx'))^{\top}\theta_\ast)^2}\ \right\}\\
    &\stackrel{(b)}{\leq} 2\frac{(1+\epsilon)^2}{\Delta_{\min}(\epsilon)^2}\min_{\lambda \in \tX}  \max_{\bx, \bx'} \ \left\{ \|\phi(\bx') - \phi(\bx)\|_{A^{(\gamma)}(\lambda)^{-1}}^2 \right.\\
    &\hspace{4.5cm}\left.\vee \| \phi(\bx)\|_{(A^{(\gamma)}(\lambda))^{-1}}^2\ \right\} \\
    &\leq \frac{4(1+\epsilon)^2}{\Delta_{\min}(\epsilon)^2}\min_{\lambda \in \tX}  \max_{\bx} \|\phi(\bx)\|_{A^{(\gamma)}(\lambda)^{-1}}^2 \\
    % &\stackrel{(c)}{\leq} \frac{6(1+\epsilon)^2}{\Delta_{\min}(\epsilon)^2}\sup_{\lambda \in \tX} \log\det(T K_\lambda + \gamma I) = \frac{6(1+\epsilon)^2}{\Delta_{\min}(\epsilon)^2}\Gamma_T
    &\stackrel{(c)}{\leq}  \frac{6(1+\epsilon)^2}{\Delta_{\min}(\epsilon)^2}\Gamma_T
\end{align*}
where $(a)$ follows by the triangle inequality, $(b)$ by definition of $\Delta_{\min}(\epsilon)$ and $(c)$ follows by Lemma 2 of \citet{camilleri2021highdimensional}. A similar computation follows for $H^{\texttt{MILK}-G_\epsilon^c}(\theta_\ast)$
Hence, the sample complexity of \texttt{MILK} is at most $O(\Gamma_T\Delta_{\min}^{-2})$ though it can be much tighter
as inequality $(b)$ is tight only in the worst case when all gaps are equal. In particular, the result of Theorem~\ref{thm:MILK_complex} is tighter than Theorem 2 of \citet{gotovos2013active}. 
\section{Experiments}\label{sec:experiments}
\begin{figure*}{}
    \centering
    \vspace{-1em}
    \includegraphics[width=0.6\textwidth]{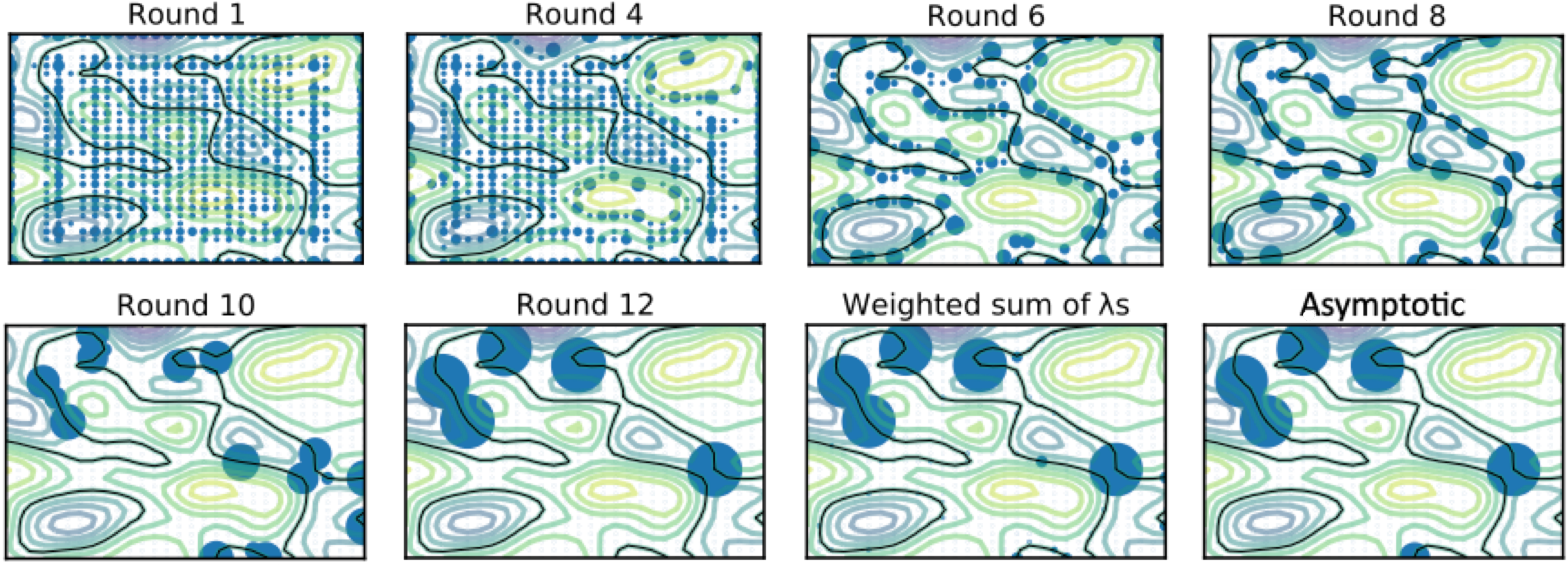}
    \vspace{-0.75em}
    \caption{Allocations across rounds for a function $f(x, y)$ with a threshold of $\alpha = 0$ shown in black. \label{fig:allocations}}
    \vspace{-1em}
\end{figure*}

\begin{figure}
\centering
% \begin{subfigure}{0.22\textwidth}
%   \centering
%   \includegraphics[width=\linewidth]{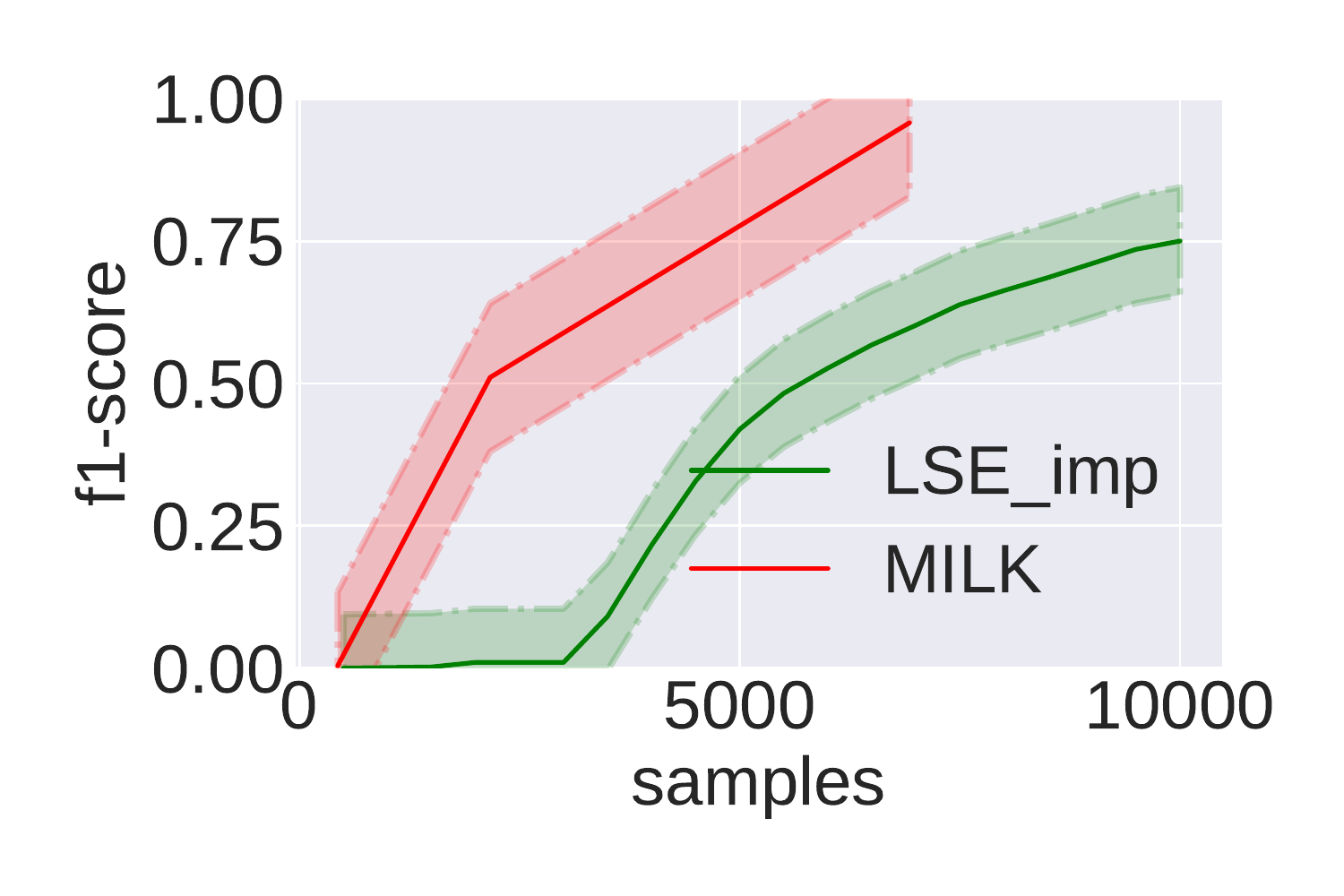}
%   \caption{Linear, explicit}
%   \label{fig:lin_explicit}
%     \end{subfigure}
\begin{subfigure}{.18\textwidth}
  \centering
  \hspace{-1em}
  \includegraphics[width=\linewidth]{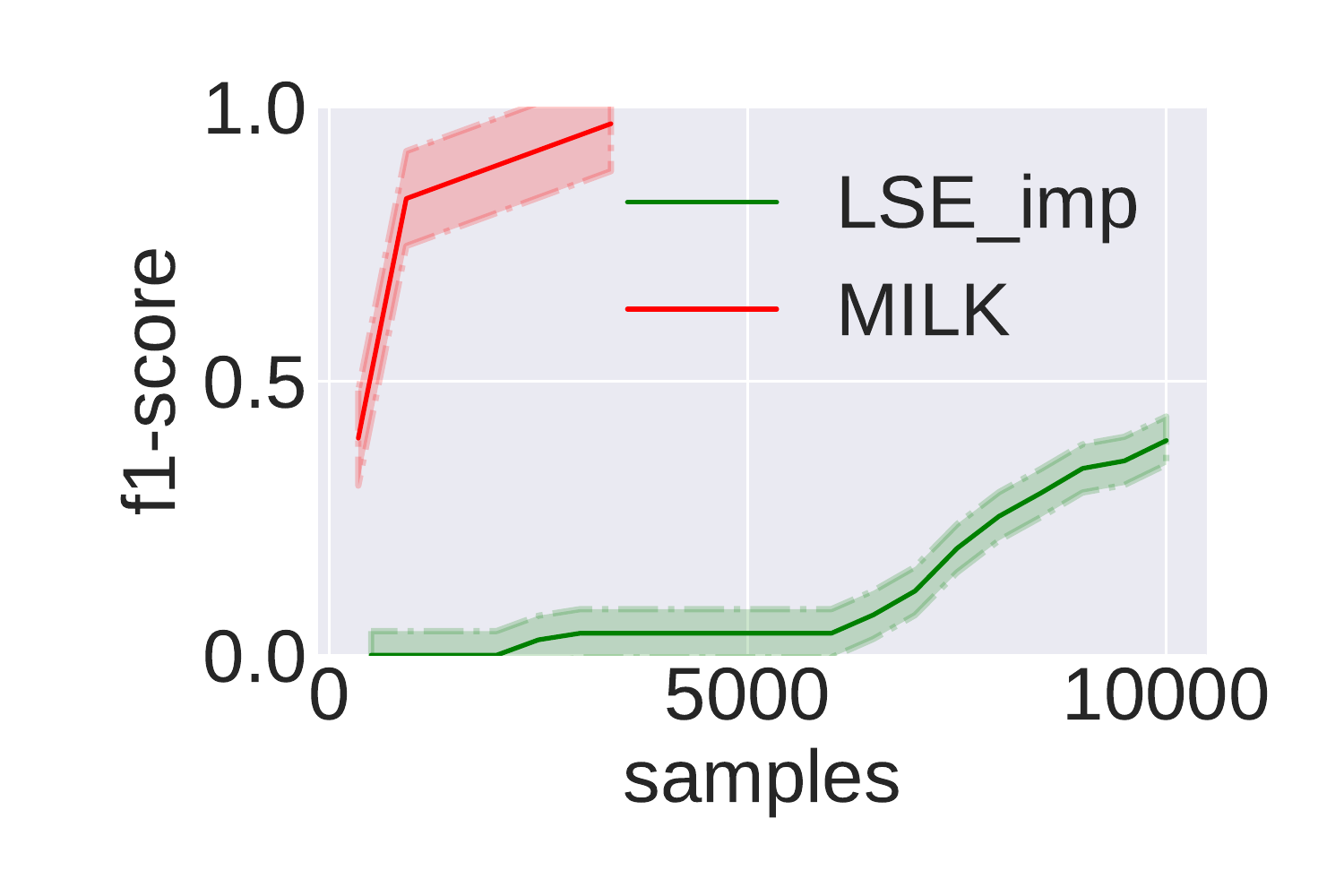}
\caption{Implicit}
\label{fig:lin_implicit}
\end{subfigure}
\hspace{-1.9em}
\begin{subfigure}{.155\textwidth}
  \centering
  \includegraphics[width=\linewidth]{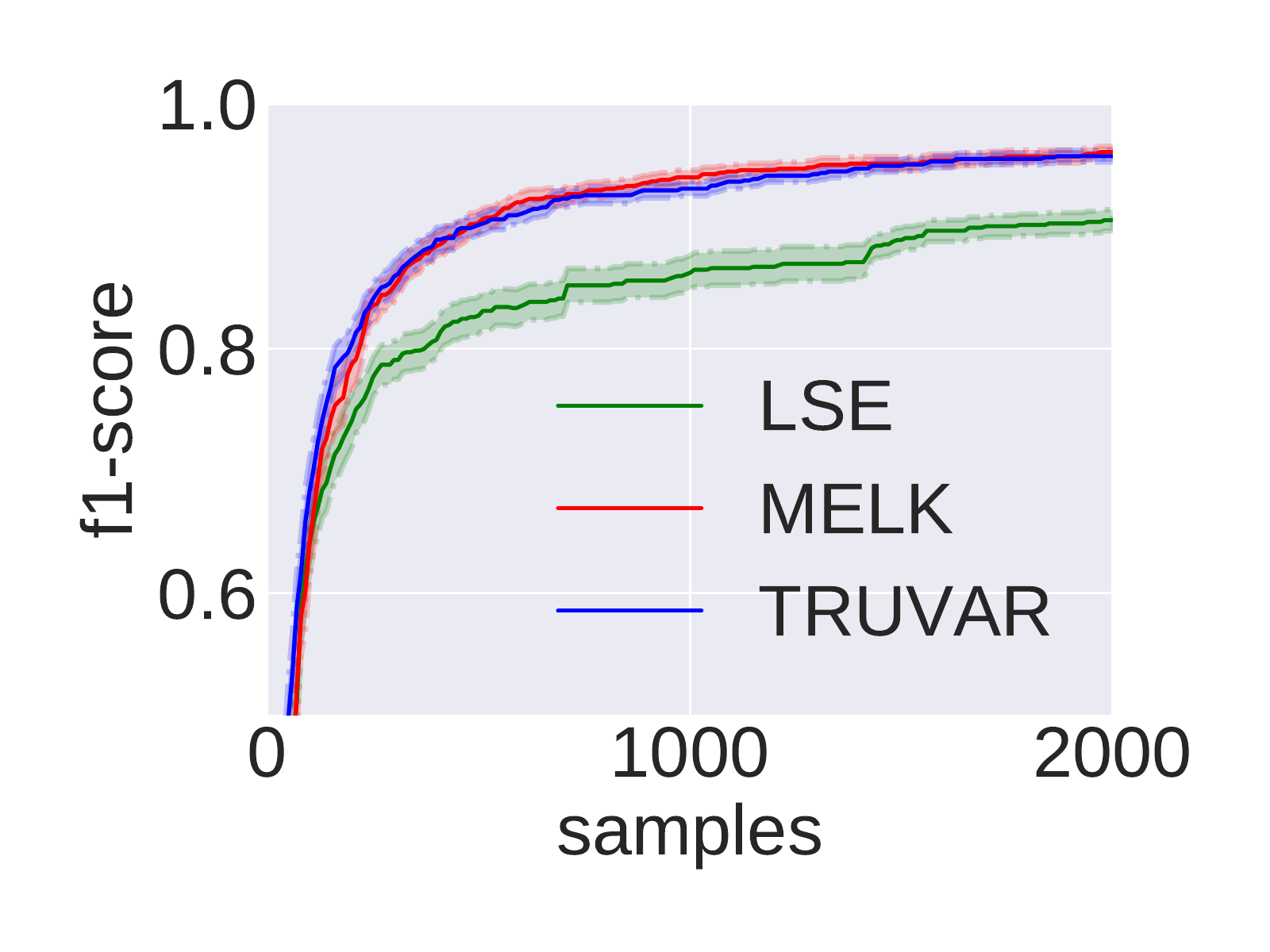}
\caption{GP, $\ell = 0.05$}
\label{fig:rkhs_well_spec}
\end{subfigure}%
\hspace{-0.8em}
\begin{subfigure}{0.17\textwidth}
  \centering
  \includegraphics[width=\linewidth]{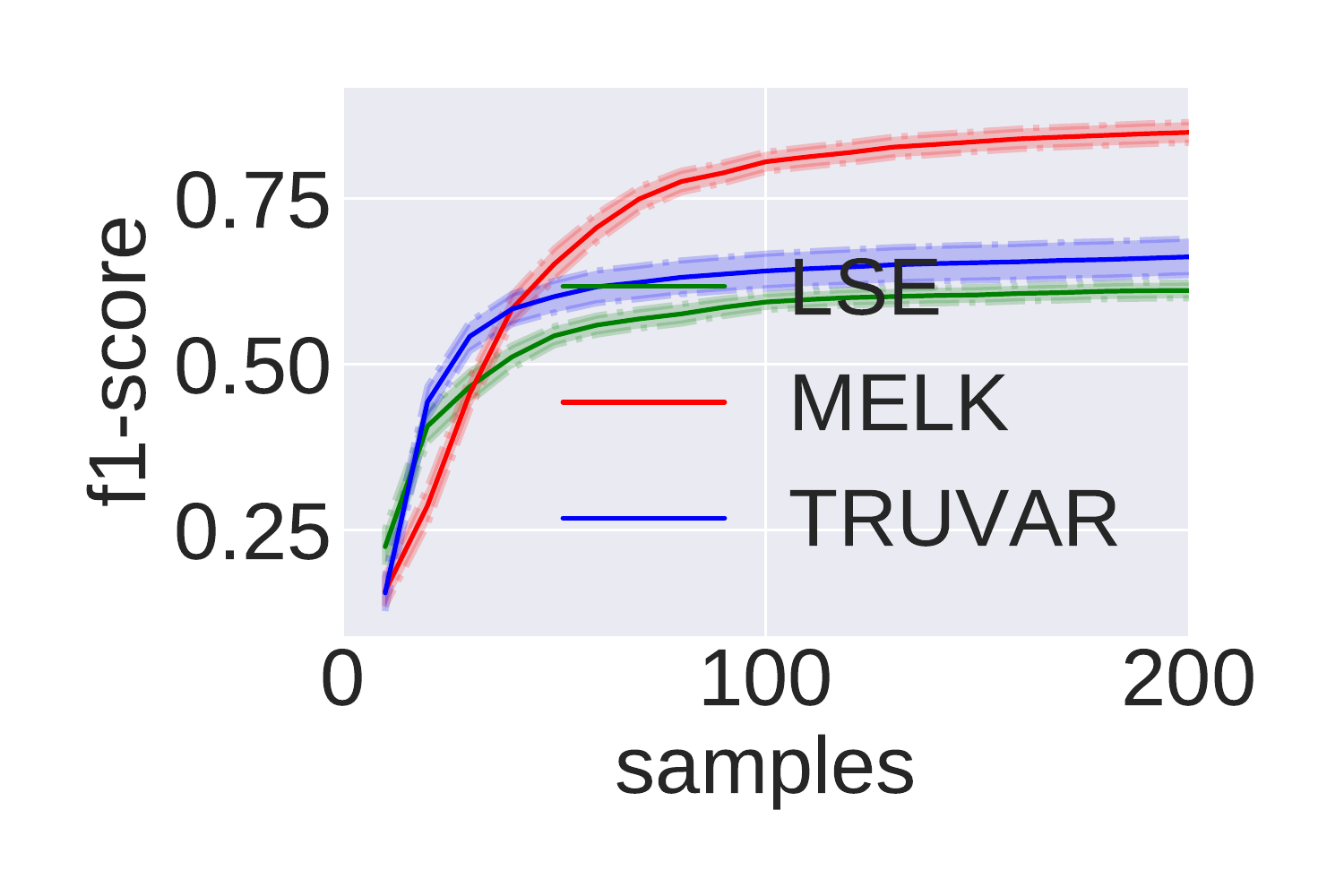}
    \caption{Cosine}
    \label{fig:rkhs_cosine}
\end{subfigure}
\caption{Performance of \texttt{MELK} and \texttt{MILK} versus Gaussian process baseline.}
\vspace{1em}
\label{fig:main_experiment}
\end{figure}

In this section, we compare our algorithms to existing baselines in the literature. Additional details of these methods and our experiments are in the Appendix.

\textbf{Warm-Up: Optimal Sampling.} 
In Figure~\ref{fig:allocations} we illustrate the sampling behavior of \texttt{MELK}.
We let $\mc{X} = \{(\frac{i}{30},\frac{j}{30})\}_{i,j=1}^{30}$ and considered the squared exponential kernel $k(\bx,\bx') = \exp(-\|\bx-\bx'\|^2/2\ell^2)$ with parameter $\ell = 0.1$. 
We also chose $\theta_*\sim \mc{N}(0,I_{900})$ and show a contour plot of $f(\bx) = \theta_*^{\top}\phi(\bx)$.
The black curve represents the boundary of the $\alpha = 0$ level set. 
We plot the sample allocations as the algorithm progresses (taking $\gamma = 0$). 
The initial distribution is mostly uniform with several sampling modes. 
In later rounds, the points nearest to the boundary of the level set, given by the black curve are sampled, and eventually, only the points with the smallest gaps (the most difficult regions) receive samples. 
As the number of samples in round $t$ is proportional to $2^{2t}$, we compute the sum of the designs weighted by the $2^{2t}$ to show the overall sampling design. 
Additionally, we plot the asymptotic allocation suggested by Theorem~\ref{thm:explicit_lower_bound}, namely $\lambda_\ast = \arg\min_{\lambda}\max_{x\in X} \|\phi(\bx)\|_{A^{(\gamma)}(\lambda)^{-1}}^2/(\theta_\ast^\top \phi(\bx) - \alpha)^2$. In particular, the weighted sum of the designs taken by \texttt{MELK} is nearly identical to $\lambda_\ast$.

\textbf{Gaussian Process Level Set Estimation.} For our main empirical evaluation, we focused on the Gaussian Process setting for the explicit level set problem. In the explicit level-set case we compare to LSE~\cite{gotovos2013active} and TruVar~\cite{bogunovic2016truncated}. 
We drew a function $f:[0,1]\rightarrow \mathbb{R}$ from the Gaussian process $\mc{N}(0,k(\bx,\bx'))$ where the kernel is a squared exponential kernel with parameter $\ell = .05$ and $[0,1]$ was uniformly discretized into 200 points. 
We assumed that the noise variance was $\sigma^2 = 1$ (high noise) and the threshold was chosen so that 10\% of the function values were above it.
In this setting, we implement a batched version of \texttt{MELK} that draws a fixed batch size of samples each round (namely 10) and then recomputes the design.
This reflects the practical constraint that experimenters may wish to collect a fixed number of samples at a time rather than a potentially growing amount.
To provide a fair comparison to the GP-based methods, we computed a posterior distribution on $f$ in each round. 
For each point we replaced our theoretically justified confidence intervals in the RKHS setting with confidence intervals arising from the posterior, namely $\hat{\mu}_t(\bx) \pm \beta^{1/2} \hat{\sigma}_t(\bx)$ where $\hat{\mu}_t, \hat{\sigma}_t$ are the posterior mean and standard deviations respectively. 
As in past works, we take $\beta^{1/2} = 3$ as theoretically justified choices of $\beta$ (eg. Theorem 1 of \cite{srinivas2009gaussian}) tend to be overly conservative. 
We also took $\gamma$ dropping like $1/i$ on the $i$-th round we computed the design. 
We ran 25 repetitions drawing a new choice of $f$ each run. Figure~\ref{fig:rkhs_well_spec} shows the average F1 score of the set of points each algorithm declares to be in $G_\alpha$ respectively with bars denoting $1$ standard error. 
Our algorithm performs very similarly to \texttt{TruVar} - an algorithm whose acquisition function samples in a way to reduce the average variance, unlike our method which tries to reduce the maximum variance.

Our second comparison is in Figure~\ref{fig:rkhs_cosine}: we took $f(x) = \cos(8\pi \bx)$, $\ell = .1$, $\sigma = .2$ (low noise regime) and chose the threshold so that $30\%$ of points were above it. We then considered 700 points uniformly in $[0,1]$.  In the appendix, we vary the underlying parameters of $\ell, \sigma^2$ to demonstrate the performance of these algorithms in different regimes.

%To evaluate the empirical performance of
%\texttt{MELK} and \texttt{MILK}, we compare them to state of 
%art Gaussian process baselines: \texttt{LSE} for explicit level set estimation and \texttt{LSE-imp} for implicit level set estimation. Both algorithms first appeared in \citet{gotovos2013active} and have been extended by \citet{bogunovic2016truncated, zanette2018robust, iwazaki2020bayesian}.
% \rob{Might be a good idea to remind the reader that LSE is a state-of-the-art Gaussian Process/Bayesian Optimization approach to level set estimation, and maybe also remind reader about other follow up papers that built on ideas in the gotovos thesis.} 
\textbf{Linear Implicit Case.} We additionally compare against \texttt{LSE-imp} in the linear setting where $\phi(\bx) = \bx$ on a benchmark example from the linear bandits literature designed to test the effectiveness of adaptive sampling algorithms \cite{soare2014best}. For $\bx_1, \cdots, \bx_{n} \in \R^{d}$, we take $\bx_1 = \bx_\ast = \theta_\ast = e_1$ and $\bx_2 = e_2$. The remaining $\bx_3, \cdots, \bx_n$  are set so that their first two coordinates are $\cos(\pi/4(1 + \xi))e_1$ and $\sin(\pi/4(1 + \xi))e_2$ for $\xi \sim \text{Unif}(-.2, .2)$. We set the threshold $\alpha = 0.5$, $n=100$, and $d=25$. 
Though it is far below $\alpha$, sampling arm $\bx_2$ provides the most information about which arms exceed the threshold. 
In this setting, we ran both algorithms with the exact confidence intervals as specified by their respective theoretical guarantees leading to large sample complexities, and we include further details in the appendix. Indeed, we see in \ref{fig:lin_implicit} that \texttt{MILK} outperforms \texttt{LSE-imp}. 
% We plot the average F1 score of the set of points each algorithm declares to be in $G_\alpha$ and $G_{\epsilon}$ respectively computed over $10$ independent trials with bars denoting $1$ standard error.
% We focus on \texttt{LSE} for the explicit setting as our primary baseline as \citet{zanette2018robust} notes that \texttt{TruVar} behave similarly to \texttt{LSE} in the setting we consider that the noise is homoscedastic. Additionally, \texttt{LSE} and \texttt{RMILE} have been shown to exhibit similar empirical performance with respect to F1 score after the initial samples \cite{zanette2018robust} since we consider longer sample horizons. 
% We also compare against the performance of \texttt{LSE} in the more general well specified and misspecified RKHS settings. In this setting, we implement a batched version of \texttt{MELK} that draws a fixed batch size of samples each round and then recomputes the design. The reflects the practical constraint that experimenters may wish to collect a fixed number of samples at a time. 
% In both cases we took $100$ points in $[0, 1]$. In the well-specified case shown in Figure~\ref{fig:rkhs_well_spec} and a random function $f$ from an RBF kernel. In the misspecified case in Figure~\ref{fig:rkhs_cosine}, we chose the function $f(\bx) = \cos(2\pi \bx)$. Again we plot the average F1 score with error bars denoting the standard error over 36 independent trials. 

\section{Conclusion}

In this work, we provide the first instance optimal algorithms for explicit and implicit level set estimation and provide theoretical and empirical justification for our algorithms. 
\bibliographystyle{apalike}
\bibliography{refs}

\clearpage
\onecolumn 
\appendix
\tableofcontents
\addcontentsline{toc}{section}{Appendices} % Add
% \textbf{Typos in the original main draft}

% An assumption for Theorem~\ref{thm:MELK_complex} was omitted that $\max(\Delta_{\min}(\alpha), \widetilde\beta) \geq \bar\beta(\alpha)$ for the sample complexity guarantee on line 228 but is not necessary for the containment on line 227. Additionally, $\bar\beta(\alpha)$ requires a leading constant of $4$ as opposed to 2. A similar assumption was also omitted from the statement of Theorem~\ref{thm:MILK_complex}. The sample complexity guarantee requires the additional assumption that $\max(\Delta_{\min}(\epsilon), \widetilde\beta) \geq \bar\beta(\epsilon)$. Additionally, the definition of $\bar\beta(\epsilon)$ requires a leading constant of $4$. These changes have been edited in the submitted supplement. 

\section{Impacts and Limitations}

Active learning uses a design objective to drive a sampling policy. In the simplest cases of active learning, such as regret minimization in standard multiarmed bandits, the relatively simple and unstructured setting leads to simple and easy to interpret sampling rules. For instance, the famed UCB algorithm simply forms confidence widths and pulls the arm with the largest upper bound. The transparency of this sampling rule makes UCB and algorithms like it inherently easy to diagnose and monitor in real time. 
% Monitoring active algorithms is crucial to guard against bias and other issues that occur when using algorithms in the wild. 
For past algorithms in level set estimation, the acquisition functions merit easy oversight. By contrast, our work introduces optimal design to the area of level set estimation. As we show in our work, this can lead to improved sample complexity both theoretically and empirically. However, as the sampling distributions are based on a more complicated objective, how the algorithm chooses which data to collect is less immediately obvious or intuitive. This may make detecting issues such as biased sampling harder to detect and guard against, and for any large scale use of these algorithms in the wild, special care should be given to understand which points are being sampled the most and why. Furthermore, a common issue for many active learning approaches, this work included, is the possibility of model mismatch for any assumptions made in the theoretical analysis. While this work removes the need for an assumed prior over the true function $f$, other assumptions are still needed for the analysis, such as the function $f$ not varying in time. 
If these assumptions are violated, the claims herein need not be true. 

Any assumption made in this paper may reasonably be considered a limitation on the work depending on the application domain, though we hope that analytical assumptions may be easily modified to alter the algorithms to the practitioner's needs. This is true, for instance in the case of all confidence widths we use. Another limitation of this work is computational complexity. The RIPS procedure necessary to compute estimates of individual function values relies on a robust estimator for each $\bx \in\X$. In this work, we leverage the Catoni estimator. While this is efficient for individual $\bx$'s, as we observed in our experiments, if the set $\X$ is large, this can become cumbersome. Additionally, how to best optimize the experimental design objectives is an active area of research and must be done carefully. Finally, our algorithms both suffer potentially bad logarithmic terms in the per-round sample complexity, and this can affect the real-world performance of \texttt{MELK} and \texttt{MILK}. The technique of \cite{katz2020empirical} may be able to avoid this.

\section{Summary of Gaussian Processes Approaches for Level Set Estimation}
In Table~\ref{table:lse-imp}, we briefly summarize past algorithmic approaches to level set estimation. In general,  past methods center around the design of an \textit{acquisition function} which at each time $t$ tells the algorithm which point to go sample. By contrast, the algorithms in this paper both use experimental design to to select batches of samples to go gather at one time. 

\begin{table}[thb]
\begin{tabular}{lll}
\hline
\multicolumn{1}{|l|}{\textbf{Algorithm}} & \multicolumn{1}{l|}{\textbf{Acquisition Function}}                                                                                   & \multicolumn{1}{l|}{\textbf{Theoretical guarantee}}                                       \\ \hline
Straddle                                 & $\arg\max_i u_i(t) - \tau \wedge \tau - \ell_i(t) $                                                                                  & None, $u_i(t)$ and $\ell_i(t)$ are set as $1.96 \cdot \sigma_{t-1}$.                      \\
LSE                                      & $\arg\max_i u_i(t) - \tau \wedge \tau - \ell_i(t) $                                                                                  & $\eta$-approximate solution in $T \lesssim \frac{\gamma_t\log(n/\delta)}{\eta^2}$ \\
TruVar                                   & $\arg\min_{x_i} \sum_{x_j}\sigma^2_{t-1 | x_i} (x_j) $                                                                               & $\eta$-approximate solution in $T \lesssim \frac{\Gamma_t\log(n/\delta)}{\eta^2}$ \\
RMILE                                    & $\begin{aligned}[t]
\arg\max_{x_i}\{ \mathbb{E} \sum (\mathbb{P}_{GP|x_i}(f(x_j) > \tau) \\ - \mathbb{P}_{GP}(f(x_j) > \tau)), \sigma^2(x_i)\}
\end{aligned}$&  can be shown to be similar to A-optimality, no complexity guarantee                                          \\
MELK                                     & G-optimal design                                                                                                                     & Matching upper and lower bounds in the linear case.                                      
\end{tabular}
\caption{Algorithms and theoretical guarantees for explicit LSE \label{table:lse}}
\end{table}

 \begin{table}[]
\begin{tabular}{lll}
\hline
\multicolumn{1}{|l|}{\textbf{Algorithm}} & \multicolumn{1}{l|}{\textbf{Acquisition function}}              & \multicolumn{1}{l|}{\textbf{Theoretical guarantee}}                               \\ \hline
LSE-imp                                  & $\arg\max_i \sigma^2(x_i)$                                      & $\eta$-approximate solution in $T \lesssim \frac{\Gamma_t\log(n/\delta)}{\eta^2}$ \\
MILK                                  & XY optimal design over vectors $\phi(x) - (1-\epsilon)\phi(x')$ & Upper bounds and matching lower for certain cases.                                   
\end{tabular}
\caption{Acquisition functions and theoretical guarantees for implicit level set estimation \label{table:lse-imp}}
\end{table}

\section{Robust estimators for function means}\label{sec:rips}

In order for the algorithm to declare whether points $\bx$ belong in $G_\alpha$ (or $G_\epsilon$ in the sequel) or not, we require an estimator of the function values $f(\bx)$. As we have introduced structure by assuming that $f$ is well approximated by a function $\theta_\ast$ in the RKHS $\H$, we seek an estimator that leverages this structure to provide accurate estimates of many arms given samples of only a few. As a warmup, in the linear case where $\phi(\cdot)$ is the identity map, one could form the least squares or regularized least squares estimate of $\theta_\ast$ denoted $\htheta$ and estimate the mean of any point $\bx$ as $\htheta^T\bx$. To sample to estimate $\theta_\ast$, 
optimal design procedures first compute a design $\lambda \in \triangle_{\X}$. Then for a specified number of samples $N$, it is common to use an efficient rounding procedure such as \citep{allen2017near} to compute an allocation of the $N$ samples to the arms $\X$ such that $\bx_i$ gets roughly $\lambda_i\cdot N$ samples \citep{fiez2019sequential, jun2020improved}. Efficient rounding procedures require that $N = \Omega(d)$, and while this is a minor assumption in the case of a linear RKHS where $\phi(\bx) = \bx$, in general $\phi(\bx)$ may be infinite dimensional, and naive rounding is not possible. Instead of performing rounding given design $\lambda$, one may instead sample from $\lambda$ directly and use inverse propensity scoring (IPS) which avoids bad dimensional factors but can have high variance. 

In this work, we leverage the RIPS estimator from \citep{camilleri2021highdimensional} which combines IPS with robust mean estimation and regularization to control variance and is presented in Algorithm~\ref{alg:rips}. RIPS requires a robust mean estimator for its performance and theoretical guarantees. 
% The recent work \citep{camilleri2021highdimensional} shows that the objective of the design optimization problem in Algorithm~\ref{alg:rips} and its gradient can be evaluated solely from kernel evaluations. 
% Thus, first order optimization methods can be used to compute the optimal design in Algorithm~\ref{alg:rips}. Hence, while we write algorithmic operations in terms of $\phi(\bx)$ for clarity, at no point does the algorithm need to compute the vectors $\phi(\bx)$. 
In Theorem~\ref{thm:robust_estimator}, we state the guarantee of this estimator. 
% Further, note that as in \citep{camilleri2021highdimensional} the dual representation of the potentially infinite dimensional estimator $\widehat{\theta}$ would be used in the algorithms of the paper: with $\widehat{\theta} = \sum_{i=1}^{|\mc{X}|}\alpha_i \phi(x_i)$, the inner products $\langle \widehat{\theta}, v \rangle = \sum_{i=1}^{|\mc{X}|}\alpha_i \langle  \phi(x_i), v \rangle$ are computed using the kernel matrix of $\mc{X}$.

\begin{algorithm}[] % was [tb]
\caption{\texttt{RIPS}: 
\textbf{R}obust \textbf{IPS} estimator \label{alg:rips}}
\label{alg:large}
\begin{algorithmic}[1]
\Require{Finite sets $\X \subset \R^d$ and $\V \subset \H$,
feature map $\phi: \R^d \rightarrow \H$, number of samples $\tau$, regularization $\gamma>0$, robust mean estimator $\widehat{\mu}: \R^* \rightarrow \R$}
% \begin{align}\label{eqn:RKHS_exp_design_in_RIPS_alg}
$$
    \lambda := \arg\min_{\lambda \in \triangle_{\X}} \max_{v \in \V}\|v\|_{\left(A^{(\gamma)}(\lambda)\right)^{-1}}
$$
\State{Randomly draw $\widetilde{x}_1, \ldots, \widetilde{x}_\tau$ from $\X$ according to $\lambda^*$}
\State{Set $W^{(v)} = \widehat{\mu}( \{ v^\top A^{(\gamma)}(\lambda^*)^{-1} \phi(\widetilde{x}_t) \widetilde{y}_t \}_{t=1}^\tau )$}
\newline
\Return{$\widehat{\theta} := \arg\min_{\theta} \max_{v \in \V} \frac{ | \langle \theta, v \rangle - W^{(v)}| }{\|v\|_{\left(A^{(\gamma)}(\lambda)\right)^{-1}}}$}
% \newline
% \Return{$\widehat{\theta}$}
\end{algorithmic}
\end{algorithm}

We next state the complete theoretical guarantee of the RIPS estimator. 

\begin{theorem}[Theorem 1, \citep{camilleri2021highdimensional}]\label{thm:supp_robust_estimator}
Consider the model $y = \langle \phi(\bx), \theta^\ast\rangle_{\H} + \zeta_{\bx} + \eta$ for misspecification $|\zeta_{\bx}| \leq h$ where it is assumed that $|y| \leq B$, $\E[\eta]=0$, and $\E[\eta^2]\leq\sigma^2$. Fix any finite sets $\X \subset \R^d$ and $\V \subset \H$, feature map $\phi: \R^d \rightarrow \H$, number of samples $\tau$ and regularization $\gamma>0$. If the RIPS procedure of Algorithm~\ref{alg:rips} is run with $\tfrac{\delta}{|\V|}$-robust mean estimator  $\widehat{\mu}(\cdot)$ and if $\tau \geq c_1 \log(|\V|/\delta)$ then with probability at least $1-\delta$, we have 
\begin{align*}
    \max_{\bv \in \V} \frac{ | W^{(v)} - \langle \theta_\ast, \bv \rangle| }{\|v\|_{\left(A^{(\gamma)}(\lambda)\right)^{-1}} } \leq  &\sqrt{\gamma} \|\theta_\ast \| + h \\
    & + c \sqrt{\tfrac{(B^2 + \sigma^2)}{\tau}\log(2|\V|/\delta) }
\end{align*}
Moreover, $W^{(v)} = \widehat{\mu}( \{ \bv^\top A^{(\gamma)}(\lambda)^{-1} \phi(\bx_t) y_t \}_{t=1}^\tau )$ can be replaced by $\langle \widehat{\theta},\bv\rangle$ by multiplying the RHS by a factor of $2$.
\end{theorem}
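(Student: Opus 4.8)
The plan is to fix each $\bv \in \V$ and decompose the RIPS error into a deterministic bias, arising from regularization and misspecification, plus a stochastic fluctuation controlled by the robust mean estimator, and then union bound over $\V$. Write $M := A^{(\gamma)}(\lambda)^{-1}$ and, for the $\tau$ points drawn i.i.d.\ from $\lambda$, set $Z_t^{(\bv)} := \bv^\top M \phi(\widetilde{\bx}_t)\widetilde{y}_t$, so that $W^{(\bv)} = \widehat{\mu}(\{Z_t^{(\bv)}\}_{t=1}^\tau)$. First I would compute the mean: substituting $\widetilde{y}_t = \langle \phi(\widetilde{\bx}_t),\theta_\ast\rangle + \zeta_{\widetilde{\bx}_t} + \eta_t$, the zero-mean noise drops out, and since $\E_{\bx\sim\lambda}[\phi(\bx)\phi(\bx)^\top] = A^{(0)}(\lambda) = A^{(\gamma)}(\lambda) - \gamma I$, one gets $\E[Z_t^{(\bv)}] = \langle\theta_\ast,\bv\rangle - \gamma\,\bv^\top M\theta_\ast + \bv^\top M\sum_\bx \lambda_\bx \zeta_\bx\phi(\bx)$. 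Thus the bias relative to $\langle\theta_\ast,\bv\rangle$ splits cleanly into a regularization term and a misspecification term.

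The second step is to bound both bias terms after normalizing by $\|\bv\|_M$. For the regularization term, Cauchy--Schwarz in the $M$-inner product gives $|\gamma\,\bv^\top M\theta_\ast| \le \gamma\|\bv\|_M\|\theta_\ast\|_M$, and since $M \preceq \gamma^{-1}I$ we have $\|\theta_\ast\|_M \le \gamma^{-1/2}\|\theta_\ast\|$, so the normalized contribution is exactly $\sqrt{\gamma}\|\theta_\ast\|$. For the misspecification term I would write $\|\sum_\bx\lambda_\bx\zeta_\bx\phi(\bx)\|_M = \max_{\|u\|_{M^{-1}}\le 1}\sum_\bx\lambda_\bx\zeta_\bx\langle u,\phi(\bx)\rangle$, then use $|\zeta_\bx|\le h$ together with a weighted Cauchy--Schwarz across the probability weights $\lambda$ to bound $\sum_\bx\lambda_\bx|\langle u,\phi(\bx)\rangle| \le (\sum_\bx\lambda_\bx\langle u,\phi(\bx)\rangle^2)^{1/2} = \|u\|_{A^{(0)}(\lambda)} \le \|u\|_{M^{-1}} \le 1$, where the last step uses $A^{(0)}(\lambda)\preceq A^{(\gamma)}(\lambda) = M^{-1}$. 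This shows the normalized misspecification bias is at most $h$, giving total deterministic bias $\sqrt{\gamma}\|\theta_\ast\| + h$, matching the first two terms of the claim.

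For the stochastic term I would bound the per-sample second moment: the boundedness assumption and $\E[\eta^2]\le\sigma^2$ give $\E[\widetilde{y}_t^2\mid\widetilde{\bx}_t]\le B^2+\sigma^2$, hence $\E[(Z_t^{(\bv)})^2] \le (B^2+\sigma^2)\,\bv^\top M A^{(0)}(\lambda) M\bv \le (B^2+\sigma^2)\|\bv\|_M^2$, using $A^{(0)}(\lambda)\preceq A^{(\gamma)}(\lambda)$ and $MA^{(\gamma)}(\lambda)M = M$. Feeding this variance proxy into the $(\delta/|\V|)$-robust mean estimator guarantee, valid once $\tau\ge c_1\log(|\V|/\delta)$, yields $|W^{(\bv)}-\E[Z^{(\bv)}]|\le c\,\|\bv\|_M\sqrt{(B^2+\sigma^2)\tau^{-1}\log(2|\V|/\delta)}$ with probability $1-\delta/|\V|$; a union bound over $\bv\in\V$ makes all of these hold simultaneously with probability $1-\delta$. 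Combining with the bias bound by the triangle inequality and maximizing over $\V$ gives the stated inequality for $W^{(\bv)}$. The $\langle\htheta,\bv\rangle$ version then follows because $\theta_\ast$ is feasible in the $\arg\min$ defining $\htheta$, so $\max_\bv\|\bv\|_M^{-1}|\langle\htheta,\bv\rangle - W^{(\bv)}| \le \max_\bv\|\bv\|_M^{-1}|\langle\theta_\ast,\bv\rangle - W^{(\bv)}|$, and one further triangle inequality costs only the factor of two.

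The main obstacle I anticipate is the misspecification bias bound: obtaining the clean additive $h$, rather than a design- or dimension-dependent inflation, hinges precisely on the $A^{(0)}(\lambda)\preceq A^{(\gamma)}(\lambda)$ comparison and the weighted Cauchy--Schwarz step above, and one must use that $\lambda$ is the \emph{same} design actually used for sampling so that $\E_{\bx\sim\lambda}[\phi(\bx)\phi(\bx)^\top]$ equals $A^{(0)}(\lambda)$ exactly. A secondary subtlety is that ordinary sub-Gaussian concentration is unavailable here because $Z_t^{(\bv)}$ is controlled only in second moment, which is exactly why a robust (e.g.\ Catoni) estimator is required and why the per-coordinate failure probability must be preset to $\delta/|\V|$ ahead of the union bound.
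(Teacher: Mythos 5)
Your proposal is correct, and it is worth noting that this paper does not prove the statement at all: it is quoted verbatim (with explicit constants $c_1=2$, $c=4$ for the Catoni estimator) from Theorem 1 of Camilleri et al.\ (2021), so the only comparison available is with that source. Your reconstruction follows essentially the same route as the original RIPS analysis --- computing $\E[\bv^\top A^{(\gamma)}(\lambda)^{-1}\phi(\bx)y]$ to isolate the $\sqrt{\gamma}\|\theta_\ast\|$ regularization bias and the additive $h$ misspecification bias via $A^{(0)}(\lambda)\preceq A^{(\gamma)}(\lambda)$, bounding the per-sample second moment by $(B^2+\sigma^2)\|\bv\|^2_{A^{(\gamma)}(\lambda)^{-1}}$, invoking the $\delta/|\V|$-robust mean estimator with a union bound, and obtaining the factor of $2$ for $\langle\widehat{\theta},\bv\rangle$ from the feasibility of $\theta_\ast$ in the min--max defining $\widehat{\theta}$ --- with no gaps.
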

For RIPS, we leverage Catoni's estimator 
\citep{lugosi2019mean} for which $c_1=2$ and $c=4$ suffice. 

\section{Proofs for Explicit Level Set Estimation}

\subsection{Lower Bound}
\begin{proof}[Proof of Theorem~\ref{thm:explicit_lower_bound}]
Recall that we have assumed that $h = 0$ and $\phi(\bx) = \bx$. 
We begin with a result of \cite{fiez2019sequential} that will be useful here. 
\begin{lemma}[\cite{fiez2019sequential}, Remark 2]\label{lem:proj_onto_diff}
The projection onto the closure of the set $\{\theta \in \R^d: \theta^T\bx < \alpha\}$ under the $\|\cdot\|_{A(\lambda)}$ norm 
is given by 
$$\theta_{x} := \theta - \frac{(\theta^T\bx - \alpha)A(\lambda)^{-1}\bx}{\|\bx\|_{A(\lambda)^{-1}}^2}.$$
\end{lemma}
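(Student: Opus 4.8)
The plan is to recognize this as the standard problem of projecting a point onto a halfspace, but under the Mahalanobis norm induced by $A(\lambda)$ rather than the Euclidean one. Concretely, $\theta_{\bx}$ should solve the convex program $\min_{\theta'} \|\theta' - \theta\|_{A(\lambda)}^2$ subject to $\theta'^\top \bx \le \alpha$. I would verify the claimed closed form directly using the variational (obtuse-angle) characterization of the projection onto a closed convex set, which avoids invoking KKT machinery and keeps the argument self-contained. Throughout I write $\langle u, v\rangle_{A(\lambda)} := u^\top A(\lambda) v$ for the weighted inner product, so that $\|v\|_{A(\lambda)}^2 = \langle v, v\rangle_{A(\lambda)}$, and I restrict to the relevant regime $\theta^\top \bx \ge \alpha$ (when $\theta^\top \bx < \alpha$ the point already lies in the set and is its own projection); this is exactly the case arising in the lower-bound construction, where one perturbs $\theta_\ast$ toward the complementary halfspace.

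First I would check feasibility: a direct computation gives $\theta_{\bx}^\top \bx = \theta^\top\bx - (\theta^\top\bx - \alpha)\,\tfrac{\bx^\top A(\lambda)^{-1}\bx}{\|\bx\|_{A(\lambda)^{-1}}^2} = \alpha$, using $\|\bx\|_{A(\lambda)^{-1}}^2 = \bx^\top A(\lambda)^{-1}\bx$. Hence $\theta_{\bx}$ lies on the boundary hyperplane, and in particular in the closure of the set.

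Next I would invoke the obtuse-angle criterion: for a closed convex set $C$ and the inner product $\langle\cdot,\cdot\rangle_{A(\lambda)}$, a point $\theta_{\bx}\in C$ is the $A(\lambda)$-projection of $\theta$ iff $\langle \theta - \theta_{\bx}, \theta' - \theta_{\bx}\rangle_{A(\lambda)} \le 0$ for all $\theta'\in C$. Here $\theta - \theta_{\bx} = \tfrac{(\theta^\top\bx - \alpha)}{\|\bx\|_{A(\lambda)^{-1}}^2} A(\lambda)^{-1}\bx$, so the key simplification is that the $A(\lambda)$ in the inner product cancels the $A(\lambda)^{-1}$ in the residual: $\langle A(\lambda)^{-1}\bx, \theta' - \theta_{\bx}\rangle_{A(\lambda)} = \bx^\top(\theta' - \theta_{\bx}) = \theta'^\top\bx - \alpha$. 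Combining, the inner product equals $\tfrac{(\theta^\top\bx - \alpha)(\theta'^\top\bx - \alpha)}{\|\bx\|_{A(\lambda)^{-1}}^2}$, a product of a nonnegative factor (by the assumption $\theta^\top\bx \ge \alpha$) and a nonpositive one (since $\theta'\in C$ forces $\theta'^\top\bx \le \alpha$), hence $\le 0$. This establishes the claim.

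The only real subtlety I would flag is the sign bookkeeping around which halfspace $\theta$ sits in: the displayed formula is the projection precisely when $\theta^\top\bx \ge \alpha$, whereas if $\theta^\top\bx < \alpha$ it would move $\theta$ in the wrong direction and the true projection is $\theta$ itself. Everything else is a one-line cancellation exploiting that $A(\lambda)$ is symmetric positive definite (so $A(\lambda)^{-1}$ exists and $\langle\cdot,\cdot\rangle_{A(\lambda)}$ is genuinely an inner product), together with the fact that the residual $\theta - \theta_{\bx}$ is proportional to $A(\lambda)^{-1}\bx$, i.e. it is $A(\lambda)$-orthogonal to the boundary hyperplane. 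I expect no computational obstacle; the main care is simply stating the regime hypothesis so that the formula is applied correctly within the lower-bound argument.
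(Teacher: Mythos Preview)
Your argument is correct. The paper does not actually prove this lemma; it simply states it and attributes it to \cite{fiez2019sequential}, Remark 2, so your self-contained verification via the obtuse-angle characterization of projection onto a closed convex set supplies more than the paper itself does. The only caveat you already identified—that the displayed formula is the projection only when $\theta^\top\bx \ge \alpha$, otherwise $\theta$ is its own projection—is exactly the regime used in the lower-bound argument, so nothing further is needed.
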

By \cite{kaufmann2016complexity}, we have that the any $\delta$-PAC algorithm for all-$\alpha$ requires 
$$ \min_\lambda \frac{KL(1-\delta, \delta)}{\min_{\theta' \in \text{Alt}(\theta_\ast)}\|\theta' - \theta_\ast\|_{A(\lambda)}}$$
where $\text{Alt}(\theta_\ast)$ is the set of alternates such that $G_\alpha(\theta_\ast) \neq G_\alpha(\theta')$ for any $\theta' \in \text{Alt}(\theta_\ast)$. The set of alternates may be decomposed as 
\begin{align*}
    \Alt(\theta_\ast) = \left(\bigcup_{\bx\in G_{\alpha}(\theta_\ast)} \{\theta': \bx\not \in G_{\alpha}(\theta')\} \right) \cup \left(\bigcup_{\bx\in G_{\alpha}(\theta_\ast)^c} \{\theta': \bx\in G_{\alpha}(\theta')\} \right)
\end{align*}
Note that $\bx \in G_\alpha(\theta_\ast)\  \iff\  \theta_\ast^T\bx > \alpha$. Hence, the set of alternates for any $\bx \in G_\alpha(\theta_\ast)$ such that $\bx \in G_\alpha^c(\theta')$ for any $\theta' \in \Alt(\theta_\ast)$
is given by 
$$A_{\bx} : =\{\theta \in \R^d: \theta^T\bx < \alpha\}. $$
Next note that $\bx \in G_\alpha^c(\theta_\ast) \iff \theta_\ast^T\bx < \alpha.$ Hence, for any $\bx \in G_\alpha^c(\theta_\ast)$ the set of alternates such that $\bx \in G_\alpha(\theta')$ for any $\theta' \in \Alt(\theta_\ast)$ is given by 
$$A_{\bx} : = \{\theta \in \R^d: \theta^T\bx > \alpha\}. $$
Next, we discuss how to project onto $A_{\bx}$. As this set is open, to be precise, we should take a point in the interior and consider the limit for a sequence approaching the boundary. For brevity, we simply project onto the closure and consider the closures of the $A_{\bx}$ sets. 
Using the decomposition of $\Alt(\theta_\ast)$ we have that
\begin{align*}
    \min_{\theta' \in \text{Alt}(\theta_\ast)}\|\theta' - \theta_\ast\|_{A(\lambda)} = \min_{\bx} \min_{\theta' \in A_{\bx}}\|\theta' - \theta_\ast\|_{A(\lambda)} = \min_{\S \in \{G_\alpha,G_\alpha^c \}}\min_{\bx \in \S} \min_{\theta \in A_{\bx}}\|\theta' - \theta_\ast\|_{A(\lambda)}.
\end{align*}
For $\bx \in G_\alpha(\theta_\ast)$, using Lemma~\ref{lem:proj_onto_diff} and recalling the definition of the set $\theta_{\bx}$ therein,
$$\min_{\theta' \in A_{\bx}}\|\theta' - \theta_\ast\|_{A(\lambda)} = \min_{\theta' \in \{\theta \in \R^d: \theta^T\bx \leq \alpha\}}\|\theta' - \theta_\ast\|_{A(\lambda)} =  \|\theta_{\bx} - \theta_\ast\|_{A(\lambda)}.$$
The statement for points in $G_\alpha^c$ follows identically. 
Hence, 
\begin{align*}
    \min_{\theta' \in \text{Alt}(\theta_\ast)}\|\theta' - \theta_\ast\|_{A(\lambda)} = \min_{\bx} \|\theta_{\bx} - \theta_\ast\|_{A(\lambda)}
\end{align*}
Note that 
$$\|\theta_{\bx} - \theta_\ast\|_{A(\lambda)} = \frac{(\theta_\ast^T(\bx' - \bx) - \alpha)^2}{2\|\bx\|_{A(\lambda)^{-1}}^2} $$
by Theorem 2 of \cite{fiez2019sequential}. 
Hence, any $\delta$-PAC algorithm requires at least
\begin{align*}
    2\min_{\lambda}\max_{\bx}\frac{\|\bx\|_{A(\lambda)^{-1}}^2}{(\theta_\ast^T\bx - \alpha)^2} KL(1-\delta, \delta)
\end{align*}
samples in expectation. Noting that the binary entropy $KL(1-\delta, \delta) \geq \log(1/2.4\delta)$ completes the proof. 
\end{proof}

\subsection{Upper Bound}

Next, we restate Theorem~\ref{thm:MELK_complex} that bounds the complexity of \texttt{MELK}. 

\begin{theorem}
Fix $\delta >0$, threshold $\alpha > 0$, tolerance $\widetilde\beta$, and regularization $\gamma\geq 0$. Define $\Delta_{\min}(\alpha) := \min|\phi(\bx)^T\theta_\ast - \alpha|$. 
Define also 
\begin{align*}
    \bar\beta(\alpha) = \min \{ \beta>0 : 4(\sqrt{\gamma} \|\theta_* \| + h)(2+ \sqrt{f(\X,\left\{\phi(\bx)| \bx \in\X, |\phi(\bx)^T\theta_\ast - \alpha| \leq  \beta\right\};\gamma)}) \leq  \beta \}.
\end{align*}
With probability at least $1-\delta$,  
\texttt{MELK} returns a set $\widehat{\mc{R}} =  (\X\setminus \hB_{t})$ at time $T_{\delta}$
such that
\begin{align*}
    \{\bx\in \mc{X}: f(\bx) \geq \alpha + \bar{\beta}(\alpha)\} \subseteq \widehat{\mc{R}} \subseteq \{\bx\in \mc{X}: f(\bx) \geq \alpha - \widetilde{\beta} -  \bar{\beta}(\alpha)\}
\end{align*}
and for any $\alpha$, $\widetilde\beta$ such that $\max(\Delta_{\min}(\alpha), \widetilde\beta) \geq \bar\beta(\alpha)$
\begin{align*}
    T_\delta \leq 256(B^2 + \sigma^2)\min_{\lambda \in \tX}  \max_{\bx\in \X}  \frac{\|  \phi(\bx)\|_{(A(\lambda)+\gamma I)^{-1}}^2}{\max\{(\phi(\bx)^T\theta_\ast - \alpha)^2, \Tilde{\beta}^2\}}&\log\left(\frac{4|\X|^2\lceil  \log_2(4(\Delta_{\min}(\alpha)\vee\widetilde{\beta})^{-1}\rceil^2}{\delta}\right)\\
    & + 2\log(|\X|/\delta)\lceil \log_2(4(\Delta_{\min}(\alpha)\vee\widetilde{\beta})^{-1}\rceil
\end{align*}
\end{theorem}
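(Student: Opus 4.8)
The plan is to run the standard ``elimination via confidence intervals'' argument, but carefully track the phase structure induced by the halving schedule $2^{-t}$. First I would establish a clean \emph{good event} on which all RIPS estimators are simultaneously accurate. Concretely, on round $t$ the algorithm draws $N_t \geq q_t$ samples with failure budget $\delta_t = \delta/(2t^2)$ and estimates the means $\phi(\bx)^\top\theta_\ast$ for $\bx \in \A_t$. Applying Theorem~\ref{thm:robust_estimator} with $\V = \{\phi(\bx): \bx \in \A_t\}$ and plugging in the choice of $N_t$, I would show that on an event $\mathcal{E}$ of probability at least $1-\sum_t \delta_t \geq 1-\delta$ (using $\sum_t t^{-2} \leq \pi^2/6 < 2$), we have for every active $\bx$ and every round $t$ simultaneously
\begin{align*}
    |\phi(\bx)^\top(\htheta_t - \theta_\ast)| \leq 2(\sqrt{\gamma}\|\theta_\ast\| + h)\|\phi(\bx)\|_{A^{(\gamma)}(\lambda_t)^{-1}} + 2^{-t}.
\end{align*}
The design $\lambda_t$ minimizes $g(\lambda;\A_t;\gamma) = \max_{\bx\in\A_t}\|\phi(\bx)\|^2_{A^{(\gamma)}(\lambda_t)^{-1}}$, so the normalized bound of Theorem~\ref{thm:robust_estimator} turns into a uniform additive bound; the $2^{-t}$ term comes directly from matching $N_t$ to $16\cdot 2^{2t} g(\lambda_t;\A_t;\gamma)(B^2+\sigma^2)\log(\cdot)$.

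Second, I would prove \textbf{correctness}. Conditioning on $\mathcal{E}$, I would show by induction on $t$ that every arm surviving into $\A_{t+1}$ satisfies $|\phi(\bx)^\top\theta_\ast - \alpha| \leq c\cdot 2^{-t}$ for an absolute constant $c$ (this is the shrinking-gap invariant), and that no arm is ever misclassified: if $\htheta_t^\top\phi(\bx) > \alpha + 2\cdot 2^{-t}$ then the confidence bound forces $\phi(\bx)^\top\theta_\ast > \alpha$ up to the irreducible misspecification floor, and symmetrically for $\hB_t$. The irreducible floor is exactly what $\bar\beta(\alpha)$ captures: once the active gap $2^{-t}$ shrinks below the misspecification term $(\sqrt\gamma\|\theta_\ast\|+h)\sqrt{g(\cdot)}$, arms can no longer be separated, and the definition of $\bar\beta(\alpha)$ is precisely the fixed point $\beta \geq 4(\sqrt\gamma\|\theta_\ast\|+h)(2+\sqrt{\min_\lambda\max_{|\phi(\bx)^\top\theta_\ast-\alpha|\leq\beta}\|\phi(\bx)\|^2_{A^{(\gamma)}(\lambda)^{-1}}})$ of this separation condition. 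From this I would read off the sandwich $\{f \geq \alpha+\bar\beta(\alpha)\} \subseteq \widehat{\mathcal{R}} \subseteq \{f \geq \alpha - \widetilde\beta - \bar\beta(\alpha)\}$, handling the $h \mapsto f$ conversion via $|f(\bx)-\phi(\bx)^\top\theta_\ast|\leq h$.

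Third, for the \textbf{sample complexity} I would bound the stopping round $T^\star$ and then sum $\sum_{t\leq T^\star} N_t$. The halting condition guarantees $T^\star \leq \lceil\log_2(4/(\Delta_{\min}(\alpha)\vee\widetilde\beta))\rceil$, since any arm with true gap exceeding $\Delta_{\min}(\alpha)\vee\widetilde\beta$ is classified once $2^{-t}$ drops below a constant fraction of its gap. For the per-round cost, the crucial monotonicity step is that $\A_t \subseteq \{\bx: |\phi(\bx)^\top\theta_\ast - \alpha| \leq c\cdot 2^{-(t-1)}\}$, so I can replace $g(\lambda_t;\A_t;\gamma)$ by the design value restricted to small-gap arms and bound $2^{2t}g(\lambda_t;\A_t;\gamma)$ by the instance-dependent quantity $\min_\lambda\max_{\bx}\|\phi(\bx)\|^2_{A^{(\gamma)}(\lambda)^{-1}}/\max\{(\phi(\bx)^\top\theta_\ast-\alpha)^2,\widetilde\beta^2\}$. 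Summing the geometric-in-$t$ contributions against the $\log(t^2|\X|^2/\delta)$ factor yields the stated product of the design term, $\log((\Delta_{\min}(\alpha)\vee\widetilde\beta)^{-1})$, and $\log(|\X|/\delta)$, plus the additive $2\log(|\X|/\delta)$ floor term from the $\max\{q_t, 2\log(|\X|/\delta)\}$ in the definition of $N_t$.

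The main obstacle I anticipate is the third step: relating the \emph{adaptive, per-round} design value $2^{2t}g(\lambda_t;\A_t;\gamma)$ to the \emph{single fixed} instance-optimal design $\min_\lambda\max_\bx \|\phi(\bx)\|^2/(\phi(\bx)^\top\theta_\ast-\alpha)^2$. The subtlety is that $\lambda_t$ is optimized only against the surviving set $\A_t$, which itself is a random, data-dependent object; I must argue that restricting the max to small-gap arms and then paying for the $2^{-t}$-scale gap is never worse (up to constants) than the global minimax design, even though $\A_t$ is not the literal small-gap level set. Converting the $\max$ over the shrinking active set into the $\max$ over all arms weighted by their individual gaps --- so that the geometric sum telescopes to the instance-dependent complexity rather than to $T^\star$ copies of the worst case --- is where the real care is required.
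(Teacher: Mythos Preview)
Your proposal is correct and tracks the paper's proof closely: good event via the RIPS guarantee plus union bound over rounds, the shrinking-gap invariant $\A_t \subseteq \S_t := \{\bx:|\phi(\bx)^\top\theta_\ast-\alpha|\leq 4\cdot 2^{-t}\}$, the correctness sandwich via the fixed point $\bar\beta(\alpha)$, and the sample-complexity bound by summing $N_t$ up to $T^\star=\lceil\log_2(4(\Delta_{\min}(\alpha)\vee\widetilde\beta)^{-1})\rceil$. On your anticipated obstacle, the paper's resolution is simpler than you fear --- there is no telescoping: one bounds $\sum_{t\leq T^\star} 2^{2t} g(\lambda_t;\A_t;\gamma)$ by $T^\star$ times the maximum term, uses $\A_t\subseteq\S_t$ to replace the random active set by the deterministic level set, swaps $\max_t\min_\lambda \leq \min_\lambda\max_t$, and then for $\bx\in\S_t$ substitutes $2^{2t}\leq 16/\max\{(\phi(\bx)^\top\theta_\ast-\alpha)^2,\widetilde\beta^2\}$; the multiplicative $T^\star=O(\log(\Delta_{\min}^{-1}))$ factor is simply accepted and is exactly the $\log((\Delta_{\min}(\alpha)\vee\widetilde\beta)^{-1})$ appearing in the stated bound.
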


Recall the definition of the set $G_\alpha := \{\bx \in \X: f(\bx) > \alpha\}$. 

\begin{lemma}\label{lem:high_prob_event_melk}
For any $\V \subset \X$ define $f(\X,\V;\gamma) = \min_{\lambda\in\triangle_{\X}} \max_{\bv \in \V} \| \bv \|_{( \sum_{x \in \X} \lambda_x \phi(x)\phi(x)^\top + \gamma I)^{-1}}^2$.\\
In each round $t$, define the event
$$\cE_t = \{|\bx^T(\htheta_t - \theta_\ast)| \leq 2^{-t} + \left(\sqrt{\gamma}\|\theta_\ast\| + h \right) \sqrt{f(\X, \A_t; \gamma)} \ \forall \ \bx \in \A_t\}$$\\
Holds $\P(\bigcup_{t=1}^\infty \cE_t^c) \leq \delta$.
\end{lemma}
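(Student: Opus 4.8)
The plan is to prove $\cE_t$ for a single fixed round $t$ with probability at least $1-\delta_t$ by invoking the RIPS guarantee (Theorem~\ref{thm:robust_estimator}), and then to union-bound over all rounds using the per-round confidence $\delta_t=\delta/2t^2$, so that $\sum_{t\geq 1}\delta_t=\tfrac{\pi^2}{12}\delta<\delta$. Everything reduces to translating the generic RIPS bound into the specific active-set statement of $\cE_t$.

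First I would fix $t$ and apply Theorem~\ref{thm:robust_estimator} to the estimator $\htheta_t$ of round $t$, taking the evaluation set to be $\{\phi(\bx):\bx\in\A_t\}$, the sampling distribution $\lambda_t$, the budget $\tau=N_t$, regularization $\gamma$, and confidence $\delta_t$. I would first verify the sample-size precondition $N_t\geq 2\log(|\A_t|/\delta_t)$, which is exactly what the $\max\{q_t,\,2\log(|\X|/\delta)\}$ in the definition of $N_t$ is engineered to guarantee (using $|\A_t|\leq|\X|$ and noting that $q_t$ already carries the $\log(2t^2|\X|^2/\delta)$ factor and the leading $16\cdot 2^{2t}(B^2+\sigma^2)$). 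On the resulting $1-\delta_t$ event, Theorem~\ref{thm:robust_estimator} gives, for every $\bx\in\A_t$,
\[|\langle\phi(\bx),\htheta_t-\theta_\ast\rangle|\leq\|\phi(\bx)\|_{A^{(\gamma)}(\lambda_t)^{-1}}\Big(2\sqrt{\gamma}\|\theta_\ast\|+2h+4\sqrt{\tfrac{B^2+\sigma^2}{N_t}\log(2|\A_t|/\delta_t)}\Big).\]

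Next I would exploit the optimal-design step. Since $\lambda_t$ minimizes $g(\lambda;\A_t;\gamma)=\max_{\bx\in\A_t}\|\phi(\bx)\|^2_{A^{(\gamma)}(\lambda)^{-1}}$, every active arm satisfies $\|\phi(\bx)\|^2_{A^{(\gamma)}(\lambda_t)^{-1}}\leq g(\lambda_t;\A_t;\gamma)=f(\X,\A_t;\gamma)$; this is precisely why the quantifier in $\cE_t$ ranges over $\A_t$ and not all of $\X$. Substituting this into the displayed bound turns the bias part into a constant multiple of $(\sqrt{\gamma}\|\theta_\ast\|+h)\sqrt{f(\X,\A_t;\gamma)}$. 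For the stochastic part I would plug in $q_t=16\cdot 2^{2t}f(\X,\A_t;\gamma)(B^2+\sigma^2)\log(2t^2|\X|^2/\delta)$ and $N_t\geq q_t$, which forces $4\sqrt{f(\X,\A_t;\gamma)}\sqrt{(B^2+\sigma^2)\log(2|\A_t|/\delta_t)/N_t}\leq 2^{-t}$. Adding the two contributions recovers the defining inequality of $\cE_t$, and a union bound over $t$ then yields $\P(\bigcup_{t\geq1}\cE_t^c)\leq\sum_{t\geq1}\delta_t\leq\delta$.

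The main obstacle I anticipate is not any single deep step but the bookkeeping that makes all the per-round quantities line up: one must check that the \emph{single} choice of $N_t$ simultaneously meets the RIPS precondition at confidence $\delta_t$ and drives the stochastic error below $2^{-t}$ after the design bound $\|\phi(\bx)\|^2\leq f(\X,\A_t;\gamma)$ is applied. This requires reconciling the slightly different logarithmic arguments ($\log(2t^2|\X|^2/\delta)$ inside $q_t$ versus $\log(2|\A_t|/\delta_t)$ in the theorem) and the factor-of-two bookkeeping between the $W^{(v)}$ form and the $\langle\htheta,v\rangle$ form of the RIPS guarantee, so that the clean coefficient $(\sqrt{\gamma}\|\theta_\ast\|+h)$ appearing in $\cE_t$ emerges (the residual constant being harmlessly absorbed in the downstream definition of $\bar\beta(\alpha)$).
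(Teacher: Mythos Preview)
Your proposal is correct and follows essentially the same route as the paper's own proof: fix a round, invoke the RIPS guarantee with $\V=\{\phi(\bx):\bx\in\A_t\}$ at confidence $\delta_t=\delta/2t^2$, replace each $\|\phi(\bx)\|_{A^{(\gamma)}(\lambda_t)^{-1}}$ by $\sqrt{f(\X,\A_t;\gamma)}$ using the optimality of the design, plug in $N_t\ge q_t$ to force the stochastic term to $2^{-t}$, and then union-bound over $t$. The only difference is cosmetic: the paper phrases the RIPS invocation as a per-arm bound at level $\delta_t/|\X|^2$ followed by a union bound over $\A_t$, whereas you call the uniform version of Theorem~\ref{thm:robust_estimator} directly; and, as you note, the paper silently drops the factor of $2$ in front of $(\sqrt\gamma\|\theta_\ast\|+h)$ that the $\htheta$-form of the theorem actually carries, while you keep track of it and correctly observe it is absorbed downstream.
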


\begin{proof}
Using Theorem~\ref{thm:robust_estimator}, for any $\bx \in \A_t$
we have that with probability at least $1-\delta_t/|\X|^2$
\begin{align*}
    |\bx^T(\htheta_t - \theta_\ast)| & \leq \|\bx\|_{\left(\sum_{\bx\in\X}\lambda_{\bx} \bx\bx^T+\gamma I\right)^{-1}}\left(\sqrt{\gamma}\|\theta_\ast\| + h + c \sqrt{\tfrac{(B^2 + \sigma^2)}{N_t}\log(2t^2|\X|^2/\delta)}\right) \\
    & \leq \sqrt{f(\X, \A_t; \gamma)}\left(\sqrt{\gamma}\|\theta_\ast\| + h + 2^{-t}/\sqrt{f(\X, \A_t; \gamma)}\right)  \\
    & \leq 2^{-t} + \left(\sqrt{\gamma}\|\theta_\ast\| + h \right) \sqrt{f(\X, \A_t; \gamma)}
\end{align*}
Since $|\A_t| \leq |\X|^2$, $\cE_t$ holds for all $\bx \in \A_t$ with probability $1-\delta_t$ via a union bound. Taking a second union bound over rounds, we have that 
$$\P\left(\bigcup_{t=1}^\infty \cE_t^c\right) \leq \sum_{t=1}^\infty \P(\cE_t^c) \leq \sum_{t=1}^\infty \delta_t  = \sum_{t=1}^\infty \frac{\delta}{2t^2} \leq \delta  $$
\end{proof}

Define 
\begin{align*}
    \bar t &= \max \{ t : (\sqrt{\gamma} \|\theta_* \|_2 + h)(2+ \sqrt{f(\X,\left\{\bx \in \X: |\bx^T\theta_\ast - \alpha| \leq 2^{-t + 2}\right\};\gamma)}) \leq 2^{-t} \} .
\end{align*}
As we will see in Lemmas~\ref{lem:melk_active_good} and \ref{lem:melk_active_bad},
$$\A_t \subset \left\{\bx \in \X: |\bx^T\theta_\ast - \alpha| \leq 2^{-t + 1}\right\}.$$
Thus for $t\leq \bar t$, holds on $\bigcap_t \cE_t$ that 
\begin{align*}
    \forall \bx \in \A_t \;, \; |\bx^T(\htheta_t - \theta_\ast)| \leq 2 \cdot 2^{-t}.
\end{align*}

\begin{lemma}\label{lem:melk_good_arms}
On $\bigcap_t \cE_t$, when $t\leq \bar t$ holds $\hG_t \subset G^{\phi}_\alpha :
= \{ \bx \ : \    \phi(\bx)^T \theta_\ast > \alpha\}$. 
\end{lemma}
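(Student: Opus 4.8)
The plan is to certify each of \texttt{MELK}'s "good-arm'' decisions against the true function value using the confidence width already established for rounds $t \le \bar t$. First I would recall from Algorithm~\ref{alg:MELK} the exact insertion rule: an arm $\bx$ enters $\hG_{s+1}$ during round $s$ exactly when $\bx \in \A_s$ and $\htheta_s^\top\phi(\bx) > \alpha + 2\cdot 2^{-s}$. Since $\hG_t$ only grows across rounds, every element of $\hG_t$ was inserted at some round $s$ with $s+1 \le t$, hence $s < t \le \bar t$; the small but essential observation is that this forces $s \le \bar t$, so the sharpened round-$s$ guarantee is available for each such arm.

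Next I would invoke the displayed consequence of conditioning on $\bigcap_t \cE_t$, namely that for every round $s \le \bar t$ and every $\bx \in \A_s$ one has $|\phi(\bx)^\top(\htheta_s - \theta_\ast)| \le 2\cdot 2^{-s}$. I would not re-derive this; it follows from $\cE_s$ together with the active-set containment $\A_s \subset \{\bx : |\phi(\bx)^\top\theta_\ast - \alpha| \le 2^{-s+1}\}$ of Lemmas~\ref{lem:melk_active_good} and \ref{lem:melk_active_bad}, the monotonicity of $f(\X,\cdot;\gamma)$ in its set argument, and the defining inequality of $\bar t$, which forces $(\sqrt{\gamma}\|\theta_\ast\| + h)\sqrt{f(\X,\A_s;\gamma)} \le 2^{-s}$.

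With this width in hand the conclusion is a one-line estimate: for any $\bx$ added to $\hG$ at a round $s \le \bar t$,
\[
\phi(\bx)^\top\theta_\ast \;\ge\; \htheta_s^\top\phi(\bx) - |\phi(\bx)^\top(\htheta_s - \theta_\ast)| \;\ge\; \htheta_s^\top\phi(\bx) - 2\cdot 2^{-s} \;>\; (\alpha + 2\cdot 2^{-s}) - 2\cdot 2^{-s} \;=\; \alpha,
\]
so $\bx \in G^\phi_\alpha$. Since every element of $\hG_t$ arose this way, this yields $\hG_t \subset G^\phi_\alpha$ and closes the argument.

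Because the confidence event, the active-set containment, and the resulting $2\cdot 2^{-s}$ width are all established upstream, there is no genuinely hard computation here. The one place I expect to have to be careful — and thus the main obstacle — is the round bookkeeping: an arm in $\hG_t$ may have been certified several rounds earlier, so I must apply the width $2\cdot 2^{-s}$ at the round $s$ of its insertion rather than at round $t$, and verify the index alignment $s+1 \le t \Rightarrow s \le \bar t$ that legitimizes using the sharpened bound for every member of $\hG_t$.
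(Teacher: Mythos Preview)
Your proposal is correct and follows essentially the same argument as the paper: both use that membership in $\hG_t$ is witnessed by an insertion round $s \le \bar t$ at which the algorithm's trigger $\htheta_s^\top\phi(\bx) > \alpha + 2\cdot 2^{-s}$ fired, then subtract the $2\cdot 2^{-s}$ confidence width (the displayed consequence of $\bigcap_t \cE_t$ for $s\le \bar t$) to conclude $\phi(\bx)^\top\theta_\ast > \alpha$. Your version is in fact slightly more careful than the paper's about the round indexing (you correctly note $s+1\le t$ so $s<\bar t$, whereas the paper writes $t'\le t$), but the substance is identical.
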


\textbf{Remark:} If $h = 0$, $G^\phi_\alpha = G_\alpha$. 

\begin{proof}
\begin{align*}
    \bx \in \hG_t & \iff \exists t' \leq t  \ : \ \phi(\bx)^T\htheta_{t'} \geq \alpha + 2 \cdot 2^{-t'}\\
    & \iff \exists t' \leq t  \ : \ \phi(\bx)^T(\htheta_{t'} - \theta_\ast) + \phi(\bx)^T\theta_\ast  \geq \alpha + 2 \cdot 2^{-t'}\\
    & \stackrel{\bigcap_t\cE_{t}}{\implies}   \phi(\bx)^T \theta_\ast > \alpha\\
    & \iff \bx \in G^\phi_\alpha.
\end{align*}
\end{proof}

\begin{lemma}\label{lem:melk_bad_arms}
On $\bigcap_t \cE_t$, when $t\leq \bar t$ holds, $\hB_t \subset (G^\phi_\alpha)^c$.
\end{lemma}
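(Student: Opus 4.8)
The plan is to mirror the proof of Lemma~\ref{lem:melk_good_arms}, exploiting the symmetry between the ``good'' and ``bad'' elimination rules of \texttt{MELK}. First I would unpack the definition of $\hB_t$ directly from Algorithm~\ref{alg:MELK}: a point $\bx$ enters $\hB$ precisely when, at some round $t' \le t$, its estimate triggers the lower criterion $\htheta_{t'}^\top \phi(\bx) < \alpha - 2\cdot 2^{-t'}$. Hence
\[
\bx \in \hB_t \iff \exists\, t' \le t : \ \phi(\bx)^\top \htheta_{t'} < \alpha - 2\cdot 2^{-t'}.
\]

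Next I would convert this statement about the estimate $\htheta_{t'}$ into one about $\theta_\ast$ using the confidence width. Working on $\bigcap_t \cE_t$ and for $t \le \bar t$, the discussion preceding Lemma~\ref{lem:melk_good_arms} guarantees $|\phi(\bx)^\top(\htheta_{t'} - \theta_\ast)| \le 2\cdot 2^{-t'}$ for every active $\bx$. Since $\bx$ was eliminated into $\hB$ at some round $t' \le t \le \bar t$, this bound applies at that round. Writing $\phi(\bx)^\top \theta_\ast = \phi(\bx)^\top \htheta_{t'} - \phi(\bx)^\top(\htheta_{t'} - \theta_\ast)$ and bounding the subtracted term below by $-2\cdot 2^{-t'}$ yields
\[
\phi(\bx)^\top \theta_\ast < \big(\alpha - 2\cdot 2^{-t'}\big) + 2\cdot 2^{-t'} = \alpha,
\]
so $\phi(\bx)^\top \theta_\ast < \alpha$, i.e. $\bx \in (G^\phi_\alpha)^c$, as required. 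I would present this as a short chain of implications matching Lemma~\ref{lem:melk_good_arms} exactly.

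The only point requiring care --- and the nearest thing to an obstacle --- is ensuring the confidence-width bound $|\phi(\bx)^\top(\htheta_{t'} - \theta_\ast)| \le 2\cdot 2^{-t'}$ is valid at the specific round $t'$ at which $\bx$ was placed in $\hB$. This hinges on the active-set containment $\A_{t'} \subset \{\bx : |\bx^\top\theta_\ast - \alpha| \le 2^{-t'+1}\}$ established alongside Lemmas~\ref{lem:melk_active_good} and~\ref{lem:melk_active_bad}, together with $t' \le \bar t$; both hold because $\bx$ is still active at round $t'$ and $t' \le t \le \bar t$. Beyond this bookkeeping the argument is entirely symmetric to the good-arms lemma and introduces no new ideas.
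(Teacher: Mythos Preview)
Your proposal is correct and follows essentially the same approach as the paper: unpack the elimination condition $\phi(\bx)^\top\htheta_{t'} < \alpha - 2\cdot 2^{-t'}$, invoke the confidence bound $|\phi(\bx)^\top(\htheta_{t'}-\theta_\ast)| \le 2\cdot 2^{-t'}$ available on $\bigcap_t \cE_t$ for $t'\le \bar t$, and conclude $\phi(\bx)^\top\theta_\ast < \alpha$. The paper presents this as the same short chain of implications you describe, mirroring Lemma~\ref{lem:melk_good_arms}.
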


\begin{proof}
\begin{align*}
    \bx \in \hB_t \iff & \exists t' \leq t \ : \ \phi(\bx)^T\htheta_t \leq \alpha - 2 \cdot 2^{-t'} \\
    &\exists t'\leq t \ : \  \phi(\bx)^T(\htheta_t - \theta_\ast) +  \phi(\bx)^T\theta_\ast \leq \alpha - 2 \cdot 2^{-t'} \\
    & \stackrel{\bigcap_t\cE_{t}}{\implies}  \phi(\bx)^T\theta_\ast < \alpha \\
    & \iff \bx \in (G^\phi_\alpha)^c.
\end{align*}
\end{proof}

\begin{lemma}\label{lem:melk_active_good}
On the event $\bigcap_t \cE_t$, when $t\leq \bar t$ holds,
\begin{align*}
    \A_t\cap  G^\phi_\alpha \subset &\left\{\bx \in G^\phi_\alpha \bigg| | \phi(\bx)^T\theta_\ast - \alpha| \leq 2^{-t + 2} \right\} =: \S_t^\text{Above}
\end{align*}
\end{lemma}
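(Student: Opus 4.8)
The plan is to prove the two active-set containments of Lemma~\ref{lem:melk_active_good} and Lemma~\ref{lem:melk_active_bad} jointly, by induction on the round index $t$, everything conditioned on the good event $\bigcap_t \cE_t$. The engine is a characterization of survival: since the active set only shrinks ($\A_t \subseteq \A_{t-1}$), membership $\bx \in \A_t$ means $\bx$ was present at round $t-1$ and neither branch of the classification rule fired, i.e. $|\htheta_{t-1}^\top \phi(\bx) - \alpha| \le 2\cdot 2^{-(t-1)}$; in particular $\htheta_{t-1}^\top \phi(\bx) - \alpha \le 2 \cdot 2^{-(t-1)}$. The whole argument then amounts to converting this statement about the estimate $\htheta_{t-1}$ into one about the truth $\theta_\ast$.

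For the main step I would fix $\bx \in \A_t \cap G^\phi_\alpha$ and apply the triangle inequality,
\[
\phi(\bx)^\top \theta_\ast - \alpha \;\le\; \big(\htheta_{t-1}^\top \phi(\bx) - \alpha\big) + \big|\phi(\bx)^\top(\theta_\ast - \htheta_{t-1})\big|.
\]
The first term is bounded by the survival condition above, and the second by the confidence guarantee of $\cE_{t-1}$ evaluated at $\bx \in \A_{t-1}$. Both are at most $2\cdot 2^{-(t-1)}$, so summing gives the dyadic radius $|\phi(\bx)^\top\theta_\ast-\alpha|\le 2^{-t+2}$ of the statement (up to the factor-of-two conventions fixed by the classification thresholds and $\cE_t$). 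Since $\bx \in G^\phi_\alpha$ forces $\phi(\bx)^\top \theta_\ast > \alpha$, the left side is nonnegative, so the absolute value is controlled and $\bx \in \S_t^\text{Above}$.

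The main obstacle lies in the second term: the guarantee of $\cE_{t-1}$ is not a bare $2^{-(t-1)}$, but $|\phi(\bx)^\top(\htheta_{t-1}-\theta_\ast)| \le 2^{-(t-1)} + (\sqrt\gamma\|\theta_\ast\| + h)\sqrt{f(\X,\A_{t-1};\gamma)}$, and to fold the misspecification term into a second $2^{-(t-1)}$ I must bound $f(\X,\A_{t-1};\gamma)$. This is the genuinely circular point, since $f(\X,\cdot;\gamma)$ depends on the very active set whose size I am trying to control. It is resolved by the induction together with monotonicity of $f(\X,\cdot;\gamma)$ in its set argument: the inductive hypothesis $\A_{t-1} \subseteq \{\bx : |\phi(\bx)^\top\theta_\ast - \alpha| \le 2^{-(t-1)+2}\}$ — which is the \emph{union} of $\S_{t-1}^\text{Above}$ with the $(G^\phi_\alpha)^c$ set from Lemma~\ref{lem:melk_active_bad}, and this is precisely why the two lemmas must be proved together — lets me replace $f(\X,\A_{t-1};\gamma)$ by $f$ evaluated on exactly the set appearing in the definition of $\bar t$. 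For $t-1 \le \bar t$ the defining inequality of $\bar t$ then gives $(\sqrt\gamma\|\theta_\ast\| + h)\sqrt{f(\X,\A_{t-1};\gamma)} \le 2^{-(t-1)}$, collapsing the confidence width to $2\cdot 2^{-(t-1)}$ as needed.

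What remains is organizational rather than computational. I would anchor the induction at the small rounds where the radius $2^{-t+2}$ already exceeds $\max_{\bx}|\phi(\bx)^\top\theta_\ast - \alpha|$ (finite since $|f|\le B$ and $h < \infty$), where the containment holds vacuously and which automatically satisfy $t \le \bar t$; I would then verify that each use of $\cE_{t-1}$ and of the $\bar t$-inequality is licensed by the restriction $t \le \bar t$, and check that the symmetric argument for Lemma~\ref{lem:melk_active_bad} — survival in the $\hG$-direction for an arm with $\phi(\bx)^\top\theta_\ast < \alpha$ — goes through verbatim. No nontrivial estimate survives once the dependence of the confidence width on $f(\X,\A_{t-1};\gamma)$ has been untangled through the joint induction.
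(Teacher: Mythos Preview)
Your approach—joint induction on $t$, survival condition plus concentration via $\cE_{t-1}$, with the misspecification term absorbed by applying the $\bar t$ inequality to the inductively established containment—is the same idea as the paper's, and in fact tidier: the paper's proof simply invokes the simplified width $|\phi(\bx)^\top(\htheta_t-\theta_\ast)|\le 2\cdot 2^{-t}$ without addressing that this already presumes $\A_t\subset\S_t$, whereas you make the induction explicit and correctly observe that Lemmas~\ref{lem:melk_active_good} and~\ref{lem:melk_active_bad} must be proved together so that the inductive hypothesis covers all of $\A_{t-1}$.

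One arithmetic point to track: your two pieces are each $2\cdot 2^{-(t-1)}$, so their sum is $4\cdot 2^{-(t-1)}=2^{-t+3}$, not $2^{-t+2}$. With the lemma's stated radius $2^{-t+2}$ the induction therefore does not actually halve and does not close; the ``up to factor-of-two conventions'' parenthetical does not rescue it. This is not a flaw in your strategy but a constant-level inconsistency already present in the paper (its own proof only obtains $\htheta_t^\top\phi(\bx)>\alpha$ where the algorithm's classification rule requires $>\alpha+2\cdot 2^{-t}$). Proving the containment with radius $2^{-t+3}$ instead, and shifting the radius in the definition of $\bar t$ by one dyadic level, makes your induction close verbatim, and every downstream use of the lemma absorbs the change harmlessly.
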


\begin{proof}
For any $\bx \in G_\alpha^\phi$ such that $\phi(\bx)^T\theta_\ast > \alpha + 2^{-t+1}$, if $t \geq \log(4 (\alpha -  \phi(\bx)^T\theta_\ast)^{-1})$ and $t \leq \bar t$, then 
\begin{align*}
    \phi(\bx)^T\htheta_t = \phi(\bx)^T(\htheta_t - \theta_\ast) + \phi(\bx)^T\theta_\ast > -2^{-t + 1} + \alpha + 2^{-t+1} = \alpha \geq \alpha
\end{align*}
which implies that $\bx \in \hG_t$. Noting that $\A_t \cap \hG_{t-1} = \emptyset$ completes the proof. 
\end{proof}

\begin{lemma}\label{lem:melk_active_bad}
On the event $\bigcap_t \cE_t$, when $t\leq \bar t$ holds,
\begin{align*}
    \A_t\cap  (G^\phi_\alpha)^c \subset &\left\{\bx \in (G^\phi_\alpha)^C \bigg| | \phi(\bx)^T\theta_\ast - \alpha| \leq 2^{-t + 2} \right\} =: \S_t^\text{Below}
\end{align*}
\end{lemma}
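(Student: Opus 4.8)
The plan is to prove Lemma~\ref{lem:melk_active_bad} by mirroring exactly the argument used for Lemma~\ref{lem:melk_active_good}, since the ``below'' case is symmetric to the ``above'' case. First I would work on the event $\bigcap_t \cE_t$ and consider an arbitrary active point $\bx \in \A_t \cap (G^\phi_\alpha)^c$, meaning $\phi(\bx)^T\theta_\ast < \alpha$. The goal is to show that if such a point has a gap larger than the threshold, namely $\alpha - \phi(\bx)^T\theta_\ast > 2^{-t+1}$, then it would already have been removed from the active set by being placed in $\hB_t$, contradicting its membership in $\A_t$.

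The key step is a direct computation using the confidence interval that holds on $\bigcap_t \cE_t$. For $t \leq \bar t$, Lemma~\ref{lem:high_prob_event_melk} together with the definition of $\bar t$ guarantees $|\phi(\bx)^T(\htheta_t - \theta_\ast)| \leq 2 \cdot 2^{-t}$ for all $\bx \in \A_t$. So for any $\bx \in (G^\phi_\alpha)^c$ with $\phi(\bx)^T\theta_\ast < \alpha - 2^{-t+1}$, once $t \geq \log_2(4(\phi(\bx)^T\theta_\ast - \alpha)^{-1})$, I would bound
\begin{align*}
    \phi(\bx)^T\htheta_t = \phi(\bx)^T(\htheta_t - \theta_\ast) + \phi(\bx)^T\theta_\ast < 2^{-t+1} + \alpha - 2^{-t+1} = \alpha,
\end{align*}
and more sharply show $\phi(\bx)^T\htheta_t < \alpha - 2 \cdot 2^{-t}$, which is precisely the algorithm's condition (line in Algorithm~\ref{alg:MELK}) for adding $\bx$ to $\hB_{t}$. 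This contradicts $\bx \in \A_t$, since $\A_t \cap \hB_{t-1} = \emptyset$. Hence every active point in $(G^\phi_\alpha)^c$ must satisfy $\alpha - \phi(\bx)^T\theta_\ast \leq 2^{-t+2}$, which is the claimed containment in $\S_t^\text{Below}$.

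The main subtlety, rather than a genuine obstacle, is bookkeeping the constants and the factor-of-two slack between the confidence width $2 \cdot 2^{-t}$ and the algorithm's decision margin $2 \cdot 2^{-t}$, so that the exponent in the final bound is $2^{-t+2}$ rather than $2^{-t+1}$; I would track these carefully to match the stated set $\S_t^\text{Below}$. The argument is essentially identical to Lemma~\ref{lem:melk_active_good} with inequalities reversed, so no new ideas are needed beyond correctly applying the event $\bigcap_t \cE_t$ and the definition of $\bar t$.
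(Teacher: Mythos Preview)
Your proposal is correct and takes essentially the same approach as the paper: the paper's proof of this lemma consists of the single sentence ``The proof follows identically as that of Lemma~\ref{lem:melk_active_good},'' and your plan to mirror that argument with reversed inequalities, using the event $\bigcap_t \cE_t$ and the definition of $\bar t$ to trigger the $\hB_t$ elimination condition, is exactly what is intended.
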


\begin{proof}
The proof follows identically as that of Lemma~\ref{lem:melk_active_good}
\end{proof}

\textbf{Remark:} Lemmas~\ref{lem:melk_active_good} and \ref{lem:melk_active_bad} jointly imply that $\A_t \subset \left\{\bx | \phi(\bx)^T\theta_\ast - \alpha| \leq 2^{-t + 2} \right\} =: \S_t$ for $t \leq \bar{t}$. Furthermore, $f(\X, \A_t, \gamma) \leq f(\X, \S_t, \gamma)$. 

\textbf{Remark:}\\
The algorithm stops on either of two conditions. On one hand if $t \geq \lceil \log_2(4/\widetilde{\beta})\rceil =: t_\beta$, then it has achieved precision $\widetilde{\beta}$ as desired and it terminates. 
Otherwise, it terminates if $\hG_t \cup \hB_t = \X$. This occurs when $\widetilde{\beta}$ is very small. Define $\Delta_{\min}(\alpha) := \min|\phi(\bx)^T\theta_\ast - \alpha|$. 
% Next we establish the conditions on the choice of $\alpha$ such that $G^\phi_\alpha = G_\alpha$. 
% To conclude on the correctness of the algorithm, we need to define a $\bar{\alpha}$ such that such that for all $\bar \alpha \leq \alpha$ holds $\bar t \geq  \log_2(4/\Delta_{\min}^{\text{Below}}(\alpha))$ and $\bar t \geq  \log_2(4/\Delta_{\min}^{\text{Above}}(\alpha))$.
Recall 
\begin{align*}
    \bar t &= \max \{ t : (\sqrt{\gamma} \|\theta_* \|_2 + h)(2+ \sqrt{f(\X,\left\{\bx \in \X: |\phi(\bx)^T\theta_\ast - \alpha| \leq 4\cdot 2^{-t}\right\};\gamma)}) \leq 2^{-t} \} \\
    &= \max \{ t : 4(\sqrt{\gamma} \|\theta_* \|_2 + h)(2+ \sqrt{f(\X,\left\{\bx \in \X: |\phi(\bx)^T\theta_\ast - \alpha| \leq 4\cdot 2^{-t}\right\};\gamma)}) \leq 4\cdot 2^{-t} \} \\
    &= -2 + \max \{ t : 4(\sqrt{\gamma} \|\theta_* \|_2 + h)(2+ \sqrt{f(\X,\left\{\bx \in \X: |\phi(\bx)^T\theta_\ast - \alpha| \leq  2^{-t}\right\};\gamma)}) \leq  2^{-t} \} \\
    &= -3 - \log_2 (\min \{ \beta>0 : 4(\sqrt{\gamma} \|\theta_* \|_2 + h)(2+ \sqrt{f(\X,\left\{\bx \in \X: |\phi(\bx)^T\theta_\ast - \alpha| \leq  \beta\right\};\gamma)}) \leq  \beta \}) .
\end{align*}
This defines 
\begin{align*}
    \bar \beta = \min \{ \beta>0 : 4(\sqrt{\gamma} \|\theta_* \|_2 + h)(2+ \sqrt{f(\X,\left\{\bx \in \X: |\phi(\bx)^T\theta_\ast - \alpha| \leq  \beta\right\};\gamma)}) \leq  \beta \}. 
\end{align*}
% and
% \begin{align*}
%     \bar{\alpha} = \min\{ \alpha > 0 : \min\{\Delta_{\min}^{\text{Below}}(\alpha), \Delta_{\min}^{\text{Above}}(\alpha)\} \geq 2^{-\bar t + 2 } \} = \min\{ \alpha > 0 : \min\{\Delta_{\min}^{\text{Below}}(\alpha), \Delta_{\min}^{\text{Above}}(\alpha)\} \geq \bar \alpha \}\\
% \end{align*}
% where we recall the quantities $\Delta_{\min}^{\text{Above}}(\alpha) = \min_{\bx'}\min_{\bx \in G^\phi_\alpha} \alpha - \theta_\ast^T(\phi(\bx') - \phi(\bx)) $ and $\Delta_{\min}^{\text{Below}}(\alpha) = \min_{\bx \in (G^\phi_\alpha)^c}\max_{\bx': (\phi(\bx') - \phi(\bx))^T\theta_\ast > \alpha}  \theta_\ast^T(\phi(\bx') - \phi(\bx)) - \alpha $.
Let $t_{\max}$ denote the random variable of the last round before the algorithm terminates. 
The following Lemmas give a guarantee on the set $\X \setminus \hB_t$ at termination. 

\begin{lemma}\label{lem:melk_contains}
On the event $\bigcap_{t=1}^\infty \cE_t$, \texttt{MELK} returns a set $(\X \setminus \hB_{t_{\max}})$ such that
$\{\bx: f(\bx) > \alpha + \bar\beta(\alpha)\} \subset (\X \setminus \hB_{t_{\max}})$. 
\end{lemma}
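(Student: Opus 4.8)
The plan is to argue at the level of the declaration rule on the event $\bigcap_{t=1}^\infty\cE_t$, which holds with probability at least $1-\delta$ by Lemma~\ref{lem:high_prob_event_melk}: I will show that no arm $\bx$ with $f(\bx) > \alpha + \bar\beta(\alpha)$ is ever placed in $\hB$, so that such an $\bx$ survives in $\widehat{\mc{R}} = \X\setminus\hB_{t_{\max}}$. Two consequences of the definition of $\bar\beta(\alpha)$ will do the work. First, the defining inequality forces $\bar\beta(\alpha)\geq 8(\sqrt\gamma\|\theta_\ast\|+h)$, hence $h\leq\bar\beta(\alpha)/8$ and any such $\bx$ satisfies $\phi(\bx)^\top\theta_\ast\geq f(\bx)-h>\alpha+\tfrac78\bar\beta(\alpha)$. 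Second, matching the defining inequalities of $\bar t$ and $\bar\beta(\alpha)$ (the former holds at $t$ exactly when the latter holds at $\beta=2^{-t+2}$) ties the two together as $2^{-\bar t}=\tfrac14\bar\beta(\alpha)$, and the inequality defining $\bar t$ gives $(\sqrt\gamma\|\theta_\ast\|+h)\sqrt{f(\X,\S_{\bar t};\gamma)}\leq 2^{-\bar t}$, where $\S_{\bar t}=\{\bx:|\phi(\bx)^\top\theta_\ast-\alpha|\leq 2^{-\bar t+2}\}$.

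Assume for contradiction that some such $\bx$ is added to $\hB$ at a round $t$, so $\bx\in\A_t$ and $\phi(\bx)^\top\htheta_t<\alpha-2\cdot 2^{-t}$, and split on $t$. For $t\leq\bar t$ I would invoke the controlled width $|\phi(\bx)^\top(\htheta_t-\theta_\ast)|\leq 2\cdot 2^{-t}$ (equivalently Lemma~\ref{lem:melk_bad_arms}, giving $\hB_t\subseteq(G^\phi_\alpha)^c$) to conclude $\phi(\bx)^\top\theta_\ast<\alpha$, contradicting $\phi(\bx)^\top\theta_\ast>\alpha+\tfrac78\bar\beta(\alpha)$.

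The interesting regime is $t>\bar t$, where the width no longer decays with $t$. Here I would use that $\A_t$ is nonincreasing and that $\A_{\bar t}\subseteq\S_{\bar t}$ (the remark following Lemma~\ref{lem:melk_active_bad}), so that $\bx\in\A_t\subseteq\A_{\bar t}\subseteq\S_{\bar t}$ and, by monotonicity of $\V\mapsto f(\X,\V;\gamma)$, $f(\X,\A_t;\gamma)\leq f(\X,\S_{\bar t};\gamma)$. Substituting into the definition of $\cE_t$ freezes the width at its round-$\bar t$ value,
\begin{align*}
    |\phi(\bx)^\top(\htheta_t-\theta_\ast)|\leq 2^{-t}+(\sqrt\gamma\|\theta_\ast\|+h)\sqrt{f(\X,\S_{\bar t};\gamma)}\leq 2^{-t}+2^{-\bar t},
\end{align*}
and combining with the declaration rule yields
\begin{align*}
    \phi(\bx)^\top\theta_\ast\leq\phi(\bx)^\top\htheta_t+|\phi(\bx)^\top(\htheta_t-\theta_\ast)|<\alpha-2\cdot 2^{-t}+2^{-t}+2^{-\bar t}<\alpha+2^{-\bar t}=\alpha+\tfrac14\bar\beta(\alpha),
\end{align*}
again contradicting $\phi(\bx)^\top\theta_\ast>\alpha+\tfrac78\bar\beta(\alpha)$. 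Thus $\bx$ is never added to $\hB$, which is the claim.

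I expect the $t>\bar t$ case to be the main obstacle, precisely because the misspecification-and-regularization floor $(\sqrt\gamma\|\theta_\ast\|+h)\sqrt{f(\X,\A_t;\gamma)}$ does not shrink as $t$ grows, so the clean per-round classification from Lemmas~\ref{lem:melk_bad_arms} onward no longer applies directly. The resolution is to freeze this floor at its round-$\bar t$ value using the monotone containment $\A_t\subseteq\S_{\bar t}$, and then to verify that the residual error $2^{-\bar t}=\tfrac14\bar\beta(\alpha)$ sits safely below the $\tfrac78\bar\beta(\alpha)$ true margin of $\bx$; bookkeeping of these constants, and in particular the exact factor relating $\bar t$ to $\bar\beta(\alpha)$, is the only delicate point.
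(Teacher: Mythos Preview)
Your argument is correct and the two cases are handled cleanly; the constant bookkeeping (in particular $\bar\beta(\alpha)\ge 8(\sqrt\gamma\|\theta_\ast\|+h)$ and the identification $2^{-\bar t}\asymp\bar\beta(\alpha)$) is exactly what is needed.

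The route, however, differs from the paper's in the $t>\bar t$ regime. The paper does not try to rule out a late $\hB$-declaration directly; instead it shows that any $\bx$ with $f(\bx)>\alpha+\bar\beta(\alpha)$ is \emph{already placed in $\hG$ by round $\bar t$}, i.e.\ it verifies $\phi(\bx)^\top\htheta_{\bar t}>\alpha+2\cdot 2^{-\bar t}$, so $\bx$ leaves the active set and can never be sent to $\hB$ thereafter. You instead keep $\bx$ potentially active, freeze the confidence width at its round-$\bar t$ value via $\A_t\subseteq\A_{\bar t}\subseteq\S_{\bar t}$ and the monotonicity of $\V\mapsto f(\X,\V;\gamma)$, and contradict the $\hB$-rule directly. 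Your approach is slightly more economical because it never touches the $\hG$-classification rule; the paper's approach has the side benefit of actually establishing $\bx\in\hG_{\bar t}$, which is what powers the companion remark about returning $\widehat{\mc{R}}=\hG_t$ rather than $\X\setminus\hB_t$.
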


\begin{proof}
Take any $\bx$ such that $f(\bx) > \alpha + \bar\beta(\alpha)$ and recall that by assumption $|f(\bx) -\phi(\bx)^T\theta_\ast| \leq h$ for all $\bx \in \X$. 
% By definition of $\bar\beta(\alpha)$, we have that $\bar\beta(\alpha) > 4h$. 
We consider two cases. 
In the first case, assume that $t_{\max} \leq \bar{t}$. 
We claim that in this case $\not\exists t$ such that $\bx \in \hB_t$. We prove this by contradiction. Assume not. Then $\exists t$ such that 
\begin{align*}
    \htheta_t^T\phi(\bx) < \alpha - 2^{-t + 1} 
    & \iff \phi(\bx)^T(\htheta_t - \theta_\ast) + \phi(\bx)^T\theta_\ast < \alpha - 2^{-t + 1} \\
    & \stackrel{\cE_t}{\implies} -2^{-t} - \left(\sqrt{\gamma}\|\theta_\ast\| + h \right) \sqrt{f(\X, \A_t; \gamma)} +  \phi(\bx)^T\theta_\ast < \alpha - 2^{-t + 1} \\
    & \implies -2^{-t} - \left(\sqrt{\gamma}\|\theta_\ast\| + h \right) \sqrt{f(\X, \S_t; \gamma)} +  \phi(\bx)^T\theta_\ast < \alpha - 2^{-t + 1} \\
    & \implies -\left(\sqrt{\gamma}\|\theta_\ast\| + h \right) \sqrt{f(\X, \S_t; \gamma)} +  f(\bx) - h < \alpha - 2^{-t} \\
    & \implies f(\bx) < \alpha - 2^{-t} + h + \left(\sqrt{\gamma}\|\theta_\ast\| + h \right) \sqrt{f(\X, \S_t; \gamma)}.
\end{align*}
Recalling that we have assumed that $f(\bx) > \alpha + \bar\beta(\alpha)$. Hence, this implies that
\begin{align*}
    \bar\beta(\alpha) < - 2^{-t} + h + \left(\sqrt{\gamma}\|\theta_\ast\| + h \right) \sqrt{f(\X, \S_t; \gamma)}.
\end{align*}
Note that $\bar\beta(\alpha) > 0$. 
% Assume that $\left(\sqrt{\gamma}\|\theta_\ast\| + h \right) \sqrt{f(\X, \A_t; \gamma)} > 2^{-t}$. Otherwise the contradiction is immediate. 
As we have assumed that, $t \leq t_{\max} \leq \bar t$, we have that 
$2^{-t} \geq \left(\sqrt{\gamma}\|\theta_\ast\| + h \right) \sqrt{f(\X, \S_t; \gamma)}$ using the definition of $\bar t$. Hence, we have that 
\begin{align*}
    h > \bar\beta(\alpha) > 4h
\end{align*}
which is a contradiction where the final inequality follows from the definition of $\bar\beta(\alpha)$ for $\gamma > 0$. Hence, in this case we have shown that $\{\bx: f(\bx) > \alpha + \bar\beta(\alpha)\} \subset (\X \setminus \hB_{t_{\max}})$. 

In the second case, assume that $t_{\max} > \bar{t}$ and take $\bx$ such that $f(\bx) > \alpha + \bar\beta(\alpha)$. 
We claim that $\bx \in \hG_{\bar{t}}$ and hence $\bx \not\in \A_{t}$ for any $t > \bar{t}$ and thus is never added to $\hB_t$.
This occurs if 
\begin{align*}
    \phi(\bx)^T\htheta_{\bar{t}} > \alpha + 2^{-\bar{t} + 1}
    & \iff \phi(\bx)^T(\htheta_{\bar{t}} - \theta_\ast) + \phi(\bx)^T\theta_\ast > \alpha + 2^{-\bar{t} + 1} \\
    & \stackrel{\cE_{\bar{t}}}{\impliedby} -2^{-\bar{t}} - \left(\sqrt{\gamma}\|\theta_\ast\| + h \right) \sqrt{f(\X, \A_{\bar{t}}; \gamma)} + \phi(\bx)^T\theta_\ast \geq \alpha + 2^{-\bar{t} + 1} \\
    & \impliedby -2^{-\bar{t}} - \left(\sqrt{\gamma}\|\theta_\ast\| + h \right) \sqrt{f(\X, \S_{\bar{t}}; \gamma)} + \phi(\bx)^T\theta_\ast \geq \alpha + 2^{-\bar{t} + 1} \\
    & \iff \phi(\bx)^T\theta_\ast \geq \alpha + 2^{-\bar{t} + 1} + 2^{-\bar{t}} + \left(\sqrt{\gamma}\|\theta_\ast\| + h \right) \sqrt{f(\X, \S_{\bar{t}}; \gamma)}
\end{align*}
Recall that $f(\bx) > \alpha + \bar\beta(\alpha)$.
% This implies that $\theta_\ast^T\phi(\bx) > \alpha + h$.
Furthermore, we have by the definition of $\bar{t}$ that
\begin{align*}
    2^{-\bar{t}} \geq \left(\sqrt{\gamma}\|\theta_\ast\| + h \right) \sqrt{f(\X, \S_{\bar{t}}; \gamma)}.
\end{align*}
Hence, the above is implied by $\bar\beta(\alpha) -h \geq 4\cdot 2^{-t} = 0.5 \bar\beta(\alpha)$ where the final equality holds by definition of $\bar\beta(\alpha)$. Noting that $\bar\beta(\alpha) > 4h$ proves this claim. 
In summary, we have shown that for any $\bx$ such that $f(\bx) > \alpha + \bar\beta(\alpha)$, if $t_{\max}\leq \bar{t}$, then $\bx$ is never added to $\hB_t$ and hence is contained in the set $\X\setminus \hB_t$ at termination, and if otherwise that $t_{\max}> \bar{t}$, then $\bx$ is added to the set $\hG_t$ before round $\bar{t} + 1$ and hence is removed from the active set and never added to $\hB_t$. Applying this argument to any $\bx$ such that $f(\bx) > \alpha + \bar\beta(\alpha)$ completes the proof.
\end{proof}

\begin{lemma}\label{lem:melk_is_contained}
On the event $\bigcap_{t=1}^\infty \cE_t$, \texttt{MELK} returns a set $(\X \setminus \hB_{t_{\max}})$ such that
$(\X \setminus \hB_{t_{\max}}) \subset \{\bx: f(\bx) > \alpha - \bar{\beta}(\alpha) - \widetilde{\beta}\}$.
\end{lemma}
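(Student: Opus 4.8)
The plan is to prove the contrapositive: I will show that on the event $\bigcap_t \cE_t$ every $\bx$ with $f(\bx)\le\alpha-\widetilde\beta-\bar\beta(\alpha)$ is eventually placed in $\hB$, so that $\bx\notin\widehat{\mc{R}}=\X\setminus\hB_{t_{\max}}$. First I would pass from the function guarantee to its linear surrogate: since $|f(\bx)-\phi(\bx)^\top\theta_\ast|\le h$, such an $\bx$ obeys $\phi(\bx)^\top\theta_\ast\le f(\bx)+h\le\alpha-\widetilde\beta-\bar\beta(\alpha)+h$, and because $\bar\beta(\alpha)\ge 8(\sqrt{\gamma}\|\theta_\ast\|+h)\ge 8h$ by its defining equation, this forces $\phi(\bx)^\top\theta_\ast<\alpha$, i.e. $\bx\in(G^\phi_\alpha)^c$. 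Writing $\Delta:=\alpha-\phi(\bx)^\top\theta_\ast$, the same chain yields the key estimate $\Delta\ge\bar\beta(\alpha)+\widetilde\beta-h$, which drives every comparison below. By Lemma~\ref{lem:melk_good_arms} we have $\hG_t\subset G^\phi_\alpha$ for $t\le\bar t$, so a bad arm can never enter $\hG$; hence removal of $\bx$ from the active set at any round $t\le\bar t$ can only be into $\hB$, and it suffices to exhibit one executed round $t\le\bar t$ at which the test $\htheta_t^\top\phi(\bx)<\alpha-2\cdot 2^{-t}$ fires while $\bx$ is still active (if $\bx$ is removed even earlier, it is already in $\hB$).

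The workhorse is a sharpened per-round confidence bound. On $\cE_t$ and for $\bx\in\A_t$ one has $|\phi(\bx)^\top(\htheta_t-\theta_\ast)|\le 2^{-t}+(\sqrt{\gamma}\|\theta_\ast\|+h)\sqrt{f(\X,\A_t;\gamma)}$; using $\A_t\subset\S_t$ (the remark following Lemma~\ref{lem:melk_active_bad}) together with the defining inequality of $\bar t$, namely $(\sqrt{\gamma}\|\theta_\ast\|+h)(2+\sqrt{f(\X,\S_t;\gamma)})\le 2^{-t}$ for $t\le\bar t$, I would refine this to $|\phi(\bx)^\top(\htheta_t-\theta_\ast)|\le 2\cdot 2^{-t}-2c$, where $c:=\sqrt{\gamma}\|\theta_\ast\|+h$. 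Substituting $\phi(\bx)^\top\theta_\ast=\alpha-\Delta$ then gives $\htheta_t^\top\phi(\bx)\le\alpha-\Delta+2\cdot 2^{-t}-2c$, so the $\hB$-test fires at round $t$ as soon as $\Delta>4\cdot 2^{-t}-2c$. Keeping the $-2c$ correction (rather than the coarser bound $2\cdot 2^{-t}$ recorded in the text) is precisely what allows the slack in $\Delta\ge\bar\beta(\alpha)+\widetilde\beta-h$ to absorb the misspecification term, and is the technical crux.

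Finally I would split on $t_{\max}$ versus $\bar t$, paralleling the companion Lemma~\ref{lem:melk_contains}. If $t_{\max}>\bar t$, round $\bar t$ is executed; using $4\cdot 2^{-\bar t}=\bar\beta(\alpha)$ the firing condition reads $\Delta>\bar\beta(\alpha)-2c$, which holds because $\Delta\ge\bar\beta(\alpha)+\widetilde\beta-h\ge\bar\beta(\alpha)-h>\bar\beta(\alpha)-2c$, so $\bx$ enters $\hB$ at round $\bar t$ and remains there. If $t_{\max}\le\bar t$, the loop halts either because all arms are classified, in which case $\bx\in(G^\phi_\alpha)^c$ cannot be in $\hG$ and so lies in $\hB$, or because $t_{\max}=t_\beta=\lceil\log_2(4/\widetilde\beta)\rceil$; in the latter case $4\cdot 2^{-t_\beta}\le\widetilde\beta$, so the firing condition at round $t_\beta$ is implied by $\Delta>\widetilde\beta-2c$, again guaranteed since $\Delta\ge\bar\beta(\alpha)+\widetilde\beta-h>\widetilde\beta-2c$ (using $\bar\beta(\alpha)>h$). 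In every branch $\bx\in\hB_{t_{\max}}$, proving the containment. I expect the main obstacle to be the bookkeeping that reconciles the two termination modes with the reliability window $t\le\bar t$, and tracking constants sharply enough — via the $-2c$ refinement — that the single $-\widetilde\beta-\bar\beta(\alpha)$ margin simultaneously covers the early stop at $t_\beta$ and the misspecification $h$.
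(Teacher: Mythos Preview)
Your argument is correct and mirrors the paper's: both prove the contrapositive by a case split on $t_{\max}$ versus $\bar t$, using Lemma~\ref{lem:melk_good_arms} to ensure $\bx\in(G^\phi_\alpha)^c$ can never enter $\hG$ during rounds $t\le\bar t$ and then checking that the $\hB$-test fires at the last reliable round. The paper phrases this as a contradiction split into three cases, whereas you verify the firing condition $\Delta>4\cdot 2^{-t}-2c$ directly via your sharpened confidence radius $2\cdot 2^{-t}-2c$; one harmless constant slip is that the paper records $4\cdot 2^{-\bar t}=\tfrac12\bar\beta(\alpha)$ rather than $\bar\beta(\alpha)$, but this only loosens the inequality you need, so nothing breaks.
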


\begin{proof}
Take any $\bx$ such that $f(\bx) < \alpha - \bar\beta(\alpha) - \widetilde{\beta}$. We claim that there exists a $t \leq t_{\max}$ such that $\bx$ is added to $\hB_t$ which implies that $\bx \not\in (\X \setminus \hB_{t_{\max}})$. Suppose for contradiction that this is not the case. Then for all $t \leq t_{\max}$, 
\begin{align*}
    \htheta_t^T\phi(\bx) > \alpha - 2^{-t + 1} 
    & \iff \phi(\bx)^T(\htheta_t - \theta_\ast) + \phi(\bx)^T\theta_\ast > \alpha - 2^{-t + 1} \\
    & \stackrel{\cE_t}{\implies} 2^{-t} + \left(\sqrt{\gamma}\|\theta_\ast\| + h \right) \sqrt{f(\X, \A_t; \gamma)} +  \phi(\bx)^T\theta_\ast > \alpha - 2^{-t + 1} \\
    & \implies 2^{-t} + \left(\sqrt{\gamma}\|\theta_\ast\| + h \right) \sqrt{f(\X, \S_t; \gamma)} +  \phi(\bx)^T\theta_\ast > \alpha - 2^{-t + 1}\\
    & \implies \left(\sqrt{\gamma}\|\theta_\ast\| + h \right) \sqrt{f(\X, \S_t; \gamma)} +  f(\bx) + h > \alpha - 2^{-t+1} - 2^{-t} \\
    & \implies f(\bx) > \alpha - 2^{-t+1} - 2^{-t} - h - \left(\sqrt{\gamma}\|\theta_\ast\| + h \right) \sqrt{f(\X, \S_t; \gamma)}.
\end{align*}
Plugging in $f(\bx) < \alpha - \bar\beta(\alpha) - \widetilde{\beta}$, the above implies
\begin{align}\label{eq:melk_lem_event}
     \bar\beta(\alpha) + \widetilde{\beta} <  2^{-t+1} + 2^{-t} + h + \left(\sqrt{\gamma}\|\theta_\ast\| + h \right) \sqrt{f(\X, \S_t; \gamma)}
\end{align}
Next, recall that \texttt{MELK} terminates either on the condition that $t = \lceil\log_2(4/\widetilde{\beta})\rceil$ or that $\hG_t \cup \hB_t = \X$. Using this, we brake our analysis into cases.

Case 1: $t_{\max} = \lceil\log_2(4/\widetilde{\beta})\rceil \leq \bar{t}$.

In this case, \texttt{MELK} stops due to the $\widetilde{\beta}$ tolerance in a round before $\bar{t}$. For $t \leq \bar{t}$, we have that $2^{-t} \geq  + \left(\sqrt{\gamma}\|\theta_\ast\| + h \right) \sqrt{f(\X, \S_t; \gamma)}$. Hence, the above implies that 
\begin{align*}
    \bar\beta(\alpha) + \widetilde{\beta} <  2^{-t+2} + h.
\end{align*}
As we have assumed this condition for all $t \leq t_{\max}$, we may plug in $t_{\max}$ which implies 
\begin{align*}
    \bar\beta(\alpha) + \widetilde{\beta} < \widetilde{\beta} + h.
\end{align*}
As $ \bar\beta(\alpha) > h$, this is a contradiction. Hence there must exist a $t$ such that $\bx \in \hB_t$. 

Case 2: $ t_{\max} \leq \bar{t} <  \lceil\log_2(4/\widetilde{\beta})\rceil$. 

In this case, \texttt{MELK} terminates before round $t = \lceil\log_2(4/\widetilde{\beta})\rceil$. Hence, it does so on the condition that $\hG_t \cup \hB_t = \X$. Note that for $f(\bx) < \alpha - \bar\beta(\alpha) - \widetilde{\beta}$, we have that $\bx \in (G_\alpha^\phi)^c$ since $\bar{\beta}(\alpha) > h$ and $\widetilde{\beta} \geq 0$. If we terminate before round $\bar{t}$, we have by Lemma~\ref{lem:melk_bad_arms} that $(G_\alpha^\phi)^c \subset \hB_t$ which implies that $\bx \in \hB_{t_{\max}}$. This contradicts the assumption that $\not\exists t: \bx \in \hB_t$. 

Case 3: $\bar{t} < t_{\max}$.

In this case, \texttt{MELK} terminates at a round after $\bar{t}$. In this setting, we argue that $\bx \in \hB_{\bar{t}}$. Recall that for any $t \leq \bar{t}$, \eqref{eq:melk_lem_event} simplifies to
\begin{align*}
    \bar{\beta}(\alpha) + \widetilde{\beta} < 2^{-t+2} + h
\end{align*}
Plugging in $\bar{t}$, and noting that $2^{-\bar{t} + 2} = \frac{1}{2}\bar{\beta}(\alpha)$, the above implies
\begin{align*}
    \bar{\beta}(\alpha) + \widetilde{\beta} < \frac{1}{2}\bar{\beta}(\alpha) + h. 
\end{align*}
Noting that $\bar{\beta}(\alpha) > 4h$, shows that the above is a contradiction. Hence, there exists a $t \leq \bar{t}$ such that $\bx \in \hB_t$. 

Therefore, in all cases we have shown that for any $\bx$ such that $f(\bx) < \alpha - \bar{\beta}(\alpha) - \widetilde{\beta}$, $\bx \in \hB_t$. Therefore, for the returned set $\X \setminus \hB_{t_{\max}}$, we have that \begin{align*}
    (\X \setminus \hB_{t_{\max}}) \subset \{\bx: f(\bx) > \alpha - \bar{\beta}(\alpha) - \widetilde{\beta}\}. 
\end{align*}
\end{proof}

\begin{proof}[Proof of Theorem~\ref{thm:MELK_complex}]
Throughout, assume the high probability event $\bigcap_T \cE_t$. By Lemmas~\ref{lem:melk_contains} and \ref{lem:melk_is_contained} in conjunction with the high probability event $\bigcap \cE_t$ we have correctness. It remains to control the sample complexity of \texttt{MELK}.
Recall that we have assumed that $\max(\Delta_{\min}(\alpha), \widetilde{\beta}) \geq \bar\beta(\alpha)$. This implies that $\min\{\lceil\log_2(4/\Delta_{\min}(\alpha)\rceil, \lceil\log_2(4/\widetilde{\beta})\rceil\} \leq \bar{t}$. Applying Lemmas~\ref{lem:melk_active_good} and \ref{lem:melk_active_bad}, we have that $t_{\max} \leq \min\{\lceil\log_2(4/\Delta_{\min}(\alpha)\rceil, \lceil\log_2(4/\widetilde{\beta})\rceil\} \leq \bar{t}$ and that $\A_t \subseteq \S_t$ for all rounds $t$. 
% Combining Lemmas~\ref{lem:melk_active_good} and \ref{lem:melk_active_bad}, we have that if $\Delta_{\min}(\alpha) \geq \bar\beta$ and $\widetilde{\beta} \leq \bar{\beta}$, then $\hG_t \cup \hB_t = \X$ for $t \geq \lceil \log_2(4/\Delta_{\min}(\alpha))\rceil$ implying termination. 
% \texttt{MELK} terminates in a round $t_{\max} \leq \min \{\lceil \log_2(4/\Delta_{\min}(\alpha))\rceil, \lceil \log_2(4/\widetilde{\beta})\rceil\} =  \lceil \log_2(4(\Delta_{\min}(\alpha)\vee\widetilde{\beta})^{-1}\rceil$.
Now we proceed by bounding the total number of samples drawn. 
% On the additional assumption made that $\min(\Delta_{\min}(\alpha), \widetilde{\beta}) \geq \bar\beta(\alpha)$ we have that $\min\{\lceil\log_2(4/\Delta_{\min}(\alpha)\rceil, \lceil\log_2(4/\widetilde{\beta})\rceil\} \leq \bar{t}$ hence, we may

\begin{align*}
    \tau & \leq \sum_{t=1}^{t_{\max}} N_t \\
    %%%%%%%%%%%%%%%%%%%%%
    &\leq \sum_{t=1}^{\min \{t \geq \lceil \log_2(4/\Delta_{\min}(\alpha))\rceil, t \geq \lceil \log_2(4/\widetilde{\beta})\rceil\}} N_t\\
    %%%%%%%%%%%%%%%%%%%%%%
    & = \sum_{t=1}^{\lceil \log_2(4(\Delta_{\min}(\alpha)\vee\widetilde{\beta})^{-1}\rceil} \max\left\{c_1\log(|\X|/\delta), c^2 2^{2t} f(\A_t;\gamma) (B^2 + \sigma^2) \log(2t^2|\X|^2/\delta)\right\}\\
    %%%%%%%%%%%%%%%%%%%%%%
    & \leq c_1\log(|\X|/\delta)\lceil \log_2(4(\Delta_{\min}(\alpha)\vee\widetilde{\beta})^{-1}\rceil + c^2(B^2 + \sigma^2)\sum_{t=1}^{\lceil \log_2(4(\Delta_{\min}(\alpha)\vee\widetilde{\beta})^{-1}\rceil} 2^{2t} f(\A_t;\gamma)\cdot\log(2t^2|\X|^2/\delta) \\
    %%%%%%%%%%%%%%%%%%%%%%
    & = c_1\log(|\X|/\delta)\lceil \log_2(4(\Delta_{\min}(\alpha)\vee\widetilde{\beta})^{-1}\rceil + \\
    & \hspace{1cm}c^2(B^2 + \sigma^2)\sum_{t=1}^{\lceil \log_2(4(\Delta_{\min}(\alpha)\vee\widetilde{\beta})^{-1}\rceil} 2^{2t} \min_{\lambda \in \tX}\max_{\bx \in \A_t} \|\bx\|_{\left(\sum_{\bx \in \X}\lambda_t(\bx)\phi(\bx)\phi(\bx)^T+\gamma I\right)^{-1}}^2\cdot\log(2t^2|\X|^2/\delta) \\
    %%%%%%%%%%%%%%%%%%%%%%
    & \leq c_1\log(|\X|/\delta)\lceil \log_2(4(\Delta_{\min}(\alpha)\vee\widetilde{\beta})^{-1}\rceil + \\
    & \hspace{1.5cm}c^2(B^2 + \sigma^2)\log\left(\frac{4|\X|^2\lceil \log_2(4(\Delta_{\min}(\alpha)\vee\widetilde{\beta})^{-1}\rceil^2}{\delta}\right)\\
    & \hspace{1cm}\cdot \sum_{t=1}^{\lceil \log_2(4(\Delta_{\min}(\alpha)\vee\widetilde{\beta})^{-1}\rceil} 2^{2t} \min_{\lambda \in \tX}\max_{\bx \in \A_t} \|\bx\|_{\left(\sum_{\bx \in \X}\lambda_t(\bx)\phi(\bx)\phi(\bx)^T+\gamma I\right)^{-1}}^2\\
    %%%%%%%%%%%%%%%%%%%%%%
    & \stackrel{\A_t\subset\S_t}{\leq} c_1\log(|\X|/\delta)\lceil \log_2(4(\Delta_{\min}(\alpha)\vee\widetilde{\beta})^{-1}\rceil + \\
    & \hspace{1.5cm}c^2(B^2 + \sigma^2)\log\left(\frac{4|\X|^2\lceil \log_2(4(\Delta_{\min}(\alpha)\vee\widetilde{\beta})^{-1}\rceil^2}{\delta}\right)\\
    & \hspace{1cm}\cdot \sum_{t=1}^{\lceil \log_2(4(\Delta_{\min}(\alpha)\vee\widetilde{\beta})^{-1}\rceil} 2^{2t} \min_{\lambda \in \tX}\max_{\bx \in \S_t} \|\bx\|_{\left(\sum_{\bx \in \X}\lambda_t(\bx)\phi(\bx)\phi(\bx)^T+\gamma I\right)^{-1}}^2.
\end{align*}
It remains to control the final summation. To do so, note that 
\begin{align*}
    &\frac{1}{\lceil \log_2(4(\Delta_{\min}(\alpha)\vee\widetilde{\beta})^{-1}\rceil} \sum_{t=1}^{\lceil \log_2(4(\Delta_{\min}(\alpha)\vee\widetilde{\beta})^{-1}\rceil} 2^{2t} \min_{\lambda \in \tX}\max_{\bx \in \S_t} \|\bx\|_{\left(\sum_{\bx \in \X}\lambda_t(\bx)\phi(\bx)\phi(\bx)^T+\gamma I\right)^{-1}}^2 \\
    % %%%%%%%%%%%%%%%%%%%%%%%%
    & \leq \max_{t \leq \lceil \log_2(4(\Delta_{\min}(\alpha)\vee\widetilde{\beta})^{-1}\rceil}\min_{\lambda \in \tX} 2^{2t} \min_{\lambda \in \tX}\max_{\bx \in \S_t} \|\bx\|_{\left(\sum_{\bx \in \X}\lambda_t(\bx)\phi(\bx)\phi(\bx)^T+\gamma I\right)^{-1}}^2\\
    %%%%%%%%%%%%%%%%%%%%%%%%%
    & \leq \min_{\lambda \in \tX}\max_{t \leq \lceil \log_2(4(\Delta_{\min}(\alpha)\vee\widetilde{\beta})^{-1}\rceil} 2^{2t} \min_{\lambda \in \tX}\max_{\bx \in \S_t} \|\bx\|_{\left(\sum_{\bx \in \X}\lambda_t(\bx)\phi(\bx)\phi(\bx)^T+\gamma I\right)^{-1}}^2\\
    % %%%%%%%%%%%%%%%%%%%%%%%%%
    % & =   \min_{\lambda \in \tX} \max_{t \leq \lceil\log_2(2\Delta_{\min}^{-1}(\alpha))\rceil} \max\left\{\max_{\phi(\bx') - \phi(\bx) \in \S_t^\text{Above}} 2^{2t}\|\phi(\bx') - \phi(\bx)\|_{(H(\lambda)+\gamma I)^{-1}}^2, \max_{\phi(\bx') - \phi(\bx) \in \S_t^\text{Below}} 2^{2t}\|\phi(\bx') - \phi(\bx)\|_{(H(\lambda)+\gamma I)^{-1}}^2\right\}\\
    % %%%%%%%%%%%%%%%%%%%%%%%%%
    % & \stackrel{\text{Lemmas~\ref{lem:melk_active_good}, \ref{lem:melk_active_bad}}}{\leq}     16\min_{\lambda \in \tX} \max_{t \leq \lceil\log_2(2\Delta_{\min}^{-1}(\alpha))\rceil} \max\left\{\max_{\bx: \bx \in G^\phi_\alpha \land \exists \bx': \phi(\bx') - \phi(\bx) \in \S_t^\text{Above}}\max_{\bx': \phi(\bx') - \phi(\bx) \in \S_t^\text{Above}} \frac{\|\phi(\bx') - \phi(\bx)\|_{(H(\lambda)+\gamma I)^{-1}}^2}{(\alpha - (\phi(\bx') - \phi(\bx))^T\theta_\ast)^2},\right. \\
    % &\hspace{5cm}\left. \max_{\bx: \bx \in (G^\phi_\alpha)^c \land \exists \bx': \phi(\bx') - \phi(\bx) \in \S_t^\text{Below}}\max_{\bx': \phi(\bx') - \phi(\bx) \in \S_t^\text{Below}} \frac{\|\phi(\bx') - \phi(\bx)\|_{(H(\lambda)+\gamma I)^{-1}}^2}{((\phi(\bx') - \phi(\bx))^T\theta_\ast - \alpha)^2}\right\}\\
    % %%%%%%%%%%%%%%%%%%%%%%%%%
    & \leq 16\min_{\lambda \in \tX} \max_{\bx}\frac{\|\phi(\bx)\|_{(\sum_{\bx \in \X}\lambda_t(\bx)\phi(\bx)\phi(\bx)^T+\gamma I)^{-1}}^2}{\max\{(\phi(\bx)^T\theta_\ast - \alpha)^2, \widetilde{\beta}^2\}}
\end{align*}
Plugging this along with $c=4$ and $c_1=2$ for Theorem \ref{thm:supp_robust_estimator}
from RIPS with the Catoni estimator 
in completes the proof.
\end{proof}

\section{Proofs for Implicit Level Set Estimation}

\subsection{Lower Bounds}
\begin{proof}[Proof of Theorem~\ref{thm:mult_lower_bound}]
Recall that in this setting, $h = 0$ and $\phi(\bx) = \bx$. 
By \cite{kaufmann2016complexity}, we have that the any $\delta$-PAC algorithm for all-$\epsilon$ requires 
$$ \min_\lambda \frac{KL(1-\delta, \delta)}{\min_{\theta' \in \text{Alt}(\theta_\ast)}\|\theta' - \theta_\ast\|_{A(\lambda)}}$$
where $\text{Alt}(\theta_\ast)$ is the set of alternates such that $G_\epsilon(\theta_\ast) \neq G_\epsilon(\theta')$ for any $\theta' \in \text{Alt}(\theta_\ast)$. The set of alternates may be decomposed as 
\begin{align*}
    \Alt(\theta_\ast) = \left(\bigcup_{\bx\in G_{\epsilon}(\theta_\ast)} \{\theta': \bx\not \in G_{\epsilon}(\theta')\} \right) \cup \left(\bigcup_{\bx\in G_{\epsilon}(\theta_\ast)^c} \{\theta': \bx\in G_{\epsilon}(\theta')\} \right)
\end{align*}
By Lemma~\ref{lem:mult_alle_sets}, $\bx \in G_\epsilon \iff \forall \bx': \theta_\ast^T(\bx - (1-\epsilon)\bx') > 0$. Hence, the set of alternates for any $\bx \in G_\epsilon(\theta_\ast)$ such that $\bx \in G_\epsilon^c(\theta')$ for any $\theta' \in \Alt(\theta_\ast)$
is given by 
$$A_{\bx} : =\bigcup_{\bx' \in \X} \{\theta \in \R^d: \theta^T(\bx - (1-\epsilon)\bx') < 0\}. $$
Furthermore, by Lemma~\ref{lem:mult_alle_sets} $\bx \in G_\epsilon^c \iff \exists \bx': \theta_\ast^T(\bx - (1-\epsilon)\bx') < 0$. Hence, for any $\bx \in G_\epsilon^c(\theta_\ast)$ the set of alternates such that $\bx \in G_\epsilon(\theta')$ for any $\theta' \in \Alt(\theta_\ast)$ is given by 
$$A_{\bx} := \bigcap_{\bx' \in \X} \{\theta \in \R^d: \theta^T(\bx - (1-\epsilon)\bx') > 0\}. $$
Next, we discuss how to project onto $A_{\bx}$. As this set is open, to be precise, we should take a point in the interior and consider the limit for a sequence approaching the boundary. For brevity, we simply project onto the closure and consider the closures of the $A_{\bx}$ sets. 
Using the decomposition of $\Alt(\theta_\ast)$ we have that
\begin{align*}
    \min_{\theta' \in \text{Alt}(\theta_\ast)}\|\theta' - \theta_\ast\|_{A(\lambda)} = \min_{\bx} \min_{\theta' \in A_{\bx}}\|\theta' - \theta_\ast\|_{A(\lambda)} = \min_{\S \in \{G_\epsilon,G_\epsilon^c \}}\min_{\bx \in \S} \min_{\theta \in A_{\bx}}\|\theta' - \theta_\ast\|_{A(\lambda)}.
\end{align*}
Reminiscent of Lemma~\ref{lem:proj_onto_diff}, we define
\begin{align*}
    \theta_{\bx,\bx'}^\epsilon(\lambda) := \theta_\ast - [\theta_\ast^T(\bx - (1-\epsilon)\bx')]\frac{\A(\lambda)^{-1}(\bx - (1-\epsilon)\bx'))}{\|\bx - (1-\epsilon)\bx'\|_{\A(\lambda)^{-1}}^2}.
\end{align*}
For $\bx \in G_\epsilon(\theta_\ast)$, using Lemma~\ref{lem:proj_onto_diff},
$$\min_{\theta' \in A_{\bx}}\|\theta' - \theta_\ast\|_{A(\lambda)} = \min_{\theta' \in \bigcup_{\bx' \in \X} \{\theta \in \R^d: \theta^T(\bx - (1-\epsilon)\bx') < 0\}}\|\theta' - \theta_\ast\|_{A(\lambda)} =  \min_{\bx'}\|\theta_{\bx,\bx'}^\epsilon(\lambda) - \theta_\ast\|_{A(\lambda)} $$
where the latter equality follows since projecting onto a union of hyperplanes is achieved by the projection onto the closest constituent. 

For $\bx \in G_\epsilon^c(\theta_\ast)$ note that $A_{\bx}$ is an intersection of half spaces $\{\theta \in \R^d: \theta^T(\bx - (1-\epsilon)\bx) > 0\}$ for $\bx' \in \X$. As it is not in general possible to give a closed form expression for projection onto an intersection of convex sets. However, we may at a (possibly very loose) minimum note that the projection onto the union of the hyperplanes is at least as far as the projection onto the furthest hyperplane. 
Therefore, for any $\bx \in G_\epsilon(\theta_\ast)^c$, 
$$\min_{\theta' \in A_{\bx}}\|\theta' - \theta_\ast\|_{A(\lambda)} = \min_{\theta' \in \bigcap_{\bx' \in \X} \{\theta \in \R^d: \theta^T(\bx - (1-\epsilon)\bx') > 0\}}\|\theta' - \theta_\ast\|_{A(\lambda)} \leq  \max_{\bx'}\|\theta_{\bx,\bx'}^\epsilon(\lambda) - \theta_\ast\|_{A(\lambda)} $$
Hence
we have that 
\begin{align*}
    \min_{\theta' \in \text{Alt}(\theta_\ast)}\|\theta' - \theta_\ast\|_{A(\lambda)} \leq \min\left\{\min_{\bx \in G_\epsilon}\min_{\bx'}\|\theta_{\bx,\bx'}^\epsilon(\lambda)- \theta_\ast\|_{A(\lambda)}, \min_{\bx \in G_\epsilon^c}\max_{\bx'}\|\theta_{\bx,\bx'}^\epsilon(\lambda) - \theta_\ast\|_{A(\lambda)}\right\}.
\end{align*}
Note that 
$$\|\theta_{\bx,\bx'}^\epsilon(\lambda) - \theta_\ast\|_{A(\lambda)} = 2\frac{(\theta_\ast^T(\bx - (1-\epsilon)\bx'))^2}{\|\bx -(1-\epsilon)\bx'\|_{A(\lambda)^{-1}}^2} $$
by Theorem 2 of \cite{fiez2019sequential}. 
Hence, any $\delta$-PAC algorithm requires
\begin{align*}
    2\min_{\lambda}\max\left\{\max_{\bx \in G_\epsilon} \max_{\bx'}\frac{\|\bx -(1-\epsilon)\bx'\|_{A(\lambda)^{-1}}^2}{(\theta_\ast^T(\bx - (1-\epsilon)\bx'))^2}, \max_{\bx \in G_\epsilon^c} \min_{\bx'}\frac{\|\bx -(1-\epsilon)\bx'\|_{A(\lambda)^{-1}}^2}{(\theta_\ast^T(\bx - (1-\epsilon)\bx'))^2}\right\} KL(1-\delta, \delta)
\end{align*}
samples in expectation. 
Noting that $KL(1-\delta, \delta) \geq \log(1/2.4\delta)$ completes the proof. 
\end{proof}

\subsection{Comparison to the lower bound of 
\cite{mason2020finding}}
Here, we compare the sample complexity given in Theorem~\ref{thm:MILK_complex} to the result of Mason et al., \cite{mason2020finding} studying the problem of finding all $\epsilon$-good arms in multi-armed bandits. Our setting captures this problem in the special case that $\phi(\bx) = \bx$, $\bx_i = e_i \in \R^{|\X|}$, $h = 0$, and $\widetilde{\beta} = 0$. Additionally, take $\gamma \rightarrow 0$. 
For consistency with the notation of \cite{mason2020finding}, let $\mu_i = f(\bx_i)$ and $|\X| = n$. 
In this setting, the problem of implicit level set estimation reduces to identifying the set $\{i: \mu_i > (1-\epsilon)\mu_1\}$ where we assume without loss of generality that the means are sorted in descending order such that $\mu_1\geq \mu_2 \geq \cdots \geq \mu_{n}$. 

\begin{lemma}\label{lem:east_compare}
The term $H^\texttt{MILK}(\theta_\ast) = cH_{(ST)^2}$ for a constant $c$ where $H_{(ST)^2}$ is the complexity parameter of the $(ST)^2$ algorithm from \cite{mason2020finding}. 
\end{lemma}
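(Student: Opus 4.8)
The plan is to reduce both complexity parameters to explicit sums of inverse squared gaps over the arms and then match them termwise. First I would substitute the multi-armed bandit instantiation $\phi(\bx_i) = e_i$, $h = 0$, $\widetilde\beta = 0$ and let $\gamma \to 0$. Since $A^{(\gamma)}(\lambda) = \sum_k \lambda_k e_k e_k^\top + \gamma I \to \mathrm{diag}(\lambda)$, the design norms collapse to $\|e_i - (1-\epsilon)e_j\|_{A^{(\gamma)}(\lambda)^{-1}}^2 = \lambda_i^{-1} + (1-\epsilon)^2\lambda_j^{-1}$ and the inner products become $\mu_i - (1-\epsilon)\mu_j$. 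This turns both $H_\lambda^{\texttt{MILK-}G_\epsilon}$ and $H_\lambda^{\texttt{MILK-}G_\epsilon^c}$ into fully explicit ratios in the allocation weights $\lambda$ and the sorted means $\mu_1 \geq \cdots \geq \mu_n$.

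Next I would resolve the inner maximizations over the witness arm $\bx'$. For a good arm $i \in G_\epsilon$, the denominator $(\mu_i - (1-\epsilon)\mu_j)^2$ is minimized at the index of the largest mean, so the binding witness is essentially the best arm and the relevant gap is $\mu_i - (1-\epsilon)\mu_1 > 0$; for a bad arm $i \in G_\epsilon^c$ the denominator is already fixed to the $\bx_\ast$ witness, and the numerator maximum over $j$ contributes the term $(1-\epsilon)^2/\min_j \lambda_j$, reflecting that the best arm must itself be sampled enough to certify the threshold. With these simplifications each quantity becomes a maximum over $i$ of a ratio of the form $(\lambda_i^{-1} + (1-\epsilon)^2\lambda_1^{-1})/\Delta_i^2$, with $\Delta_i$ the appropriate good or bad gap.

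The central step is then to evaluate the outer minimization over $\lambda \in \tX$ of the maximum of these two terms. Here I would use the standard simplex identity $\min_{\lambda\in\tX}\max_i a_i/\lambda_i = \sum_i a_i$, attained by the water-filling allocation $\lambda_i \propto \sqrt{a_i}$, together with a generalization that accommodates the shared $\lambda_1^{-1}$ contribution coupling every arm to the best arm. Carrying this out shows that $\min_\lambda\{H_\lambda^{\texttt{MILK-}G_\epsilon}\vee H_\lambda^{\texttt{MILK-}G_\epsilon^c}\}$ equals, up to a universal constant, $\sum_i \Delta_i^{-2}$ with the gaps $\Delta_i$ exactly those appearing in the $(ST)^2$ complexity of \citet{mason2020finding}. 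I would establish both directions of the constant-factor equivalence: for $H^\texttt{MILK}(\theta_\ast) \leq c\,H_{(ST)^2}$ I plug in the explicit allocation implicitly used by $(ST)^2$ and bound the minimax from above, while for $H^\texttt{MILK}(\theta_\ast) \geq c'\,H_{(ST)^2}$ I lower bound the max over $i$ for any feasible $\lambda$ using $\sum_i\lambda_i = 1$.

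The hard part will be the coupling between the good-arm and bad-arm contributions: both share a single allocation $\lambda$ through the common $\lambda_1^{-1}$ term (the cost of estimating the best arm), so the minimax does not cleanly separate into independent per-arm problems and the water-filling argument must be run more carefully than in the fully decoupled case. Tracking the resulting constant $c$ uniformly across the sufficiency term (identifying $G_\epsilon$) and the verification term (rejecting $G_\epsilon^c$) is the main bookkeeping obstacle.
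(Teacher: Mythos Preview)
Your reduction to the diagonal case is right, but the plan goes off track at the point where you identify the structure of $H_{(\mathrm{ST})^2}$. You treat it as $\sum_i \Delta_i^{-2}$ with a single per-arm gap $\Delta_i = |\mu_i-(1-\epsilon)\mu_1|$, but that is not what the $(\mathrm{ST})^2$ complexity of \citet{mason2020finding} looks like. Writing $\tilde\alpha_\epsilon=\min_{i\in G_\epsilon}\mu_i-(1-\epsilon)\mu_1$ and $\tilde\beta_\epsilon=\min_{i\in G_\epsilon^c}(1-\epsilon)\mu_1-\mu_i$, the correct quantity is
\[
H_{(\mathrm{ST})^2}\;=\;\sum_{i=1}^n \max\!\left\{\frac{1}{((1-\epsilon)\mu_1-\mu_i)^2},\ \frac{1}{(\mu_1+\tfrac{\tilde\alpha_\epsilon}{1-\epsilon}-\mu_i)^2},\ \frac{1}{(\mu_1+\tfrac{\tilde\beta_\epsilon}{1-\epsilon}-\mu_i)^2}\right\}.
\]
Each arm carries a max of \emph{three} gaps, two of which are global and can be arbitrarily smaller than $|\mu_i-(1-\epsilon)\mu_1|$. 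Your water-filling argument, aimed only at the direct threshold gap, will therefore not recover the right answer even up to constants.

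This feeds back into the witness analysis. For $i\in G_\epsilon$ you assert that ``the binding witness is essentially the best arm,'' but the numerator $\lambda_i^{-1}+(1-\epsilon)^2\lambda_j^{-1}$ also varies with $j$, so the maximizing $j$ need not be $j=1$. The paper's proof extracts \emph{two} lower bounds from $H^{\texttt{MILK-}G_\epsilon}$: choosing $j=1$ and dropping $\lambda_j^{-1}$ gives the $(\mu_i-(1-\epsilon)\mu_1)^{-2}$ term, while instead choosing $i=k:=\arg\min_{G_\epsilon}\mu_i$, dropping $\lambda_i^{-1}$, and rescaling by $(1-\epsilon)^{-2}$ gives $(\mu_1+\tfrac{\tilde\alpha_\epsilon}{1-\epsilon}-\mu_j)^{-2}$ for every $j$. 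An analogous pair from $H^{\texttt{MILK-}G_\epsilon^c}$ produces the $(1-\epsilon)\mu_1-\mu_i$ and the $\tilde\beta_\epsilon$ terms. Only after assembling all three families and solving $\min_\lambda\max_i$ does the sum above drop out. You also need the standing assumption $\epsilon\in[1/2,1)$ from \citet{mason2020finding}, which is what lets the $(1-\epsilon)^2$ factors be absorbed into the constant; without it the matching fails. The upper bound direction is then obtained by plugging in the allocation $\lambda_i$ proportional to the per-arm maximum of the three inverse-squared gaps.
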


In particular, \cite{mason2020finding} show in Theorem 4.1 that a complexity of $H_{(ST)^2}$ is optimal up to logarithmic factors for any fixed $\delta$ via a moderate confidence bound. This exceeds the lower bound given in Theorem~\ref{thm:mult_lower_bound} specialized to this case. 
In particular, this highlights that the lower bound given in Theorem~\ref{thm:mult_lower_bound} is not achievable except possibly as $\delta \rightarrow 0$. Instead, we show that \texttt{MILK} achieves the optimal non-asymptotic sample complexity for finding all $\epsilon$-good arms. 

\begin{proof}[Proof of Lemma~\ref{lem:east_compare}]
First, we recall some notation from \cite{mason2020finding} necessary for this lemma. Let $\Tilde{\alpha}_\epsilon = \min_{i \in G_\epsilon}\mu_i - (1-\epsilon)\mu_1$ and let $\Tilde{\beta}_\epsilon = \min_{i \in G_\epsilon^c}(1-\epsilon)\mu_1 - \mu_i$. For brevity, we let $k = \arg\min_{i\in G_\epsilon}\mu_i$ and $k+1 = \arg\max_{i\in G_\epsilon^c}\mu_i$ where we take $n > k$. If this condition does not hold the same argument as below suffices ignoring all terms in $G_\epsilon^c$. Hence we have that $\frac{\mu_k}{1-\epsilon} = \mu_1 + \frac{\Tilde{\alpha}_\epsilon}{1-\epsilon}$ and $\frac{\mu_{k+1}}{1-\epsilon} = \mu_1 - \frac{\Tilde{\beta}_\epsilon}{1-\epsilon}$. Furthermore, \cite{mason2020finding} restrict to the case of $\epsilon \in [1/2, 1)$. 

% In the case that $\bx_i = \bx_i = e_i$ this problem reduces to standard multi-armed bandits.
We begin by lower bounding the complexity parameter $H^\texttt{MILK}(\theta_\ast)$. We analyze the two terms given in Theorem~\ref{thm:MILK_complex}, $H^\texttt{MILK1}$ and $H^\texttt{MILK2}$ separately. 
$H^\texttt{MILK1}$ reduces to
\begin{align*}
    \max_{e_i \in G_\epsilon}\max_{e_j} \frac{\|e_j - e_i\|_{A(\lambda)^{-1}}^2}{(\mu_i - (1-\epsilon)\mu_j)^2} &= \max_{e_i \in G_\epsilon}\max_{e_j} \frac{1/\lambda_i + 1/\lambda_j}{(\mu_i - (1-\epsilon)\mu_j)^2} \\
    %%%%%%%%%%%%%%%%%%%
    & \geq  \max\left\{\max_{e_i \in G_\epsilon} \frac{1/\lambda_i }{(\mu_i - (1-\epsilon)\mu_1)^2}, \max_{e_j} \frac{1/\lambda_j}{(\frac{\mu_k}{1-\epsilon} - \mu_j)^2} \right\}\\
    %%%%%%%%%%%%%%%%%%%%%
    & = \max\left\{\max_{e_i \in G_\epsilon} \frac{1/\lambda_i }{(\mu_1 - \mu_i -\epsilon)^2}, \max_{e_j} \frac{1/\lambda_j}{(\mu_1 +\frac{\Tilde{\alpha}_\epsilon}{1-\epsilon} - \mu_j)^2} \right\}
\end{align*}
where the final step follows by the definition of $\Tilde{\alpha}_\epsilon$. The penultimate step follows by first maximizing over $i \in G_\epsilon$ which introduces a factor of $\mu_k$. Then we may multiply the denominator by $(1-\epsilon)^2 / (1-\epsilon)^2$ and upper bound $(1-\epsilon)^2 \leq 0.25 < 1$ since $\epsilon \geq 1/2$ to achieve the result.

$H^\texttt{MILK2}$ reduces to 
\begin{align*}
    \max_{e_i \in G_\epsilon^c}\max_{e_j} \frac{\|e_j - e_i\|_{A(\lambda)^{-1}}^2}{((1-\epsilon)\mu_1 - \mu_i)^2} &= \max_{e_i \in G_\epsilon^c}\max_{e_j} \frac{1/\lambda_i + 1/\lambda_j}{((1-\epsilon)\mu_1 - \mu_i)^2} \\
    %%%%%%%%%%%%%%%%%%%
    & \geq \max\left\{\max_{e_i \in G_\epsilon^c} \frac{1/\lambda_i }{((1-\epsilon)\mu_1 - \mu_i)^2}, \max_{e_i \in G_\epsilon^c}\max_{e_j} \frac{1/\lambda_j}{((1-\epsilon)\mu_1 - \mu_i)^2} \right\}\\
    %%%%%%%%%%%%%%%%%%%
    & \geq \max\left\{\max_{e_i \in G_\epsilon^c} \frac{1/\lambda_i }{((1-\epsilon)\mu_1 - \mu_i)^2}, \max_{e_j} \frac{1/\lambda_j}{((1-\epsilon)\mu_1 - \mu_{k+1})^2} \right\}\\
    %%%%%%%%%%%%%%%%%%%
    & \geq \max\left\{\max_{e_i \in G_\epsilon^c} \frac{1/\lambda_i }{((1-\epsilon)\mu_1 - \mu_i)^2}, \max_{e_j} \frac{1/\lambda_j}{(\mu_1 - \frac{\mu_{k+1}}{1-\epsilon})^2} \right\}\\
    %%%%%%%%%%%%%%%%%%%
    & = \max\left\{\max_{e_i \in G_\epsilon^c} \frac{1/\lambda_i }{((1-\epsilon)\mu_1 - \mu_i)^2}, \max_{e_j} \frac{1/\lambda_j}{\left(\left(\mu_1 - \frac{\Tilde{\beta}_\epsilon}{1-\epsilon}\right) - \mu_1\right)^2} \right\}\\
    %%%%%%%%%%%%%%%%%%%
    & = \max\left\{\max_{e_i \in G_\epsilon^c} \frac{1/\lambda_i }{((1-\epsilon)\mu_1 - \mu_i)^2}, \max_{e_j} \frac{1/\lambda_j}{\left(\left(\mu_1 + \frac{\Tilde{\beta}_\epsilon}{1-\epsilon}\right) - \mu_1\right)^2} \right\}\\
    %%%%%%%%%%%%%%%%%%%
    & \geq \max\left\{\max_{e_i \in G_\epsilon^c} \frac{1/\lambda_i }{((1-\epsilon)\mu_1 - \mu_i)^2}, \max_{e_j} \frac{1/\lambda_j}{\left(\left(\mu_1 + \frac{\Tilde{\beta}_\epsilon}{1-\epsilon}\right) - \mu_j\right)^2} \right\}
\end{align*}
where the final step follows since $\mu_1 + \frac{\Tilde{\beta}_\epsilon}{1-\epsilon} > \mu_i \forall i$ and $\mu_j \leq \mu_1$. The third inequality follows by the same approach as taken for $H^\texttt{MILK1}$ of multiplying the denominator by $(1-\epsilon)^2 / (1-\epsilon)^2$.

Hence, we have that 
\begin{align*}
    H(\theta_\ast) \geq \min_\lambda\max_i \max\left\{\frac{\frac{1}{\lambda_i}}{((1-\epsilon)\mu_1 - \mu_i)^2}, \frac{\frac{1}{\lambda_i}}{(\mu_1 + \frac{\Tilde{\alpha}_\epsilon}{1-\epsilon} -\mu_i)^2}, \frac{\frac{1}{\lambda_i}}{(\mu_1 + \frac{\Tilde{\beta}_\epsilon}{1-\epsilon} -\mu_i)^2} \right\}.
\end{align*}
Solving for $\lambda$ gives
\begin{align*}
    H(\theta_\ast) \geq \sum_{i=1}^n\max\left\{\frac{1}{((1-\epsilon)\mu_1 - \mu_i)^2}, \frac{1}{(\mu_1 + \frac{\Tilde{\alpha}_\epsilon}{1-\epsilon} -\mu_i)^2}, \frac{1}{(\mu_1 + \frac{\Tilde{\beta}_\epsilon}{1-\epsilon} -\mu_i)^2} \right\} = c_1\cdot H_{(ST)^2}
\end{align*}
for a constant $c_1$.
To upper bound $H^\texttt{MILK}(\theta_\ast)$, we may choose a specific $\lambda$. Choosing 
\begin{align*}
    \lambda_i := \frac{\max\{((1-\epsilon)\mu_1 - \mu_i)^{-2}, (\mu_1 + \frac{\Tilde{\alpha}_\epsilon}{1-\epsilon} -\mu_i)^{-2}, (\mu_1 + \frac{\Tilde{\beta}_\epsilon}{1-\epsilon} -\mu_i)^{-2}\}}{\sum_j\max\{((1-\epsilon)\mu_1 - \mu_j)^{-2}, (\mu_1 + \frac{\Tilde{\alpha}_\epsilon}{1-\epsilon} -\mu_j)^{-2}, (\mu_1 + \frac{\Tilde{\beta}_\epsilon}{1-\epsilon} -\mu_j)^{-2}\}},
\end{align*}
a similar computation shows that $H^\texttt{MILK}(\theta_\ast) \leq c_2H_{(ST)^2}$ for a constant $c_2$. 
\end{proof}

\subsection{Upper Bound}

First we restate Theorem~\ref{thm:MILK_complex} bounding the sample complexity of \texttt{MILK}. 
\begin{theorem}
Fix $\delta >0$, threshold $\alpha > 0$, tolerance $\widetilde\beta$, and regularization $\gamma>0$. 
Define the quantities $\Delta_{\min}^{\text{Above}}(\epsilon) = \min_{\bx \in G_\epsilon}\min_{\bx'}  \theta_\ast^{\top}(\phi(\bx) - (1-\epsilon)\phi(\bx')) $ and $\Delta_{\min}^{\text{Below}}(\epsilon) = \min_{\bx \in G_\epsilon^c}\max_{\bx': (\phi(\bx) - (1-\epsilon)\phi(\bx'))^{\top}\theta_\ast < 0}  (\phi(\bx) - (1-\epsilon)\phi(\bx'))^{\top}\theta_\ast$, and $\Delta_{\min} = \min\left\{\Delta_{\min}^{\text{Above}}(\epsilon), \Delta_{\min}^{\text{Below}}(\epsilon) \right\}$.
Define also 
\begin{align*}
    \bar \beta(\epsilon) = \min \{ \beta>0 : 4(\sqrt{\gamma} \|\theta_* \| + h)(2+ \sqrt{f(\X,\left\{\by \in \Y^\epsilon(\X \times \X): |\by^{\top}\theta_\ast| \leq  \beta\right\};\gamma)}) \leq  \beta \}.
\end{align*}
With probability $1-\delta$, \texttt{MILK} returns a set $\widehat{\mc{R}} = (\X \setminus \hB_t)$ at a time $T_\delta$ such that 
\begin{align*}
    \{\bx \in \X: f(\bx) \geq (1-\epsilon)f(\bx_\ast) + \bar\beta(\epsilon)\} \subseteq \widehat{\mc{R}} \subseteq \{\bx \in \X: f(\bx) \geq (1-\epsilon)f(\bx_\ast) - \widetilde\beta - \bar\beta(\epsilon)\}
\end{align*}
and for any $\alpha$, $\widetilde\beta$ such that $\max(\Delta_{\min}(\epsilon), \widetilde\beta) \geq \bar\beta(\epsilon)$
\begin{align*}
    T_\delta \leq & 
     256(B^2 + \sigma^2) H^\texttt{MILK}(\theta_\ast) \log\left(\frac{4|\X|^2\lceil\log_2(4(\Delta_{\min}(\epsilon)\vee \widetilde{\beta})^{-1})\rceil^2}{\delta}\right) + 2\log\left(\frac{|\X|}{\delta}\right)\lceil\log_2(4(\Delta_{\min}(\epsilon)\vee \widetilde{\beta})^{-1})\rceil
\end{align*}
for a sufficiently large constant $c$
where $H^\texttt{MILK}(\theta_\ast) = \min_{\lambda \in \tX}\max\left\{H_\lambda^\texttt{MILK1}(\theta_\ast), H_\lambda^\texttt{MILK2}(\theta_\ast)\right\}$ and 
\begin{align*}
    H_\lambda^\texttt{MILK1}(\theta_\ast) := \max_{\bx \in G_\epsilon}\max_{\bx'\in \X} \frac{\|\phi(\bx) - (1-\epsilon)\phi(\bx')\|_{(A(\lambda)+\gamma I)^{-1}}^2}{\max\{((\phi(\bx) - (1-\epsilon)\phi(\bx'))^{\top}\theta_\ast)^2, \widetilde \beta^2\}}
\end{align*}    
\begin{align*}
    H_\lambda^\texttt{MILK2}(\theta_\ast) := \max_{\bx \in G_\epsilon^c}\max_{\bx'} \frac{\|\phi(\bx) - (1-\epsilon)\phi(\bx')\|_{(A(\lambda)+\gamma I)^{-1}}^2}{\max\{((\phi(\bx) - (1-\epsilon)\phi(\bx_\ast))^{\top}\theta_\ast)^2, \widetilde \beta^2\}}.
\end{align*}
\end{theorem}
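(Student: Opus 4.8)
The plan is to replicate the four-stage argument used for \texttt{MELK} (construct the high-probability event, prove soundness of the declarations, prove shrinkage of the active set, and sum the $N_t$), with $\phi(\bx)$ replaced throughout by the difference vectors $\phi(\bx)-(1-\epsilon)\phi(\bx')$ and with the active set $\A_t$ now living in $\X\times\X$. First I would establish the analogue of Lemma~\ref{lem:high_prob_event_melk}: on the event
$$\cE_t=\Big\{\big|(\htheta_t-\theta_\ast)^\top(\phi(\bx)-(1-\epsilon)\phi(\bx'))\big|\le 2^{-t}+(\sqrt{\gamma}\|\theta_\ast\|+h)\sqrt{f(\X,\Y^\epsilon(\A_t);\gamma)}\ \ \forall (\bx,\bx')\in\A_t\Big\},$$
we have $\P(\bigcup_t\cE_t^c)\le\delta$. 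This follows from Theorem~\ref{thm:robust_estimator} applied with $\V=\Y^\epsilon(\A_t)$: since $|\Y^\epsilon(\A_t)|\le|\X|^2$, the union bound over difference vectors costs $\log(|\X|^2/\delta)$, exactly the factor built into $q_t$, and a second union bound over rounds with $\delta_t=\delta/2t^2$ sums to $\delta$.

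Second, I would translate the declarations of \texttt{MILK} through Lemma~\ref{lem:mult_alle_sets}. Defining $\bar t$ and $\bar\beta(\epsilon)$ over difference vectors exactly as for \texttt{MELK} (so that for $t\le\bar t$ the bias $(\sqrt{\gamma}\|\theta_\ast\|+h)\sqrt{f}$ is dominated by $2^{-t}$ and the width is at most $2\cdot2^{-t}$), the soundness statements read: on $\bigcap_t\cE_t$, if $\bx$ enters $\hB_t$ because some pair satisfies $\htheta_t^\top(\phi(\bx)-(1-\epsilon)\phi(\bx'))<-2\cdot2^{-t}$ then $\theta_\ast^\top(\phi(\bx)-(1-\epsilon)\phi(\bx'))<0$, so $\bx\in G_\epsilon^c$; and $\bx$ enters $\hG_t$ only after every pair $(\bx,\bx')$ has been removed through $\htheta^\top(\phi(\bx)-(1-\epsilon)\phi(\bx'))>2\cdot2^{-t}$, forcing $\theta_\ast^\top(\phi(\bx)-(1-\epsilon)\phi(\bx'))>0$ for all $\bx'$, i.e. $\bx\in G_\epsilon$. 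These mirror Lemmas~\ref{lem:melk_good_arms} and~\ref{lem:melk_bad_arms}. The shrinkage statement is that any surviving pair obeys $|\theta_\ast^\top(\phi(\bx)-(1-\epsilon)\phi(\bx'))|\le 2^{-t+2}$, so $\Y^\epsilon(\A_t)\subseteq\Y^\epsilon(\S_t)$ for the small-gap pair set $\S_t$ and $f(\X,\Y^\epsilon(\A_t);\gamma)\le f(\X,\Y^\epsilon(\S_t);\gamma)$.

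Third, correctness of $\X\setminus\hB_{t_{\max}}$ follows by the identical case analysis of Lemmas~\ref{lem:melk_contains} and~\ref{lem:melk_is_contained}, splitting on whether termination occurs before or after $\bar t$ and whether it is triggered by the tolerance round $\lceil\log_2(4/\widetilde\beta)\rceil$ or by $\hG_t\cup\hB_t=\X$, using $\bar\beta(\epsilon)>4h$ to close each contradiction. For the complexity, since $\max(\Delta_{\min}(\epsilon),\widetilde\beta)\ge\bar\beta(\epsilon)$ I would bound $t_{\max}\le\lceil\log_2(4(\Delta_{\min}(\epsilon)\vee\widetilde\beta)^{-1})\rceil\le\bar t$, sum $T_\delta\le\sum_{t\le t_{\max}}N_t$, pull out the $\log(|\X|/\delta)$ and $(B^2+\sigma^2)\log(\cdots)$ factors, and convert $\sum_t 2^{2t}g(\lambda_t;\A_t;\gamma)$ into $H^\texttt{MILK}(\theta_\ast)$ via the averaging bound $\tfrac1{t_{\max}}\sum_t 2^{2t}\min_\lambda\max\le\min_\lambda\max_t 2^{2t}\max$, the shrinkage $\A_t\subseteq\S_t$, and the elementary fact that $\by\in\Y^\epsilon(\S_t)$ implies $2^{2t}\le 16/\max\{(\by^\top\theta_\ast)^2,\widetilde\beta^2\}$.

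The main obstacle is producing the two \emph{distinct} complexity terms, and in particular the asymmetric denominator of $H_\lambda^\texttt{MILK2}$. For $\bx\in G_\epsilon$ every pair $(\bx,\bx')$ must be individually certified positive, so numerator and denominator both range over $\bx'$ with matched gaps, yielding $H_\lambda^\texttt{MILK1}$. For $\bx\in G_\epsilon^c$ it suffices to detect a single negative witness, and the key observation is that $\bx_\ast$ is always the \emph{strongest} witness: $\theta_\ast^\top(\phi(\bx)-(1-\epsilon)\phi(\bx'))=f(\bx)-(1-\epsilon)f(\bx')$ is most negative at $\bx'=\bx_\ast$, so $\bx$ is guaranteed removed by round $\approx\log_2(4/|(\phi(\bx)-(1-\epsilon)\phi(\bx_\ast))^\top\theta_\ast|)$, which is precisely why the denominator of $H_\lambda^\texttt{MILK2}$ is frozen at the $\bx_\ast$ gap while the numerator still ranges over all $\bx'$ (those pairs stay active, and so feed the per-round design $g(\lambda;\A_t;\gamma)$, until $\bx$ is removed). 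Since both families of difference vectors sit in the \emph{same} per-round design, the conversion produces $\min_\lambda\max\{H_\lambda^\texttt{MILK1},H_\lambda^\texttt{MILK2}\}$ (a max, not a sum). The delicate steps are verifying that the $\bx_\ast$ witness—rather than a weaker one—governs the termination round through $\Delta_{\min}^{\text{Below}}(\epsilon)$, and noting (via Theorem 4.1 of \citet{mason2020finding}) that this finite-time $G_\epsilon^c$ term, not the $\min_{\bx'}$ term appearing in the lower bound of Theorem~\ref{thm:mult_lower_bound}, is the rate actually achievable.
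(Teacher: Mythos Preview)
Your proposal is correct and follows essentially the same approach as the paper: the same high-probability event over difference vectors (Lemma~\ref{lem:high_prob_event_RIPS_milk}), the same soundness lemmas for $\hG_t$ and $\hB_t$ (Lemmas~\ref{lem:milk_good_arms},~\ref{lem:milk_bad_arms}), the same correctness case analysis (Lemmas~\ref{lem:milk_contains},~\ref{lem:milk_is_contained}), and the same summation/averaging argument converting $\sum_t 2^{2t}g(\lambda_t;\A_t;\gamma)$ into $H^\texttt{MILK}$. The only presentational difference is that the paper packages the $\bx_\ast$-witness observation directly into the shrinkage set $\S_t^{\text{Below}}$ (Lemma~\ref{lem:milk_active_bad} records both $|(\phi(\bx)-(1-\epsilon)\phi(\bx'))^\top\theta_\ast|\le 2^{-t+2}$ \emph{and} $(\phi(\bx)-(1-\epsilon)\phi(\bx_\ast))^\top\theta_\ast\ge -2^{-t+2}$), whereas you state the generic shrinkage in paragraph two and supply the $\bx_\ast$ bound separately in paragraph four; both routes land on the same asymmetric denominator in $H_\lambda^\texttt{MILK2}$.
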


Now we show a high probability concentration result that we will use for the remainder of this section. 

\begin{lemma}\label{lem:high_prob_event_RIPS_milk}
For any $\V \subset \Y^\epsilon(\X\times \X)$ define $f(\X,\V;\gamma) = \min_{\lambda\in\triangle_{\X}} \max_{\bv \in \V} \| \bv \|_{( \sum_{x \in \X} \lambda_x \phi(x)\phi(x)^\top + \gamma I)^{-1}}^2$.\\
In each round $t$, define the event
$$\cE_t = \{|\by^T(\htheta_t - \theta_\ast)| \leq 2^{-t} + \left(\sqrt{\gamma}\|\theta_\ast\| + h \right) \sqrt{f(\X, \Y^\epsilon(\Y^\epsilon(\A_t)); \gamma)} \ \forall \ \by \in \Y^\epsilon(\A_t)\}$$\\
Holds $\P(\bigcup_{t=1}^\infty \cE_t^c) \leq \delta$.
\end{lemma}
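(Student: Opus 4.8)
The plan is to follow the proof of Lemma~\ref{lem:high_prob_event_melk} essentially verbatim, the only structural change being that the linear functionals of interest are now evaluated along the difference vectors $\by = \phi(\bx) - (1-\epsilon)\phi(\bx')$ collected in $\Y^\epsilon(\A_t)$ rather than along the raw features $\phi(\bx)$ (here I read the quantity $f(\X,\Y^\epsilon(\Y^\epsilon(\A_t));\gamma)$ in the statement as $f(\X,\Y^\epsilon(\A_t);\gamma)$, which is what the algorithm actually uses). I would fix a round $t$ and invoke the RIPS guarantee (Theorem~\ref{thm:robust_estimator}, in the sharper $W^{(v)}$ form of Theorem~\ref{thm:supp_robust_estimator}) with target set $\V = \Y^\epsilon(\A_t)$, sample count $\tau = N_t$, regularization $\gamma$, sampling distribution $\lambda_t$, and confidence $\delta_t = \delta/2t^2$. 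Since $\A_t \subseteq \X\times\X$ and each ordered pair contributes a single difference vector, $|\Y^\epsilon(\A_t)| = |\A_t| \le |\X|^2$, which supplies the log-cardinality factor. Crucially, the samples are drawn from $\lambda_t$, and $\lambda_t$ minimizes $g(\lambda;\A_t;\gamma) = \max_{\by\in\Y^\epsilon(\A_t)}\|\by\|_{A^{(\gamma)}(\lambda)^{-1}}^2$, so it is exactly the min-max design over $\V$ that RIPS presumes; hence Theorem~\ref{thm:robust_estimator} applies without modification.

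The per-round calculation then proceeds exactly as in the explicit case. Writing $f_t := f(\X,\Y^\epsilon(\A_t);\gamma)$, the choice of $\lambda_t$ gives $\max_{\by\in\Y^\epsilon(\A_t)}\|\by\|_{A^{(\gamma)}(\lambda_t)^{-1}}^2 = f_t$, so every $\by\in\Y^\epsilon(\A_t)$ satisfies $\|\by\|_{A^{(\gamma)}(\lambda_t)^{-1}}\le\sqrt{f_t}$. The RIPS bound reads, for each such $\by$ with probability at least $1-\delta_t/|\X|^2$,
\begin{align*}
    |\by^\top(\htheta_t-\theta_\ast)| \le \|\by\|_{A^{(\gamma)}(\lambda_t)^{-1}}\left(\sqrt{\gamma}\|\theta_\ast\| + h + c\sqrt{\tfrac{B^2+\sigma^2}{N_t}\log(2t^2|\X|^2/\delta)}\right).
\end{align*}
Substituting the algorithmic budget $N_t \ge q_t = 16\cdot 2^{2t}\,g(\lambda_t;\A_t;\gamma)(B^2+\sigma^2)\log(2t^2|\X|^2/\delta)$ collapses the statistical term to $2^{-t}/\sqrt{f_t}$ (with $c=4$ from the Catoni instantiation), and multiplying through by $\|\by\|_{A^{(\gamma)}(\lambda_t)^{-1}}\le\sqrt{f_t}$ yields precisely the per-vector inequality $|\by^\top(\htheta_t-\theta_\ast)|\le 2^{-t} + (\sqrt{\gamma}\|\theta_\ast\| + h)\sqrt{f_t}$ defining $\cE_t$.

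I would close with two union bounds: one over the at most $|\X|^2$ vectors in $\Y^\epsilon(\A_t)$, upgrading the per-vector statement to $\P(\cE_t^c)\le\delta_t$, and one over rounds, giving $\P(\bigcup_{t\ge 1}\cE_t^c)\le\sum_{t\ge1}\delta/(2t^2)\le\delta$. I do not anticipate a genuine obstacle here — the argument is a direct transcription of Lemma~\ref{lem:high_prob_event_melk}. The only point warranting care, and the analogue of the one place the explicit proof is terse, is verifying that $N_t$ simultaneously satisfies the minimal sample-size hypothesis $\tau\ge c_1\log(|\V|/\delta_t)$ required by the RIPS theorem and is large enough to push the statistical error below $2^{-t}/\sqrt{f_t}$; both follow from the definition of $q_t$ together with the explicit floor $N_t\ge 2\log(|\X|/\delta)$ imposed in the algorithm, so the bookkeeping of the difference-vector cardinality and the RIPS constants is all that remains.
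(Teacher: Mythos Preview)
Your proposal is correct and follows the paper's proof essentially line for line: apply the RIPS guarantee to $\V=\Y^\epsilon(\A_t)$ with confidence $\delta_t$, use $\|\by\|_{A^{(\gamma)}(\lambda_t)^{-1}}\le\sqrt{f(\X,\Y^\epsilon(\A_t);\gamma)}$ and the definition of $N_t$ to collapse the statistical term to $2^{-t}$, then union bound over the at most $|\X|^2$ vectors and over rounds via $\sum_t \delta/(2t^2)\le\delta$. Your observation that the $\Y^\epsilon(\Y^\epsilon(\A_t))$ in the statement should read $\Y^\epsilon(\A_t)$ is also exactly how the paper's own proof treats it.
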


\begin{proof}
Using Theorem~\ref{thm:robust_estimator}, for any $\by \in \Y^\epsilon(\A_t)$
we have that with probability at least $1-\delta_t/|\X|^2$
\begin{align*}
    |\by^T(\htheta_t - \theta_\ast)| & \leq \|\by\|_{\left(\sum_{\bx\in\X}\lambda_{\bx} \phi(\bx)\phi(\bx)^T+\gamma I\right)^{-1}}\left(\sqrt{\gamma}\|\theta_\ast\| + h + c \sqrt{\tfrac{(B^2 + \sigma^2)}{N_t}\log(2t^2|\X|^2/\delta)}\right) \\
    & \leq \sqrt{f(\X, \Y^\epsilon(\A_t); \gamma)}\left(\sqrt{\gamma}\|\theta_\ast\| + h + 2^{-t}/\sqrt{f(\X, \Y^\epsilon(\A_t); \gamma)}\right)  \\
    % & = \sqrt{\frac{2\|\bx_i - \bx_j\|_{\left(\sum_{x\in \X}\lambda_t(\bx)\phi(\bx)\phi(\bx)^T\right)^{-1}}^2(1+\varphi)\log\left(\frac{2|\X|^2}{\delta_t}\right)}{2 \cdot 2^{2t} \max \|\bx_i - \bx_j\|_{\left(\sum_{x\in \X}\lambda_t(\bx)\phi(\bx)\phi(\bx)^T\right)^{-1}}^2 (1+\varphi) \log\left(\frac{2|\X|^2}{\delta_t}\right)}} \\
    & \leq 2^{-t} + \left(\sqrt{\gamma}\|\theta_\ast\| + h \right) \sqrt{f(\X, \Y^\epsilon(\A_t); \gamma)}
\end{align*}
Since $|\Y^\epsilon(\A_t)| \leq |\X|^2$, $\cE_t$ holds for all $\by \in \Y^\epsilon(\A_t)$ with probability $1-\delta_t$ via a union bound. Taking a second union bound over rounds, we have that 
$$\P\left(\bigcup_{t=1}^\infty \cE_t^c\right) \leq \sum_{t=1}^\infty \P(\cE_t^c) \leq \sum_{t=1}^\infty \delta_t  = \sum_{t=1}^\infty \frac{\delta}{2t^2} \leq \delta  $$
\end{proof}

Define 
\begin{align*}
    \bar t &= \max \{ t : (\sqrt{\gamma} \|\theta_* \|_2 + h)(2+ \sqrt{f(\X,\left\{\by \in \Y^\epsilon(\X\times\X): |\by^T\theta_\ast| \leq 2^{-t + 2}\right\};\gamma)}) \leq 2^{-t} \} .
\end{align*}
As we will see in Lemmas~\ref{lem:milk_active_good} and \ref{lem:milk_active_bad},
$$\Y^\epsilon(\A_t) \subset \left\{\by \in \Y^\epsilon(\X\times\X): |\by^T\theta_\ast| \leq 2^{-t + 1}\right\}.$$
Thus for $t\leq \bar t$, holds on $\bigcap_t \cE_t$ that 
\begin{align*}
    \forall \by \in \Y^\epsilon(\A_t) \;, \; |\by^T(\htheta_t - \theta_\ast)| \leq 2 \cdot 2^{-t}.
\end{align*}

\begin{lemma}\label{lem:milk_good_arms}
On $\bigcap_t \cE_t$, when $t\leq \bar t$ holds $\hG_t \subset G^{\phi}_\epsilon :
= \{ \bx  \ : \   (\phi(\bx) - (1-\epsilon)\phi(\bx'))^T \theta_\ast > 0 \ \forall \ \bx' \in \X\}$. 
\end{lemma}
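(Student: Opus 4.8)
The plan is to mirror the argument for Lemma~\ref{lem:melk_good_arms} in the explicit setting, translating the single-threshold condition into the pairwise condition that defines $G^\phi_\epsilon$. First I would unpack what membership $\bx \in \hG_t$ means in terms of the algorithm's update rule. Inspecting Algorithm~\ref{alg:MILK}, an arm $\bx$ is added to $\hG$ only once the set $\{(\bx,\bx') : \bx'\in\X\}$ has been entirely removed from the active set, and the only branch that removes a pair $(\bx,\bx')$ without first routing $\bx$ into $\hB$ (which would forever exclude it from $\hG$) is the branch triggered by $\htheta_{t'}^\top(\phi(\bx)-(1-\epsilon)\phi(\bx')) > 2\cdot 2^{-t'}$. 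Hence $\bx\in\hG_t$ implies that for every $\bx'\in\X$ there is a round $t'\le t$ at which the pair $(\bx,\bx')$ was still active and satisfied $\htheta_{t'}^\top(\phi(\bx)-(1-\epsilon)\phi(\bx')) > 2\cdot 2^{-t'}$.

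Next I would invoke the concentration event. Because $(\bx,\bx')\in\A_{t'}$ at that round, the difference vector $\by := \phi(\bx)-(1-\epsilon)\phi(\bx')$ lies in $\Y^\epsilon(\A_{t'})$, so event $\cE_{t'}$ from Lemma~\ref{lem:high_prob_event_RIPS_milk} applies to $\by$. Using that $t'\le t\le\bar t$, together with the containment $\Y^\epsilon(\A_{t'})\subset\{\by:|\by^\top\theta_\ast|\le 2^{-t'+1}\}$ and the definition of $\bar t$, the misspecification/regularization term is dominated by $2^{-t'}$, yielding the clean deviation bound $|\by^\top(\htheta_{t'}-\theta_\ast)|\le 2\cdot 2^{-t'}$ recorded just before the lemma. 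Subtracting this from the algorithmic inequality gives
\[
(\phi(\bx)-(1-\epsilon)\phi(\bx'))^\top\theta_\ast \;>\; 2\cdot 2^{-t'} - 2\cdot 2^{-t'} \;=\; 0.
\]
Since the choice of $\bx'$ was arbitrary, $(\phi(\bx)-(1-\epsilon)\phi(\bx'))^\top\theta_\ast>0$ for every $\bx'\in\X$, which is exactly the defining condition for $\bx\in G^\phi_\epsilon$. This chain is the precise analogue of the implication $\stackrel{\bigcap_t\cE_t}{\implies}$ used in Lemma~\ref{lem:melk_good_arms}.

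I expect the main obstacle to be the combinatorial bookkeeping rather than any analytic estimate: one must carefully justify (i) that an arm can only enter $\hG$ through the ``greater-than'' branch and never after being sent to $\hB$, and (ii) that at the removal round the relevant difference vector is genuinely still in $\Y^\epsilon(\A_{t'})$, so that $\cE_{t'}$ may legitimately be applied to it. Once these two facts are pinned down, the analytic content is the same one-line subtraction as in the explicit case. I would note explicitly that the containment $\Y^\epsilon(\A_t)\subset\{\by:|\by^\top\theta_\ast|\le 2^{-t+1}\}$ (established in Lemmas~\ref{lem:milk_active_good} and \ref{lem:milk_active_bad}) is used here only to certify that $t'\le\bar t$ suffices for the clean $2\cdot 2^{-t'}$ bound, so no circularity is introduced.
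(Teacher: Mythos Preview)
Your proposal is correct and follows essentially the same approach as the paper: unpack membership in $\hG_t$ as the pairwise inequality $\htheta_{t'}^\top(\phi(\bx)-(1-\epsilon)\phi(\bx'))>2\cdot 2^{-t'}$ for every $\bx'$ at some $t'\le t$, add and subtract $\theta_\ast$, and apply the clean deviation bound $|\by^\top(\htheta_{t'}-\theta_\ast)|\le 2\cdot 2^{-t'}$ valid for $t'\le\bar t$. Your write-up is in fact more careful than the paper's, which compresses the combinatorial justification into a bare ``$\iff$'' and leaves the forward reference to Lemmas~\ref{lem:milk_active_good}--\ref{lem:milk_active_bad} implicit; your explicit remarks on points (i), (ii), and the absence of circularity are exactly the details a reader would need to fill in.
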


\begin{proof}
\begin{align*}
    \bx \in \hG_t & \iff \forall \bx' \ \exists t_{x'} \leq \bar t  \ : \ (\phi(\bx) - (1-\epsilon)\phi(\bx'))^T\htheta_{t_{x'}} \geq  2 \cdot 2^{-t_{x'}}\\
    & \iff \forall \bx' \ \exists t_{x'} \leq \bar t \ : \  (\phi(\bx) - (1-\epsilon)\phi(\bx'))^T(\htheta_{t_{x'}} - \theta_\ast) + (\phi(\bx) - (1-\epsilon)\phi(\bx'))^T \theta_\ast \geq 2 \cdot 2^{-t_{x'}} \\
    & \stackrel{\bigcap_t\cE_{t}}{\implies}  \forall \bx'  \ : \   (\phi(\bx) - (1-\epsilon)\phi(\bx'))^T \theta_\ast > 0\\
    & \iff \bx \in G^\phi_\epsilon.
\end{align*}
\end{proof}

% \begin{lemma}\label{lem:milk_all_good_in}
% Define $\Delta_{\min}^{\text{Above}}(\epsilon) := \min_{\bx'}\min_{\bx \in G^\phi_\epsilon} \epsilon - \theta_\ast^T(\phi(\bx') - \phi(\bx)) $. 
% On $\bigcap_t \cE_t$, $G^\phi_\epsilon \subset \hG_t $ for $\bar t \geq t \geq \log_2(4/\Delta_{\min}^{\text{Above}}(\epsilon))$.
% \end{lemma}

% \begin{proof}
% For any $\bx \in G^\phi_\epsilon$ and any $\bx'$, define $t_{x'} := \log_2(4 (\epsilon - (\phi(\bx') - \phi(\bx))^T\theta_\ast)^{-1})$. The condition $\bar t \geq t \geq t_{x'}$ implies
% \begin{align*}
%     (\phi(\bx') - \phi(\bx))^T\htheta_t & = (\phi(\bx') - \phi(\bx))^T(\htheta_t - \theta_\ast) + (\phi(\bx') - \phi(\bx))^T \theta_\ast \\
%     & \stackrel{\bigcap_t\cE_{t}}{<} (\phi(\bx') - \phi(\bx))^T \theta_\ast + 2 \cdot 2^{-t} \\
%     & \leq \epsilon - 2 \cdot 2^{-t}
% \end{align*}
% and the left hand side of the equivalence implies that $\phi(\bx') - \phi(\bx)$ is removed from $\A$. Hence, $\bar t \geq t \geq \max_{\bx'}\log(4 (\epsilon - (\phi(\bx') - \phi(\bx))^T\theta_\ast)^{-1})$ implies that $\not \exists \bx' : (\phi(\bx') - \phi(\bx)) \in \A$ and hence $\bx \in \hG_t$. Further maximizing over all $\bx \in G^\phi_\epsilon$ completes the proof. 
% \end{proof}

\begin{lemma}\label{lem:milk_bad_arms}
On $\bigcap_t \cE_t$, when $t\leq \bar t$ holds $\hB_t \subset (G^\phi_\epsilon)^c$.
\end{lemma}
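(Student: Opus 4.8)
The plan is to mirror the proof of Lemma~\ref{lem:melk_bad_arms}, replacing the single feature $\phi(\bx)$ by a difference direction $\by := \phi(\bx) - (1-\epsilon)\phi(\bx')$ and the explicit threshold $\alpha$ by $0$. First I would unwind the membership condition from Algorithm~\ref{alg:MILK}: the only line that enlarges $\hB$ adds $\bx$ when, at some round $t' \leq t$, there is a $\bx'$ with $(\bx,\bx') \in \A_{t'}$ and $\htheta_{t'}^{\top}(\phi(\bx) - (1-\epsilon)\phi(\bx')) < -2\cdot 2^{-t'}$. Hence
\[
\bx \in \hB_t \iff \exists\, t' \leq t,\ \exists\, \bx' \ :\ (\bx,\bx') \in \A_{t'} \text{ and } \htheta_{t'}^{\top}\by < -2\cdot 2^{-t'}.
\]
Because $(\bx,\bx') \in \A_{t'}$, the direction $\by$ belongs to $\Y^\epsilon(\A_{t'})$, so the round-$t'$ concentration guarantee applies to it.

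Next I would decompose $\htheta_{t'}^{\top}\by = (\htheta_{t'} - \theta_\ast)^{\top}\by + \theta_\ast^{\top}\by$ and invoke the bound established just before this lemma: on $\bigcap_t \cE_t$ and for $t' \leq \bar t$ one has $|(\htheta_{t'} - \theta_\ast)^{\top}\by| \leq 2\cdot 2^{-t'}$ for every $\by \in \Y^\epsilon(\A_{t'})$. Substituting into the triggering inequality gives
\[
\theta_\ast^{\top}\by \;<\; -2\cdot 2^{-t'} - (\htheta_{t'}-\theta_\ast)^{\top}\by \;\leq\; -2\cdot 2^{-t'} + 2\cdot 2^{-t'} \;=\; 0,
\]
so $\theta_\ast^{\top}(\phi(\bx) - (1-\epsilon)\phi(\bx')) < 0$. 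By the definition of $G^\phi_\epsilon$, exhibiting even one such witness $\bx'$ is precisely the statement $\bx \in (G^\phi_\epsilon)^c$, giving the desired containment.

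Since the heavy lifting is done by the concentration event and the active-set containment $\Y^\epsilon(\A_{t'}) \subset \{\by : |\by^{\top}\theta_\ast| \leq 2^{-t'+1}\}$ from Lemmas~\ref{lem:milk_active_good} and \ref{lem:milk_active_bad}, there is no substantive obstacle here. The only points demanding care are (i) verifying that the witness pair $(\bx,\bx')$ really was active at the elimination round $t'$, so that $\cE_{t'}$ covers $\by$, and (ii) restricting attention to $t' \leq \bar t$, which is where the $2\cdot 2^{-t'}$ slack bound holds; the two $\pm 2\cdot 2^{-t'}$ terms then cancel and return the strict sign $\theta_\ast^{\top}\by < 0$ needed to place $\bx$ in $(G^\phi_\epsilon)^c$.
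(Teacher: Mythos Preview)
Your proposal is correct and follows essentially the same approach as the paper's proof: unwind the algorithm's trigger condition for adding $\bx$ to $\hB_t$, decompose $\htheta_{t'}^{\top}\by$ into error plus true value, and use the $2\cdot 2^{-t'}$ concentration bound (valid for $t'\leq \bar t$ via the active-set containment) to conclude $\theta_\ast^{\top}(\phi(\bx)-(1-\epsilon)\phi(\bx'))<0$. Your write-up is in fact cleaner than the paper's, which contains a typo in the implication step (it reads ``$>\epsilon$'' where ``$<0$'' is intended); your explicit check that the witness pair is active at round $t'$, so that $\cE_{t'}$ covers the direction $\by$, is also a detail the paper glosses over.
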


\begin{proof}
\begin{align*}
    \bx \in \hB_t & \iff \exists \bx', t_{x'} \leq \bar t \ : \ (\phi(\bx) - (1-\epsilon)\phi(\bx'))^T\htheta_t \leq - 2 \cdot 2^{-t_{x'}} \\
    & \iff \exists \bx', t_{x'}\leq \bar t \ : \ (\phi(\bx) - (1-\epsilon)\phi(\bx'))^T(\htheta_t - \theta_\ast) + (\phi(\bx) - (1-\epsilon)\phi(\bx'))^T\theta_\ast \leq - 2 \cdot 2^{-t_{x'}} \\
    & \stackrel{\bigcap_t\cE_{t}}{\implies} \exists \bx' \ : \ (\phi(\bx) - (1-\epsilon)\phi(\bx'))^T\theta_\ast > \epsilon \\
    & \iff \bx \in (G^\phi_\epsilon)^c.
\end{align*}
\end{proof}

\begin{lemma}\label{lem:milk_active_good}
On the event $\bigcap_t \cE_t$ for $t \leq \bar{t}$, 
\begin{align*}
    \{(\bx, \bx'): (\bx, \bx'), \bx \in G_\epsilon^\phi\} \subset &\left\{(\bx, \bx') \bigg| |(\phi(\bx) - (1-\epsilon)\phi(\bx'))^T\theta_\ast| \leq 2^{-t + 2}\right\} =: \S_t^\text{Above}
\end{align*}
\end{lemma}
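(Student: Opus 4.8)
The plan is to prove this as the implicit-setting analog of Lemma~\ref{lem:melk_active_good}, arguing by contraposition that any still-active pair whose gap is too large must already have been eliminated from the active set in an earlier round. Throughout write $\by := \phi(\bx) - (1-\epsilon)\phi(\bx')$ for the difference direction associated with the pair $(\bx,\bx')$.

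First I would record the sign information. If $\bx \in G_\epsilon^\phi$, then by the definition of $G_\epsilon^\phi$ in Lemma~\ref{lem:milk_good_arms} we have $(\phi(\bx) - (1-\epsilon)\phi(\bx''))^\top\theta_\ast > 0$ for every $\bx''\in\X$, and in particular $\by^\top\theta_\ast > 0$. Hence $|\by^\top\theta_\ast| = \by^\top\theta_\ast$ and it suffices to bound this quantity from above.

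Next, fix a round $s \leq \bar t$ and suppose the pair $(\bx,\bx')$ is still active, so that $\by \in \Y^\epsilon(\A_s)$ and the event $\cE_s$ supplies $|\by^\top(\htheta_s - \theta_\ast)| \leq 2\cdot 2^{-s}$. I would then show that a large gap forces removal: if $\by^\top\theta_\ast > 2^{-s+2}$, then on $\cE_s$ we get $\htheta_s^\top\by \geq \by^\top\theta_\ast - 2\cdot 2^{-s} > 2\cdot 2^{-s}$, which is precisely the good-direction elimination test in Algorithm~\ref{alg:MILK}, so the pair $(\bx,\bx')$ is dropped and cannot belong to $\A_{s+1}$. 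Two bookkeeping points must be checked here, and they are the only real departure from the explicit case: (i) the pair could in principle leave $\A_{s+1}$ for the other reason, namely that $\bx$ is sent to $\hB$ and all $\bx$-pairs are purged, but on $\cE_s$ the inequality $\htheta_s^\top\by \geq \by^\top\theta_\ast - 2\cdot 2^{-s} > -2\cdot 2^{-s}$ rules out the bad-direction test, so by the reasoning already used in Lemma~\ref{lem:milk_bad_arms} this never happens for $\bx\in G_\epsilon^\phi$; and (ii) I must confirm $\by$ indeed lies in $\Y^\epsilon(\A_s)$ so that $\cE_s$ applies, which holds precisely because the pair is active in round $s$.

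Finally I would instantiate the displayed implication at $s = t-1$: a pair that survives into $\A_t$ was not removed in round $t-1$, so the contrapositive of the previous step yields $\by^\top\theta_\ast \leq 2^{-(t-1)+2}$, and combined with $\by^\top\theta_\ast > 0$ this places $(\bx,\bx')$ in $\S_t^\text{Above}$. The main obstacle is not analytic — the confidence step is immediate from $\cE_s$ — but rather the careful accounting of the active set: unlike the explicit algorithm where each arm is removed at most once, here the removal of a single pair is entangled with the removal of a whole arm, so one must verify that $\bx\in G_\epsilon^\phi$ guarantees $\bx$ is never added to $\hB$ (exactly Lemma~\ref{lem:milk_bad_arms}) before the per-pair elimination argument applies cleanly. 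The residual factor-of-two discrepancy between the honest $2^{-(t-1)+2}$ and the stated $2^{-t+2}$ is absorbed into the paper's convention of deferring constants, matching the identical looseness in Lemma~\ref{lem:melk_active_good}.
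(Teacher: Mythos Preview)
Your proposal is correct and follows essentially the same argument as the paper. The paper streamlines your bookkeeping by working with absolute values throughout: from the reverse triangle inequality $|\by^\top\htheta_t| \geq |\by^\top\theta_\ast| - 2\cdot 2^{-t}$ it concludes that $|\by^\top\theta_\ast| \geq 2^{-t+2}$ forces $|\by^\top\htheta_t| \geq 2\cdot 2^{-t}$, which triggers \emph{one} of the two elimination rules in Algorithm~\ref{alg:MILK} and removes the pair either way. Since the lemma only asks which pairs remain active, it is irrelevant which mechanism fires, so your sign analysis and bookkeeping points (i) and (ii) are unnecessary; the paper applies the restriction to $\bx\in G_\epsilon^\phi$ only at the end, to the conclusion rather than the argument. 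Your observation about the factor-of-two slack is exactly right and matches the paper, which literally proves the containment for $\A_{t+1}$ with threshold $2^{-t+2}$ and then re-indexes.
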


\begin{proof}
On $\bigcap_t\cE_t$ for $t \leq \bar{t}$, for any $\by \in \A_t$
$$|\by^T\htheta_t| \geq |\by^T\theta_\ast| - |\by^T(\htheta_t - \theta_\ast)| \stackrel{\cE_t}{\geq } |\by^T\theta_\ast| - 2 \cdot 2^{-t}.$$
For $\by$ such that $|\by^T\theta_\ast| \geq 2 \cdot 2^{-t+1}$, the above implies that 
$$|\by^T\htheta_{t}| \geq 2 \cdot 2^{-t}.$$
By the elimination condition, this implies that $\by$ is removed from $\A_{t}$. 
Hence
$$\A_{t+1} \subset \left\{\by \in \Y(\X): |\by^T\theta_\ast - \epsilon| \leq 2 \cdot 2^{-t + 1}\right\}.$$
Specializing this argument to $\{(\bx, \bx'): (\bx, \bx'), \bx \in G_\epsilon^\phi\} \subset \A_t$ completes the proof. 
\end{proof}

% \begin{lemma}\label{lem:milk_active_bad}
% On the event $\bigcap_t \cE_t$, 
% \begin{align*}
%     \Y^\epsilon(\A_t)\cap \left(\bigcup_{\bx \in (G^\phi_\epsilon)^c}\Y_{z}\right) \subset & \left\{\phi(\bx') - \phi(\bx) \bigg| |(\phi(\bx') - \phi(\bx))^T\theta_\ast - \epsilon| \leq 2^{-t + 3}\right.\\
%     &\hspace{1cm}\left.\text{ and } \bx \in (G^\phi_\epsilon)^c
%     \text{ s.t. } \max_{\bx'': (\phi(\bx'') - \phi(\bx))^T\theta_\ast > \epsilon}\{(\phi(\bx'') - \phi(\bx))^T\theta_\ast - \epsilon\} \leq 2^{-t + 3} \right\} =: \S_t^\text{Below}
% \end{align*}
% \end{lemma}
\begin{lemma}\label{lem:milk_active_bad}
On the event $\bigcap_t \cE_t$ for $t \leq \bar{t}$, 
\begin{align*}
    \{(\bx, \bx'): (\bx, \bx'), \bx \in (G_\epsilon^\phi)^c\}  \subset & \left\{(\bx, \bx') \bigg| |(\phi(\bx) - (1-\epsilon)\phi(\bx'))^T\theta_\ast - \epsilon| \leq 2^{-t + 2}\right.\\
    &\left.\text{ and } \{(\phi(\bx) - (1-\epsilon)\phi(\bx_\ast))^T\theta_\ast\} \geq -2^{-t + 2} \right\} =: \S_t^\text{Below}
\end{align*}
\end{lemma}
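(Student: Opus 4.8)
The plan is to mirror the proof of Lemma~\ref{lem:milk_active_good}, but now verifying the \emph{two} separate conditions defining $\S_t^{\text{Below}}$, while accounting for the fact that a bad arm $\bx \in (G_\epsilon^\phi)^c$ can leave the active set through \texttt{MILK}'s whole-arm $\hB$ rule rather than only the per-pair rule. Fix an active pair $(\bx,\bx')$ with $\bx \in (G_\epsilon^\phi)^c$ and abbreviate $\by_{\bx'} := \phi(\bx) - (1-\epsilon)\phi(\bx')$. As established just before Lemma~\ref{lem:milk_active_good}, on $\bigcap_t \cE_t$ and for $t \le \bar t$ we have $|\by^\top(\htheta_t - \theta_\ast)| \le 2\cdot 2^{-t}$ for every difference vector $\by \in \Y^\epsilon(\A_t)$; this concentration bound is the only probabilistic input I will use.

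For the first condition I would repeat the argument of Lemma~\ref{lem:milk_active_good} essentially verbatim: if $|\by_{\bx'}^\top \theta_\ast| \ge 2^{-t+2}$, then the triangle inequality together with the bound above gives $|\by_{\bx'}^\top \htheta_t| \ge 2^{-t+2} - 2\cdot 2^{-t} = 2\cdot 2^{-t}$, so $(\bx,\bx')$ satisfies either the per-pair elimination test ($\htheta_t^\top \by_{\bx'} > 2\cdot 2^{-t}$) or the $\hB$ test ($\htheta_t^\top \by_{\bx'} < -2\cdot 2^{-t}$) and therefore leaves the active set. Contrapositively, any surviving pair satisfies $|\by_{\bx'}^\top \theta_\ast| < 2^{-t+2}$.

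The genuinely new ingredient is the second, one-sided condition against the optimal arm, namely $\by_{\bx_\ast}^\top \theta_\ast \ge -2^{-t+2}$ with $\by_{\bx_\ast} := \phi(\bx) - (1-\epsilon)\phi(\bx_\ast)$. Here I would argue by cases on how the pair $(\bx,\bx_\ast)$ could have exited the active set. If $\by_{\bx_\ast}^\top \theta_\ast < -2^{-t+2}$ while $(\bx,\bx_\ast)$ is still active at round $t$, then $\htheta_t^\top \by_{\bx_\ast} \le \by_{\bx_\ast}^\top\theta_\ast + 2\cdot 2^{-t} < -2\cdot 2^{-t}$, which fires the $\hB$ rule and deletes \emph{every} pair containing $\bx$, contradicting that $\bx$ is active. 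The only other admissible way for $(\bx,\bx_\ast)$ to be gone is the per-pair rule in some earlier round $t'$, i.e. $\htheta_{t'}^\top \by_{\bx_\ast} > 2\cdot 2^{-t'}$, which on $\cE_{t'}$ forces $\by_{\bx_\ast}^\top \theta_\ast > 0 \ge -2^{-t+2}$; the pair cannot have left via the $\hB$ rule, since that again contradicts $\bx$ being active. Hence in every admissible case $\by_{\bx_\ast}^\top \theta_\ast \ge -2^{-t+2}$, which is the second condition.

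The main obstacle I anticipate is the bookkeeping induced by \texttt{MILK}'s coupling of all pairs sharing the arm $\bx$: the $\hB$ rule keyed on $\bx$ removes the pairs $(\bx,\cdot)$ and $(\cdot,\bx)$ simultaneously, so to reason about the particular pair $(\bx,\bx_\ast)$ from the mere activity of $\bx$ I must carefully track that an active $\bx$ has \emph{never} triggered the $\hB$ test, while still permitting its individual pairs to have been pruned by the per-pair rule. As in Lemma~\ref{lem:milk_active_good}, the argument really controls the pairs surviving round $t$'s processing, so the radius $2^{-t+2}$ carries the same harmless off-by-one in the round index already present there. Specializing the two displayed bounds to $\bx \in (G_\epsilon^\phi)^c$ then yields the claimed containment in $\S_t^{\text{Below}}$.
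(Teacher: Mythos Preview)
Your proposal is correct and follows the same approach as the paper: the first condition replays the argument of Lemma~\ref{lem:milk_active_good}, and for the second condition both you and the paper use the concentration bound to show that if $\by_{\bx_\ast}^\top\theta_\ast<-2^{-t+2}$ then the $\hB$ rule fires on $(\bx,\bx_\ast)$, removing every pair containing $\bx$. Your case analysis for the possibility that $(\bx,\bx_\ast)$ has already left the active set via the per-pair rule is in fact more explicit than the paper's treatment, which simply states the conclusion for pairs remaining in $\A_t$.
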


\begin{proof}
The guarantee that $|(\phi(\bx) - (1-\epsilon)\phi(\bx'))^T\theta_\ast - \epsilon| \leq 2^{-t + 2}$ for any $(\bx, \bx') \in\A_t$ follows by the same argument as Lemma~\ref{lem:milk_active_good}. For the additional statement, that $(\phi(\bx) - (1-\epsilon)\phi(\bx_\ast))^T\theta_\ast \geq -2^{-t + 2}$, note that if 
\begin{align*}
    (\phi(\bx) - (1-\epsilon)\phi(\bx_\ast))^T\htheta_t \leq -2^{-t + 1}
\end{align*}
then the pair $(\bx, \bx_\ast)$ is eliminated from $\A_t$. If 
\begin{align*}
    (\phi(\bx) - (1-\epsilon)\phi(\bx_\ast))^T\htheta_t \leq -2^{-t + 2},
\end{align*}
then using this and the event $\bigcap_t \cE_t$
\begin{align*}
(\phi(\bx) - 
   (1-\epsilon)\phi(\bx_\ast))^T\htheta_t & =  (\phi(\bx) - (1-\epsilon)\phi(\bx_\ast))^T(\htheta_t  - \theta_\ast)+ (1-\epsilon)\phi(\bx_\ast))^T\theta_\ast \leq -2^{-t+1}.
\end{align*}
Hence, the only pairs $(\bx, \bx_\ast)$ that remain in $\A_t$ where $\bx_\ast \in (G_\epsilon^\phi)^c$ are such that $(\phi(\bx) - (1-\epsilon)\phi(\bx_\ast))^T\theta_\ast\} \geq -2^{-t + 2}$. We conclude by noting that the above argument for $\bx_\ast$ could be repeated for any $\bx'$ such that $(\phi(\bx) - (1-\epsilon)\phi(\bx'))^T\theta_\ast < 0 $. 
\end{proof}

% \begin{proof}
% We have that $\Y^\epsilon(\A_t)\cap \left(\bigcup_{\bx \in (G^\phi_\epsilon)^c}\Y_{z}\right) = \bigcup_{\bx \in (G^\phi_\epsilon)^c} (\Y^\epsilon(\A_t) \cap \Y_z)$. By Lemma~\ref{lem:milk_all_bad_in}, we have that for any $\bx \in (G^\phi_\epsilon)^c$
% $$t \geq \min_{\bx': (\phi(\bx') - \phi(\bx))^T\theta_\ast > \epsilon}\log_2\left(\frac{4}{(\phi(\bx') - \phi(\bx))^T\theta_\ast - \epsilon} \right) $$
% implies that there exists $\bx'$ such that $(\phi(\bx') - \phi(\bx))^T\htheta_t > \epsilon + 2 \cdot  2^{-t}$. Hence, the elimination condition is satisfied and the remainder of $\Y_z$ is removed from $\Y^\epsilon(\A_t)$. 
% Therefore, $\Y_z \cap \A_{t+1} = \emptyset$.
% Hence,
% \begin{align*}
%     \bigcup_{\bx \in (G^\phi_\epsilon)^c} (\Y^\epsilon(\A_t) \cap \Y_z) \subset \left\{\phi(\bx') - \phi(\bx) \bigg| \bx \in (G^\phi_\epsilon)^c \text{ s.t. } \max_{\bx'': (\phi(\bx'') - \phi(\bx))^T\theta_\ast > \epsilon}\{(\phi(\bx'') - \phi(\bx))^T\theta_\ast - \epsilon\} \leq 2^{-(t - 3)} \right\}
% \end{align*}
% As in Lemma~\ref{lem:milk_active_good}, we have that 
% $$\A_{t+1} \subset \left\{\by \in \Y(\X): |\by^T\theta_\ast - \epsilon| \leq 2 \cdot 2^{-t + 1}\right\}.$$
% Intersecting this with the above step completes the proof. 
% \end{proof}

\textbf{Remark:} Lemmas~\ref{lem:milk_active_good} and \ref{lem:milk_active_bad} jointly imply that $\A_t \subset \S_t^\text{Above} \cup \S_t^\text{Below}=: \S_t$ for $t \leq \bar{t}$. Furthermore, $f(\X, \Y^\epsilon(\A_t), \gamma) \leq f(\X, \Y^\epsilon(\S_t), \gamma)$.

\textbf{Remark:}\\
The algorithm stops on either of two conditions. On one hand if $t \geq \lceil \log_2(4/\widetilde{\beta})\rceil =: t_\beta$, then it has achieved precision $\widetilde{\beta}$ as desired and it terminates. 
Otherwise, it terminates if $\hG_t \cup \hB_t = \X$. This occurs when $\widetilde{\beta}$ is very small. 
Define the quantities $\Delta_{\min}^{\text{Above}}(\epsilon) = \min_{\bx \in G_\epsilon}\min_{\bx'}  \theta_\ast^{\top}(\phi(\bx) - (1-\epsilon)\phi(\bx')) $ and $\Delta_{\min}^{\text{Below}}(\epsilon) = \min_{\bx \in G_\epsilon^c}\max_{\bx': (\phi(\bx) - (1-\epsilon)\phi(\bx'))^{\top}\theta_\ast < 0}  (\phi(\bx) - (1-\epsilon)\phi(\bx'))^{\top}\theta_\ast$, and $\Delta_{\min}(\epsilon) = \min\left\{\Delta_{\min}^{\text{Above}}(\epsilon), \Delta_{\min}^{\text{Below}}(\epsilon) \right\}$.
Recall
% To conclude on the correctness of the algorithm, we need to define a $\bar{\epsilon}$ such that such that for all $\bar \epsilon \leq \epsilon$ holds $\bar t \geq  \log_2(4/\Delta_{\min}^{\text{Below}}(\epsilon))$ and $\bar t \geq  \log_2(4/\Delta_{\min}^{\text{Above}}(\epsilon))$. 
% Recall 
\begin{align*}
    \bar t &= \max \{ t : (\sqrt{\gamma} \|\theta_* \|_2 + h)(2+ \sqrt{f(\X,\left\{\by \in \Y^\epsilon(\X\times\X): |\by^T\theta_\ast| \leq 4\cdot 2^{-t}\right\};\gamma)}) \leq 2^{-t} \} \\
    &= \max \{ t : 4(\sqrt{\gamma} \|\theta_* \|_2 + h)(2+ \sqrt{f(\X,\left\{\by \in \Y^\epsilon(\X\times\X): |\by^T\theta_\ast| \leq 4\cdot 2^{-t}\right\};\gamma)}) \leq 4\cdot 2^{-t} \} \\
    &= -2 + \max \{ t : 4(\sqrt{\gamma} \|\theta_* \|_2 + h)(2+ \sqrt{f(\X,\left\{\by \in \Y^\epsilon(\X\times\X): |\by^T\theta_\ast| \leq  2^{-t}\right\};\gamma)}) \leq  2^{-t} \} \\
    &= -3 + \log_2 (\min \{ \beta>0 : 4(\sqrt{\gamma} \|\theta_* \|_2 + h)(2+ \sqrt{f(\X,\left\{\by \in \Y^\epsilon(\X\times\X): |\by^T\theta_\ast| \leq  \beta\right\};\gamma)}) \leq  \beta \}) .
\end{align*}
This defines 
\begin{align*}
    \bar\beta(\epsilon) = \min \{ \beta>0 : 4(\sqrt{\gamma} \|\theta_* \|_2 + h)(2+ \sqrt{f(\X,\left\{\by \in\Y^\epsilon(\X\times\X): |\by^T\theta_\ast| \leq  \beta\right\};\gamma)}) \leq  \beta \}.
\end{align*}
Let $t_{\max}$ denote the random variable of the last round before the algorithm terminates. 
The following Lemmas give a guarantee on the set $\X \setminus \hB_t$ at termination. 

\begin{lemma}\label{lem:milk_contains}
On the event $\bigcap_{t=1}^\infty \cE_t$, \texttt{MILK} returns a set $(\X \setminus \hB_{t_{\max}})$ such that
$\{\bx: f(\bx) > (1-\epsilon)f(\bx_\ast) + \bar\beta(\epsilon)\} \subset (\X \setminus \hB_{t_{\max}})$. 
\end{lemma}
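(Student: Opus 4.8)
The plan is to mirror the proof of Lemma~\ref{lem:melk_contains} for \texttt{MELK}, adapting it to the pairwise structure of \texttt{MILK}. First I would translate the hypothesis $f(\bx) > (1-\epsilon)f(\bx_\ast) + \bar\beta(\epsilon)$ into a uniform margin statement on the difference vectors. Using the misspecification bound $|f(\bx) - \phi(\bx)^\top\theta_\ast| \leq h$ together with $f(\bx') \leq f(\bx_\ast)$ for every $\bx'$, one gets for all $\bx'\in\X$
\[
\theta_\ast^\top(\phi(\bx) - (1-\epsilon)\phi(\bx')) \geq f(\bx) - (1-\epsilon)f(\bx_\ast) - (2-\epsilon)h > \bar\beta(\epsilon) - 2h .
\]
Since the definition of $\bar\beta(\epsilon)$ forces $\bar\beta(\epsilon) \geq 8(\sqrt{\gamma}\|\theta_\ast\| + h) \geq 8h$, this margin is strictly positive, so $\bx \in G_\epsilon^\phi$ (by Lemma~\ref{lem:mult_alle_sets}) with a quantitative gap of at least $\bar\beta(\epsilon) - 2h$ on \emph{every} pair $(\bx,\bx')$. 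This is the quantity that must dominate the estimation error throughout.

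I would then argue by contradiction that $\bx$ is never placed in $\hB$, splitting on whether $t_{\max}\leq \bar t$ or $t_{\max} > \bar t$. In the first case, suppose $\bx$ is added to $\hB$ at some round $t \leq t_{\max}\leq \bar t$; the elimination rule then supplies a $\bx'$ with $\htheta_t^\top(\phi(\bx) - (1-\epsilon)\phi(\bx')) < -2\cdot 2^{-t}$. Invoking event $\cE_t$ from Lemma~\ref{lem:high_prob_event_RIPS_milk}, the remark that $\A_t \subset \S_t$ for $t\leq\bar t$ (Lemmas~\ref{lem:milk_active_good} and \ref{lem:milk_active_bad}) together with monotonicity of $f(\X,\cdot;\gamma)$, and the defining inequality of $\bar t$ to bound $(\sqrt{\gamma}\|\theta_\ast\| + h)\sqrt{f(\X,\Y^\epsilon(\S_t);\gamma)} \leq 2^{-t}$, the error on this direction is at most $2\cdot 2^{-t}$, yielding
\[
\theta_\ast^\top(\phi(\bx) - (1-\epsilon)\phi(\bx')) < -2\cdot 2^{-t} + 2\cdot 2^{-t} = 0,
\]
which contradicts the margin established above.

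For the second case, $t_{\max} > \bar t$, I would show that by round $\bar t$ every pair with $\bx$ in the first coordinate has already been removed through the positive branch (the one feeding $\hG$), so that $\bx \in \hG_{\bar t}$ and hence never reenters the active set as a first coordinate and can never afterward be added to $\hB$. Concretely, for any pair $(\bx,\bx')$ still active at round $\bar t$, event $\cE_{\bar t}$ and the same bound on the error give $\htheta_{\bar t}^\top(\phi(\bx) - (1-\epsilon)\phi(\bx')) \geq (\bar\beta(\epsilon) - 2h) - 2\cdot 2^{-\bar t} > 2\cdot 2^{-\bar t}$, where the last inequality uses $2^{-\bar t} \asymp \bar\beta(\epsilon)$ from the definition of $\bar t$ and the slack $\bar\beta(\epsilon) > 4h$; this triggers the positive elimination, so all such pairs vanish by round $\bar t$. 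Since the first-case argument already rules out $\bx$ entering $\hB$ at any round $\leq \bar t$, combining the two cases gives $\bx \notin \hB_{t_{\max}}$, i.e. $\bx \in \X\setminus\hB_{t_{\max}}$; applying this to every $\bx$ with $f(\bx) > (1-\epsilon)f(\bx_\ast) + \bar\beta(\epsilon)$ proves the claim.

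The main obstacle I anticipate is the bookkeeping of constants relating $2^{-\bar t}$ to $\bar\beta(\epsilon)$ and verifying that the positive margin $\bar\beta(\epsilon) - 2h$ strictly dominates the two-sided tolerance $2\cdot 2^{-\bar t}$ in the $\hG$ branch; this is where the factor-of-two slack in the definition of $\bar\beta(\epsilon)$, ensuring $\bar\beta(\epsilon) > 4h$, is essential. A secondary subtlety, absent in the explicit case, is the asymmetry of the quantifiers in \texttt{MILK}: membership in $\hB$ is triggered by a single bad pair (existential over $\bx'$), whereas membership in $\hG$ requires \emph{all} first-coordinate pairs to be eliminated (universal over $\bx'$). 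I must therefore verify that the margin holds uniformly over $\bx'$, which it does precisely because the bound in the first paragraph is independent of $\bx'$.
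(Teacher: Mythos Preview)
Your proposal is correct and follows essentially the same approach as the paper: the same two-case split on $t_{\max}\leq\bar t$ versus $t_{\max}>\bar t$, the same use of $\cE_t$ together with $\A_t\subset\S_t$ and the defining inequality of $\bar t$ to bound the estimation error by $2\cdot 2^{-t}$, and the same final numerology $2^{-\bar t+2}=\tfrac12\bar\beta(\epsilon)$ combined with $\bar\beta(\epsilon)>4h$. The only cosmetic difference is that you establish the uniform margin $\theta_\ast^\top(\phi(\bx)-(1-\epsilon)\phi(\bx'))>\bar\beta(\epsilon)-2h$ once up front, whereas the paper rederives it inline in each case.
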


\begin{proof}
Take any $\bx$ such that $f(\bx) > (1-\epsilon)f(\bx_\ast) + \bar\beta(\alpha)$ and recall that by assumption $|f(\bx) -\phi(\bx)^T\theta_\ast| \leq h$ for all $\bx \in \X$. 
We consider two cases. 
In the first case, assume that $t_{\max} \leq \bar{t}$. 
We claim that in this case $\not\exists t$ such that $\bx \in \hB_t$. We prove this by contradiction. Assume not. Then $\exists t$ and a $\bx'$ such that 
\begin{align*}
    &\htheta_t^T(\phi(\bx) - (1-\epsilon)\phi(\bx')) < - 2^{-t + 1} \\
    & \iff (\phi(\bx) - (1-\epsilon)\phi(\bx'))^T(\htheta_t - \theta_\ast) + (\phi(\bx) - (1-\epsilon)\phi(\bx'))^T\theta_\ast < - 2^{-t + 1} \\
    & \stackrel{\cE_t, t_{\max} \leq \bar{t}}{\implies} -2^{-t+1}  +  (\phi(\bx) - (1-\epsilon)\phi(\bx'))\theta_\ast < - 2^{-t + 1} \\
    & \iff (\phi(\bx) - (1-\epsilon)\phi(\bx'))\theta_\ast < 0 \\
    & \implies f(\bx) - (1-\epsilon)f(\bx') < h + (1-\epsilon)h
\end{align*}
Recall that we have assumed that $f(\bx) > (1-\epsilon)f(\bx_\ast) + \bar\beta(\alpha)$ and $\bar\beta(\epsilon) > 4h$ by definition. Hence, this implies that
\begin{align*}
   (1-\epsilon)f(\bx_\ast) - (1-\epsilon)f(\bx') < h + (1-\epsilon)h - \bar\beta(\alpha) < 0
\end{align*}
which is a contradiction since $f(\bx_\ast) \geq f(\bx')$ by definition. Hence, we have shown in the case that $t_{\max} \leq \bar{t}$, $\{\bx: f(\bx) > (1-\epsilon)f(\bx_\ast) + \bar\beta(\epsilon)\} \subset (\X \setminus \hB_{t_{\max}})$.

In the second case, assume that $t_{\max} > \bar{t}$ and take $\bx$ such that $f(\bx) > (1-\epsilon)f(\bx_\ast) + \bar\beta(\alpha)$. 
We claim that $\bx \in \hG_{\bar{t}}$ and hence $(\bx, \bx') \not\in \A_{t}$ for any $t > \bar{t}$ and thus $\bx$ is never added to $\hB_t$.
This occurs if for every $\phi(\bx')$
\begin{align*}
    &(\phi(\bx)-(1-\epsilon)\phi(\bx'))^T\htheta_{\bar{t}} > 2^{-\bar{t} + 1} \\
    & \iff (\phi(\bx)-(1-\epsilon)\phi(\bx'))^T(\htheta_{\bar{t}} - \theta_\ast) + (\phi(\bx)-(1-\epsilon)\phi(\bx'))^T\theta_\ast > 2^{-\bar{t} + 1} \\
    & \stackrel{\cE_{\bar{t}}}{\impliedby} -2^{-\bar{t}+1} + (\phi(\bx)-(1-\epsilon)\phi(\bx'))^T\theta_\ast \geq  2^{-\bar{t} + 1} \\
    & \iff (\phi(\bx)-(1-\epsilon)\phi(\bx'))^T\theta_\ast \geq  2^{-\bar{t} + 2} \\
    & \iff (\phi(\bx)-(1-\epsilon)\phi(\bx'))^T\theta_\ast \geq  0.5 \bar\beta(\epsilon)\\
    & \impliedby
    f(\bx) - (1-\epsilon)f(\bx')
     \geq  0.5 \bar\beta(\epsilon) + h + (1-\epsilon)h
\end{align*}
where the penultimate step follows by definition of $\bar\beta(\epsilon)$. 
Recall that $f(\bx) > (1-\epsilon)f(\bx_\ast) + \bar\beta(\alpha)$.
Hence, the above is implied by 
\begin{align*}
    &(1-\epsilon)f(\bx_\ast) + \bar\beta(\alpha) - (1-\epsilon)f(\bx') \geq  0.5 \bar\beta(\epsilon) + h + (1-\epsilon)h \\
    & \impliedby \bar\beta(\epsilon) \geq 0.5 \bar\beta(\epsilon) + h + (1-\epsilon)h
\end{align*}
where the final step follows by noting that $f(\bx_\ast) \geq f(\bx')$ for any $\bx'$. The final statement is true since $\bar\beta(\epsilon)$ and thus implies the claim. Therefore, we have shown that $\bx \in \hG_{\bar{t}}$ and is therefore not added to $\hB_t$ in a later round. These two cases together complete the proof. 
\end{proof}

\begin{lemma}\label{lem:milk_is_contained}
On the event $\bigcap_{t=1}^\infty \cE_t$, \texttt{MILK} returns a set $(\X \setminus \hB_{t_{\max}})$ such that
$(\X \setminus \hB_{t_{\max}}) \subset \{\bx: f(\bx) > (1-\epsilon)f(\bx_\ast) - \bar{\beta}(\epsilon) - \widetilde{\beta}\}$.
\end{lemma}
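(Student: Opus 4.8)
The plan is to mirror the explicit-case argument of Lemma~\ref{lem:melk_is_contained}, but to exploit the pair structure of \texttt{MILK} by using $\bx_\ast$ as a fixed witness. Fix any $\bx$ with $f(\bx) < (1-\epsilon)f(\bx_\ast) - \bar\beta(\epsilon) - \widetilde\beta$; I will show that $\bx$ is forced into $\hB_t$ at some round $t \leq t_{\max}$, so that $\bx \notin \X\setminus\hB_{t_{\max}}$. Writing $\by := \phi(\bx) - (1-\epsilon)\phi(\bx_\ast)$ and using $|f(\cdot) - \phi(\cdot)^\top\theta_\ast| \leq h$ together with $\bar\beta(\epsilon) > 4h$, the witness gap satisfies $\by^\top\theta_\ast \leq f(\bx) - (1-\epsilon)f(\bx_\ast) + h + (1-\epsilon)h < -\tfrac12\bar\beta(\epsilon) - \widetilde\beta < 0$, so $\bx_\ast$ places $\bx$ strongly on the wrong side of zero.

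First I would show that the pair $(\bx,\bx_\ast)$ can leave the active set only when $\bx$ is placed in $\hB$. On $\bigcap_t\cE_t$ with $t\leq\bar t$, Lemmas~\ref{lem:milk_active_good} and~\ref{lem:milk_active_bad} give $\A_t\subset\S_t$ and hence $|\by^\top(\htheta_t-\theta_\ast)|\leq 2\cdot 2^{-t}$ for active difference vectors. Thus $\htheta_t^\top\by \leq \by^\top\theta_\ast + 2\cdot 2^{-t} < 2\cdot 2^{-t}$, so the ``$>2\cdot 2^{-t}$'' branch of \texttt{MILK} never triggers on $(\bx,\bx_\ast)$, and this pair persists until $\bx$ enters $\hB$.

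Next I would argue by contradiction. Suppose $\bx$ is never added to $\hB_t$ for $t\leq t_{\max}$. Then at every such round the elimination test must fail on the witness, i.e.\ $\htheta_t^\top\by \geq -2\cdot 2^{-t}$, which combined with $\cE_t$ yields $\by^\top\theta_\ast \geq -4\cdot 2^{-t}$ and hence $f(\bx) \geq (1-\epsilon)f(\bx_\ast) - 4\cdot 2^{-t} - h - (1-\epsilon)h$. Plugging in the assumed bound on $f(\bx)$ gives $\bar\beta(\epsilon) + \widetilde\beta < 4\cdot 2^{-t} + h + (1-\epsilon)h$ for all $t\leq t_{\max}$. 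I would then close with the same three-case split on the stopping rule as in Lemma~\ref{lem:melk_is_contained}. If \texttt{MILK} stops at the tolerance round $t_{\max}=\lceil\log_2(4/\widetilde\beta)\rceil\leq\bar t$, evaluating at this $t$ makes the right side at most $\widetilde\beta + 2h$, contradicting $\bar\beta(\epsilon)>4h$. If it stops early because $\hG_t\cup\hB_t=\X$ with $t_{\max}\leq\bar t$, then $\by^\top\theta_\ast<0$ forces $\bx\in(G^\phi_\epsilon)^c$ via Lemma~\ref{lem:mult_alle_sets}, so Lemma~\ref{lem:milk_bad_arms} already guarantees $\bx\in\hB_{t_{\max}}$, a contradiction. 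If $t_{\max}>\bar t$, evaluating at $t=\bar t$ and using $2^{-\bar t}\asymp\bar\beta(\epsilon)$ from the definition of $\bar t$ gives $\bar\beta(\epsilon)+\widetilde\beta < c\,\bar\beta(\epsilon)+2h$ with a constant $c<1$, again contradicting $\bar\beta(\epsilon)>4h$.

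The main obstacle I anticipate is the second step: unlike \texttt{MELK}, \texttt{MILK} removes pairs through two distinct operations (deleting a single pair through the ``$>$'' branch versus deleting all $\bx$-pairs when $\bx\in\hB$), so I must check carefully that the chosen witness pair $(\bx,\bx_\ast)$ stays active exactly as long as needed and that the asymmetric roles of the first and second coordinate do not let $\bx$ slip out of the active set prematurely. The remainder reduces to bookkeeping with the constants relating $\bar t$, $\bar\beta(\epsilon)$, and $2^{-t}$, together with the active-set containments $\A_t\subset\S_t$ already established in Lemmas~\ref{lem:milk_active_good} and~\ref{lem:milk_active_bad}.
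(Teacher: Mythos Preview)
Your proposal is correct and follows essentially the same strategy as the paper: fix $\bx_\ast$ as the witness, assume for contradiction that $\bx$ never enters $\hB_t$, derive the inequality $\bar\beta(\epsilon)+\widetilde\beta < 4\cdot 2^{-t}+h+(1-\epsilon)h$, and close with the same three-case split on the stopping rule. Your explicit verification that the witness pair $(\bx,\bx_\ast)$ is never removed via the ``$>2\cdot 2^{-t}$'' branch is in fact more careful than the paper's own argument, which tacitly assumes this; to fully close that step you also need $\bx_\ast\notin\hB_t$ for $t\le\bar t$, which follows immediately from Lemma~\ref{lem:milk_bad_arms} since $\bx_\ast\in G^\phi_\epsilon$.
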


\begin{proof}
Take any $\bx$ such that $f(\bx) < (1-\epsilon)f(\bx_\ast) - \bar\beta(\epsilon) - \widetilde{\beta}$. We claim that there exists a $t \leq t_{\max}$ such that $\bx$ is added to $\hB_t$ which implies that $\bx \not\in (\X \setminus \hB_{t_{\max}})$. Suppose for contradiction that this is not the case. Then for all $t \leq t_{\max}$, 
\begin{align*}
    &\htheta_t^T(\phi(\bx) - (1-\epsilon)\phi(\bx_\ast)) > - 2^{-t + 1}  \\
    & \iff (\phi(\bx) - (1-\epsilon)\phi(\bx_\ast))^T(\htheta_t - \theta_\ast) + (\phi(\bx) - (1-\epsilon)\phi(\bx_\ast))^T\theta_\ast > - 2^{-t + 1} \\
    & \stackrel{\cE_t}{\implies} 2^{-t} + \left(\sqrt{\gamma}\|\theta_\ast\| + h \right) \sqrt{f(\X, \A_t; \gamma)} +  (\phi(\bx) - (1-\epsilon)\phi(\bx_\ast))^T\theta_\ast >  - 2^{-t + 1} \\
    & \implies \left(\sqrt{\gamma}\|\theta_\ast\| + h \right) \sqrt{f(\X, \A_t; \gamma)} +  f(\bx) - (1-\epsilon)f(\bx_\ast) + h + (1 -\epsilon)h >  - 2^{-t+1} - 2^{-t} \\
    & \implies f(\bx) - (1-\epsilon)f(\bx_\ast)>  - 2^{-t+1} - 2^{-t} - h -(1 -\epsilon)h - \left(\sqrt{\gamma}\|\theta_\ast\| + h \right) \sqrt{f(\X, \S_t; \gamma)}.
\end{align*}
Plugging in $f(\bx) < (1-\epsilon)f(\bx_\ast) - \bar\beta(\epsilon) - \widetilde{\beta}$, the above implies
\begin{align}\label{eq:milk_lem_event}
     \bar\beta(\epsilon) + \widetilde{\beta} <  2^{-t+1} + 2^{-t} + h + (1-\epsilon)h + \left(\sqrt{\gamma}\|\theta_\ast\| + h \right) \sqrt{f(\X, \S_t; \gamma)}
\end{align}
Next, recall that \texttt{MILK} terminates either on the condition that $t = \lceil\log_2(4/\widetilde{\beta})\rceil$ or that $\hG_t \cup \hB_t = \X$. Using this, we brake our analysis into cases.

Case 1: $t_{\max} = \lceil\log_2(4/\widetilde{\beta})\rceil \leq \bar{t}$.

In this case, \texttt{MILK} stops due to the $\widetilde{\beta}$ tolerance in a round before $\bar{t}$. For $t \leq \bar{t}$, we have that $2^{-t} \geq  + \left(\sqrt{\gamma}\|\theta_\ast\| + h \right) \sqrt{f(\X, \S_t; \gamma)}$. Hence, the above implies that 
\begin{align*}
    \bar\beta(\alpha) + \widetilde{\beta} <  2^{-t+2} + h+ (1-\epsilon)h.
\end{align*}
As we have assumed this condition for all $t \leq t_{\max}$, we may plug in $t_{\max}$ which implies 
\begin{align*}
    \bar\beta(\alpha) + \widetilde{\beta} < \widetilde{\beta} + h + (1-\epsilon)h.
\end{align*}
As $ \bar\beta(\alpha) > 4h$, this is a contradiction. Hence there must exist a $t$ such that $\bx \in \hB_t$. 

Case 2: $ t_{\max} \leq \bar{t} <  \lceil\log_2(4/\widetilde{\beta})\rceil$. 

In this case, \texttt{MILK} terminates before round $t = \lceil\log_2(4/\widetilde{\beta})\rceil$. Hence, it does so on the condition that $\hG_t \cup \hB_t = \X$. Note that for $f(\bx) < \alpha - \bar\beta(\alpha) - \widetilde{\beta}$, we have that $\bx \in (G_\alpha^\phi)^c$ since $\bar{\beta}(\alpha) > h$ and $\widetilde{\beta} \geq 0$. If we terminate before round $\bar{t}$, we have by Lemma~\ref{lem:milk_bad_arms} that $(G_\alpha^\phi)^c \subset \hB_t$ which implies that $\bx \in \hB_{t_{\max}}$. This contradicts the assumption that $\not\exists t: \bx \in \hB_t$. 

Case 3: $\bar{t} < t_{\max}$.

In this case, \texttt{MILK} terminates at a round after $\bar{t}$. In this setting, we argue that $\bx \in \hB_{\bar{t}}$. Recall that for any $t \leq \bar{t}$, \eqref{eq:milk_lem_event} simplifies to
\begin{align*}
    \bar{\beta}(\alpha) + \widetilde{\beta} < 2^{-t+2} + h + (1-\epsilon)h. 
\end{align*}
Plugging in $\bar{t}$, and noting that $2^{-\bar{t} + 2} = \frac{1}{2}\bar{\beta}(\alpha)$, the above implies
\begin{align*}
    \bar{\beta}(\alpha) + \widetilde{\beta} < \frac{1}{2}\bar{\beta}(\alpha) + h + (1-\epsilon)h. 
\end{align*}
Noting that $\bar{\beta}(\alpha) > 4h$, shows that the above is a contradiction. Hence, there exists a $t \leq \bar{t}$ such that $\bx \in \hB_t$. 

Therefore, in all cases we have shown that for any $\bx$ such that $f(\bx) < \alpha - \bar{\beta}(\alpha) - \widetilde{\beta}$, $\bx \in \hB_t$. Therefore, for the returned set $\X \setminus \hB_{t_{\max}}$, we have that \begin{align*}
    (\X \setminus \hB_{t_{\max}}) \subset \{\bx: f(\bx) > \alpha - \bar{\beta}(\alpha) - \widetilde{\beta}\}. 
\end{align*}
\end{proof}

\begin{proof}[Proof of Theorem~\ref{thm:MILK_complex}]
Throughout, assume the high probability event $\bigcap_T \cE_t$. 
By Lemmas~\ref{lem:milk_contains} and \ref{lem:milk_is_contained} in conjunction with the high probability event $\bigcap \cE_t$ we have correctness. It remains to control the sample complexity of \texttt{MILK}.
Recall that we have assumed that $\max(\Delta_{\min}(\epsilon), \widetilde{\beta}) \geq \bar\beta(\epsilon)$. This implies that $\min\{\lceil\log_2(4/\Delta_{\min}(\epsilon))\rceil, \lceil\log_2(4/\widetilde{\beta})\rceil\} \leq \bar{t}$. Applying Lemmas~\ref{lem:milk_active_good} and \ref{lem:milk_active_bad}, we have that $t_{\max} \leq \min\{\lceil\log_2(4/\Delta_{\min}(\epsilon))\rceil, \lceil\log_2(4/\widetilde{\beta})\rceil\} \leq \bar{t}$ and that $\A_t \subseteq \S_t$ for all rounds $t$. 
Now we proceed by bounding the total number of samples drawn. 
\begin{align*}
    \tau & \leq \sum_{t=1}^{t_{\max}} N_t \\
    %%%%%%%%%%%%%%%%%%%%%
    &\leq \sum_{t=1}^{\min\{\lceil\log_2(4/\Delta_{\min}(\epsilon))\rceil, \lceil\log_2(4/\widetilde{\beta})\rceil\}} N_t\\
    %%%%%%%%%%%%%%%%%%%%%
    &= \sum_{t=1}^{\lceil\log_2(4(\Delta_{\min}(\epsilon)\vee \widetilde{\beta})^{-1})\rceil} N_t\\
    %%%%%%%%%%%%%%%%%%%%%%
    & = \sum_{t=1}^{\lceil\log_2(4(\Delta_{\min}(\epsilon)\vee \widetilde{\beta})^{-1})\rceil} \max\left\{c_1\log(|\X|/\delta), c^2 2^{2t} f(\Y^\epsilon(\A_t);\gamma) (B^2 + \sigma^2) \log(2t^2|\X|^2/\delta)\right\}\\
    %%%%%%%%%%%%%%%%%%%%%%
    & \leq c_1\log(|\X|/\delta)\lceil\log_2(4(\Delta_{\min}(\epsilon)\vee \widetilde{\beta})^{-1})\rceil + c^2(B^2 + \sigma^2)\sum_{t=1}^{\lceil\log_2(4(\Delta_{\min}(\epsilon)\vee \widetilde{\beta})^{-1})\rceil} 2^{2t} f(\Y^\epsilon(\A_t);\gamma)\cdot\log(2t^2|\X|^2/\delta) \\
    %%%%%%%%%%%%%%%%%%%%%%
    & = c_1\log(|\X|/\delta)\lceil\log_2(4(\Delta_{\min}(\epsilon)\vee \widetilde{\beta})^{-1})\rceil + \\
    & \hspace{1.5cm}c^2(B^2 + \sigma^2)\sum_{t=1}^{\lceil\log_2(4(\Delta_{\min}(\epsilon)\vee \widetilde{\beta})^{-1})\rceil} 2^{2t} \min_{\lambda \in \tX}\max_{\by \in \Y^\epsilon(\A_t)} \|\by\|_{\left(A(\lambda)+\gamma I\right)^{-1}}^2\cdot\log(2t^2|\X|^2/\delta) \\
    %%%%%%%%%%%%%%%%%%%%%%
    & \leq c_1\log(|\X|/\delta)\lceil\log_2(4(\Delta_{\min}(\epsilon)\vee \widetilde{\beta})^{-1})\rceil + \\
    & \hspace{1cm}c^2(B^2 + \sigma^2)\log\left(\frac{4|\X|^2\lceil\log_2(4(\Delta_{\min}(\epsilon)\vee \widetilde{\beta})^{-1})\rceil^2}{\delta}\right)\\
    &\hspace{1cm} \cdot \sum_{t=1}^{\lceil\log_2(4(\Delta_{\min}(\epsilon)\vee \widetilde{\beta})^{-1})\rceil} 2^{2t} \min_{\lambda \in \tX}\max_{\by \in \Y^\epsilon(\A_t)} \|\by\|_{\left(A(\lambda)+\gamma I\right)^{-1}}^2\\
    %%%%%%%%%%%%%%%%%%%%%%
    & \leq c_1\log(|\X|/\delta)\lceil\log_2(4(\Delta_{\min}(\epsilon)\vee \widetilde{\beta})^{-1})\rceil + \\
    & \hspace{1cm}c^2(B^2 + \sigma^2)\log\left(\frac{4|\X|^2\lceil\log_2(4(\Delta_{\min}(\epsilon)\vee \widetilde{\beta})^{-1})\rceil^2}{\delta}\right)\\
    &\hspace{1cm} \cdot \sum_{t=1}^{\lceil\log_2(4(\Delta_{\min}(\epsilon)\vee \widetilde{\beta})^{-1})\rceil} 2^{2t} \min_{\lambda \in \tX}\max_{\by \in \Y^\epsilon(\S_t)} \|\by\|_{\left(A(\lambda)+\gamma I\right)^{-1}}^2\\
    %%%%%%%%%%%%%%%%%%%%%%
    & = c_1\log(|\X|/\delta)\lceil\log_2(4(\Delta_{\min}(\epsilon)\vee \widetilde{\beta})^{-1})\rceil + \\
    & \hspace{1cm}c^2(B^2 + \sigma^2)\log\left(\frac{4|\X|^2\lceil\log_2(4(\Delta_{\min}(\epsilon)\vee \widetilde{\beta})^{-1})\rceil^2}{\delta}\right)\\
    &\hspace{1cm} \cdot \sum_{t=1}^{\lceil\log_2(4(\Delta_{\min}(\epsilon)\vee \widetilde{\beta})^{-1})\rceil}  \min_{\lambda \in \tX}\max\left\{2^{2t}\max_{\by \in \Y^\epsilon(\S_t^\text{Above})} \|\by\|_{\left(A(\lambda)+\gamma I\right)^{-1}}^2, 2^{2t}\max_{\by \in \Y^\epsilon(\S_t^\text{Below})} \|\by\|_{\left(A(\lambda)+\gamma I\right)^{-1}}^2\right\}.
\end{align*}
where the final equality follows by partitioning $\S_t = \S_t^\text{Above} \cup \S_t^\text{Below}$. 
Focusing on this final summation, note that 
\begin{align*}
    &\frac{1}{\lceil\log_2(4(\Delta_{\min}(\epsilon)\vee \widetilde{\beta})^{-1})\rceil} \sum_{t=1}^{\lceil\log_2(4(\Delta_{\min}(\epsilon)\vee \widetilde{\beta})^{-1})\rceil} 2^{2t} \min_{\lambda \in \tX}\max\left\{\max_{\by \in \S_t^\text{Above}} \|\by\|_{(A(\lambda)+\gamma I)^{-1}}^2, \max_{\by \in \S_t^\text{Below}} \|\by\|_{(A(\lambda)+\gamma I)^{-1}}^2\right\} \\
    % %%%%%%%%%%%%%%%%%%%%%%%%
    & \leq \max_{t \leq \lceil\log_2(4(\Delta_{\min}(\epsilon)\vee \widetilde{\beta})^{-1})\rceil}\min_{\lambda \in \tX} 2^{2t} \max\left\{\max_{\by \in \S_t^\text{Above}} \|\by\|_{(A(\lambda)+\gamma I)^{-1}}^2, \max_{\by \in \S_t^\text{Below}} \|\by\|_{(A(\lambda)+\gamma I)^{-1}}^2\right\}\\
    %%%%%%%%%%%%%%%%%%%%%%%%%
    & \leq  \min_{\lambda \in \tX} \max_{t \leq \lceil\log_2(4(\Delta_{\min}(\epsilon)\vee \widetilde{\beta})^{-1})\rceil} \max\left\{\max_{\by \in \S_t^\text{Above}} 2^{2t}\|\by\|_{(A(\lambda)+\gamma I)^{-1}}^2, \max_{\by \in \S_t^\text{Below}} 2^{2t}\|\by\|_{(A(\lambda)+\gamma I)^{-1}}^2\right\}\\
    %%%%%%%%%%%%%%%%%%%%%%%%%
    & =   \min_{\lambda \in \tX} \max_{t \leq \lceil\log_2(4(\Delta_{\min}(\epsilon)\vee \widetilde{\beta})^{-1})\rceil} \max\left\{\max_{(\bx, \bx') \in \S_t^\text{Above}} 2^{2t}\|\phi(\bx) - (1-\epsilon)\phi(\bx')\|_{(A(\lambda)+\gamma I)^{-1}}^2\right., \\
    &\hspace{5cm} \left.\max_{(\bx, \bx') \in \S_t^\text{Below}} 2^{2t}\|\phi(\bx) - (1-\epsilon)\phi(\bx')\|_{(A(\lambda)+\gamma I)^{-1}}^2\right\}\\
    %%%%%%%%%%%%%%%%%%%%%%%%%
    & \stackrel{\text{Lemmas~\ref{lem:milk_active_good}, \ref{lem:milk_active_bad}}, \widetilde{\beta}}{\leq}     16\min_{\lambda \in \tX} \max_{t \leq \lceil\log_2(4(\Delta_{\min}(\epsilon)\vee \widetilde{\beta})^{-1})\rceil} \max\left\{\max_{(\bx,\bx')\in \S_t^\text{Above}} \frac{\|\phi(\bx) - (1-\epsilon)\phi(\bx')\|_{(A(\lambda)+\gamma I)^{-1}}^2}{\max\{((\phi(\bx) - (1-\epsilon)\phi(\bx'))^T\theta_\ast)^2, \widetilde{\beta}^2\}},\right. \\
    &\hspace{5cm}\left. \max_{(\bx, \bx') \in \S_t^\text{Below}} \frac{\|\phi(\bx) - (1-\epsilon)\phi(\bx')\|_{(A(\lambda)+\gamma I)^{-1}}^2}{\max\{((\phi(\bx) - (1-\epsilon)\phi(\bx_\ast))^T\theta_\ast - \epsilon)^2, \widetilde{\beta}^2\}}\right\}\\
    % %%%%%%%%%%%%%%%%%%%%%%%%%
    & \leq 16\min_{\lambda \in \tX}  \max\left\{\max_{\bx\in  G_\epsilon}\max_{\bx'} \frac{\|\phi(\bx) - (1-\epsilon)\phi(\bx')\|_{(A(\lambda)+\gamma I)^{-1}}^2}{\max\{((\phi(\bx) - (1-\epsilon)\phi(\bx'))^T\theta_\ast)^2, \widetilde{\beta}^2\}},\right. \\
    &\hspace{5cm}\left. \max_{\bx\in  G_\epsilon^c}\max_{\bx'} \frac{\|\phi(\bx) - (1-\epsilon)\phi(\bx')\|_{(A(\lambda)+\gamma I)^{-1}}^2}{\max\{((\phi(\bx) - (1-\epsilon)\phi(\bx_\ast))^T\theta_\ast - \epsilon)^2, \widetilde{\beta}^2\}}\right\}
\end{align*}
Plugging this in with $c=4$ and $c_1=2$ from Theorem~\ref{thm:supp_robust_estimator} for RIPS with the Catoni estimator completes the proof.
\end{proof}
\section{Additional Experiment Details}

In this section we discuss additional experimental details not covered in the main paper. We first give an overview of the algorithms implemented in the following section. All code was written in python and run on a 64 core cluster machine. We have included implementations of all methods and a demo file showing how to call and run the various algorithms. 

\subsection{Algorithms Implemented}
In this section we briefly discuss the algorithms implemented and the hyper-parameters used in the algorithms. The algorithms implemented are s follows:

\textbf{Gaussian Process Experiments}
For all the algorithms in this section we assumed a GP Prior $N(0, k(x,x'))$ where $k(x,x')$ was the RBF kernel given by $k(x,x') = \exp(-\|x-x'\|^2/2\ell^2)$. 

At every time step we builds the confidence interval 
    \begin{align*}
        Q_{t}(\bx) \coloneqq \left[\mu_{t-1}(\bx) \pm \beta_{t}^{1 / 2} \sigma_{t-1}(\bx)\right]
    \end{align*}
    where $\mu_{t-1}$, and $\sigma_{t-1}$ is the posterior mean and variance function over the observed points. For an observation $\by_t$ at time $t$ we define $\mu_{t-1}$, and $\sigma_{t-1}$ as follows:
    \begin{align*}
    \mu_{t}(\bx) & \coloneqq\boldsymbol{k}_{t}(\boldsymbol{x})^{T}\left(\boldsymbol{K}_{t}+\sigma^{2} \boldsymbol{I}\right)^{-1} \by_{t} \\
    %%%%%%%%%%%%%%%%%%%%%
    k_{t}\left(\bx, \bx^{\prime}\right) & \coloneqq k\left(\bx, \bx^{\prime}\right)-\boldsymbol{k}_{t}(\bx)^{T}\left(\boldsymbol{K}_{t}+\sigma^{2} \boldsymbol{I}\right)^{-1} \boldsymbol{k}_{t}(\boldsymbol{x}) \\
    %%%%%%%%%%%%%%%%%%%%
    \sigma_{t}^{2}(\boldsymbol{x}) & \coloneqq k_{t}(\bx, \bx)
    \end{align*}
where, $\boldsymbol{k}_{t}(\bx)=\left[k\left(\bx_{1}, \bx\right), \ldots, k\left(\bx_{t}, \bx\right)\right]^{T}$ and $\boldsymbol{K}_{t}$ is the kernel matrix over the observed points.

\begin{enumerate}
    \item \textbf{LSE:} We implemented the LSE algorithm by \cite{gotovos2013active}. This algorithm maintains an active set of unclassified points defined as $U_t$ and the super-level set $H_t$ and sub-level set $L_t$.  %Further the quantity $\beta_{t}$ signifies the confidence width. 
    
    % and we set it as 
    % $$
    % \beta_{t} \coloneqq 2 \log \left(|\mathcal{X}| \pi^{2} t^{2} /(6 \delta)\right)
    % $$ 
    % as defined in Theorem 1 of \cite{gotovos2013active}. We set $\delta = 0.1$. The sub-level set $L_t$ and super-level set $H_t$ is updated at every timestep based on the confidence region $C_{t}(\bx)$, which is constructed by  intersecting successive confidence intervals, as follows:
    At every round LSE selects the most ambiguous point, where the ambiguity is defined as 
    $$
    a_{t}(\boldsymbol{x})=\min \left\{\max \left(Q_{t}(\boldsymbol{x})\right)-\alpha, \alpha-\min \left(Q_{t}(\boldsymbol{x})\right)\right\}
    $$
    that is, the points LSE is most unsure to classify into $H_t$ or $L_t$. Note that in contrast to this approach MELK follows the optimal allocation over the active set to select the next sample.
    
    \item\textbf{TruVar:} We also implemented a modified version of TruVar\cite{bogunovic2016truncated} with zero cost and homoscedastic noise. TruVar samples in such a fashion to ensure the maximum decrease of the posterior variance. As above, we maintain a Gaussian Process Posterior and we sample the arm 
    \[\argmax_{x\in \mc{X}} \sum_{\bar{x}\in \mc{A}_t} \sigma^2_{t}(\bar{x}) - \sum_{\bar{x}\in \mc{A}_t} \sigma^2_{t-1|x}(\bar{x})\]
    where $\sigma^2_{t-1|x}(\bar{x})$ is the posterior variance of $\bar{x}$ if we sample $x$.
     \item \textbf{MELK:} As described in the text, we compute the means and variances of the arms using a Gaussian posterior (identical to above) and eliminate arms when their lower/upper bound is below/above the specified threshold $\tau$. We implemented a batched sampling algorithm where we compute the design 
     \[\min_{\lambda\in \X}\max_{z\in A_t}\|z\|_{(A(\lambda) + \gamma I)^{-2}}^2\]
     ever 10 samples and then sample from it. At the $i$-th calculation, $\gamma = 1/(10*i)$.   
     We also use the Frank-Wolfe method to compute the optimal allocation over the active set before every round as described in Section~\ref{sec:kernel_compute}. 
     We set the step-size of Frank-Wolfe method as $1$ and cap the maximum number of iteration to converge for Frank-Wolfe to $500$.
\end{enumerate}

\textbf{Linear Bandits Examples}

Additionally, we also consider comparing algorithms exactly as written using theoretically justified confidence widths in all cases. This presents a challenge as \texttt{MELK} and \texttt{MILK} are designed for the frequentist regime and \texttt{LSE} and \texttt{TruVar} are Bayesian in nature. To level the playing field, we consider all algorithms in the frequentist regime. 
For this experiment, we focused primarily on comparing \texttt{MELK} to \texttt{LSE} and \texttt{MILK} to \texttt{LSE-imp}
\texttt{LSE} can naturally be adapted to the frequentist setting with the tight RKHS confidence bounds from \cite{chowdhury2017kernelized}. These bounds scale with the maximum information gain $\Gamma_T$. To make the comparison fair, we consider all algorithms in the linear regime where $\Gamma_T = O(d\log(T))$. By contrast, for the squared exponential kernel, $\Gamma_T = O\left(\log(T)^d\right)$, and this leads to overly pessimistic confidence widths preventing a meaningful comparison of the algorithms. Indeed, even for moderate $d$ such as $d=4$, \texttt{LSE} had confidence widths that were more that an order of magnitude wider for the squared exponential kernel. Hence, we focus on the case of the linear kernel for our experimental comparison where the differences are not so stark. Below, we describe all algorithms in this regime. 
 
\texttt{LSE} follows the same acquisition function described in the previous section. We provide additional details about \texttt{MELK}, \texttt{MILK}, and \texttt{LSE-imp} in this setting. 

\begin{enumerate}

    \item \textbf{MELK:} We implement the MELK algorithm as defined in \Cref{alg:MELK}. Recall that $|f(x)| \leq B$, and for the experiments we set $B=1$. We set the confidence parameter $\delta = 0.1$, the regularization parameter $\gamma = 1e-7$. Note that we use the original confidence width of $(B^2 + \sigma^2) \log(2t^2|\X|^2/\delta)$ as stated in our algorithm, where $\sigma^2$ is the noise parameter specific to the environment. We also use the Frank-Wolfe method to compute the optimal allocation over the active set before every  round. We set the step-size of Frank-Wolfe method as $0.5$ and cap the maximum number of iteration to converge for Frank-Wolfe to $2000$.
    
    \item \textbf{LSE-imp:} We implement the LSE-Implicit algorithm as stated in \cite{gotovos2013active}. LSE-Implicit proceeds quite similarly to LSE by constructing the confidence region $C_{t}(\bx)$ (as defined above) and classifying points to the sub-level set $L_t$ or super-level set $H_t$. We set the confidence width as in \texttt{LSE} for calculating the confidence region. Note that LSE-Implicit works in the implicit level set estimation setting and so constructs an estimate of the function maximum to classify points into $H_t$ or $L_t$. It builds an  optimistic and pessimistic estimate of the function maximum as 
    \begin{align*}
        f_{t}^{o p t} \coloneqq \max _{x \in U_{t}} \max \left(C_{t}(x)\right), \quad f_{t}^{\text {pes }}=\max _{x \in U_{t}} \min \left(C_{t}(x)\right)
    \end{align*}
    respectively. A point $\bx$ is classified into $H_t$ if $\min \left(C_{t}(\bx)\right) \geq (1-\epsilon) f_{t}^{opt}$ or classified into $L_t$ if $\max \left(C_{t}(\bx)\right) \leq (1-\epsilon) f_{t}^{pes}$. Finally, LSE-Implicit selects the next point with the largest confidence region width, defined as follows:
    $$
    w_{t}(\boldsymbol{x})=\max \left(C_{t}(\boldsymbol{x})\right)-\min \left(C_{t}(\boldsymbol{x})\right)
    $$
    such that this leads to more exploration. Again, note that in contrast MILK in \Cref{alg:MILK} uses the optimal allocation proportion over the active set to sample the next point.
    
    \item \textbf{MILK:} We implement the MILK algorithm as stated in \Cref{alg:MILK}. Note that MILK proceeds as similarly to MELK but with the allocation calculated over the difference of vectors $\Y^\epsilon(\A)$ over the active set and a different elimination condition depending on $\epsilon$. For MILK we set a similar hyper-parameters like MELK. We set the confidence parameter $\delta = 0.1$, the regularization parameter $\gamma = 1e-7$, and the confidence width of $(B^2 + \sigma^2) \log(2t^2|\X|^2/\delta)$. We use the Frank-Wolfe method to compute the optimal allocation over the active set of points and set the step-size of Frank-Wolfe method as $0.5$ and cap the maximum number of iteration to converge for Frank-Wolfe to $2000$. Note that we set $\epsilon$ depending on specific environment setting.
\end{enumerate}

\subsection{Additional Experiments}
All experiments were done with 25 repetitions. We consider the $f1$-scores on three environments considered below.

\begin{figure}[H]
\centering
\includegraphics[width=0.8\linewidth]{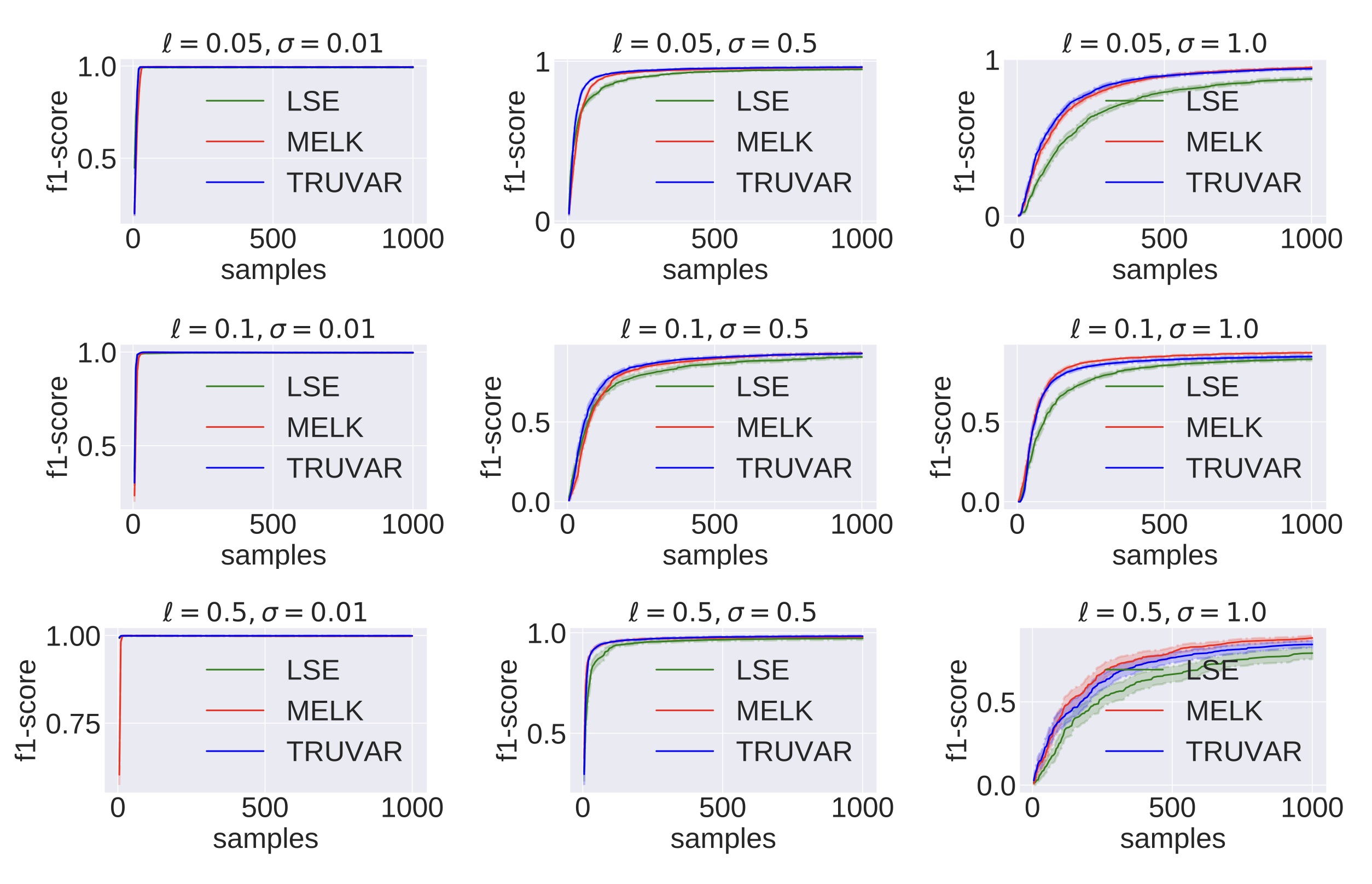}
\caption{ $f$ drawn randomly from a squared exponential kernel $N(0,k(\bx,\bx'))$. $\sigma$ denotes the standard deviation of the noise and $\ell$ denotes the bandwidth of the kernel (i.e., $k(\bx, \by) = \exp(-\|\bx - \by\|/2\ell^2))$. }
\label{fig:supp_well_spec}
\end{figure}

\begin{figure}[H]
\centering
\includegraphics[width=0.8\linewidth]{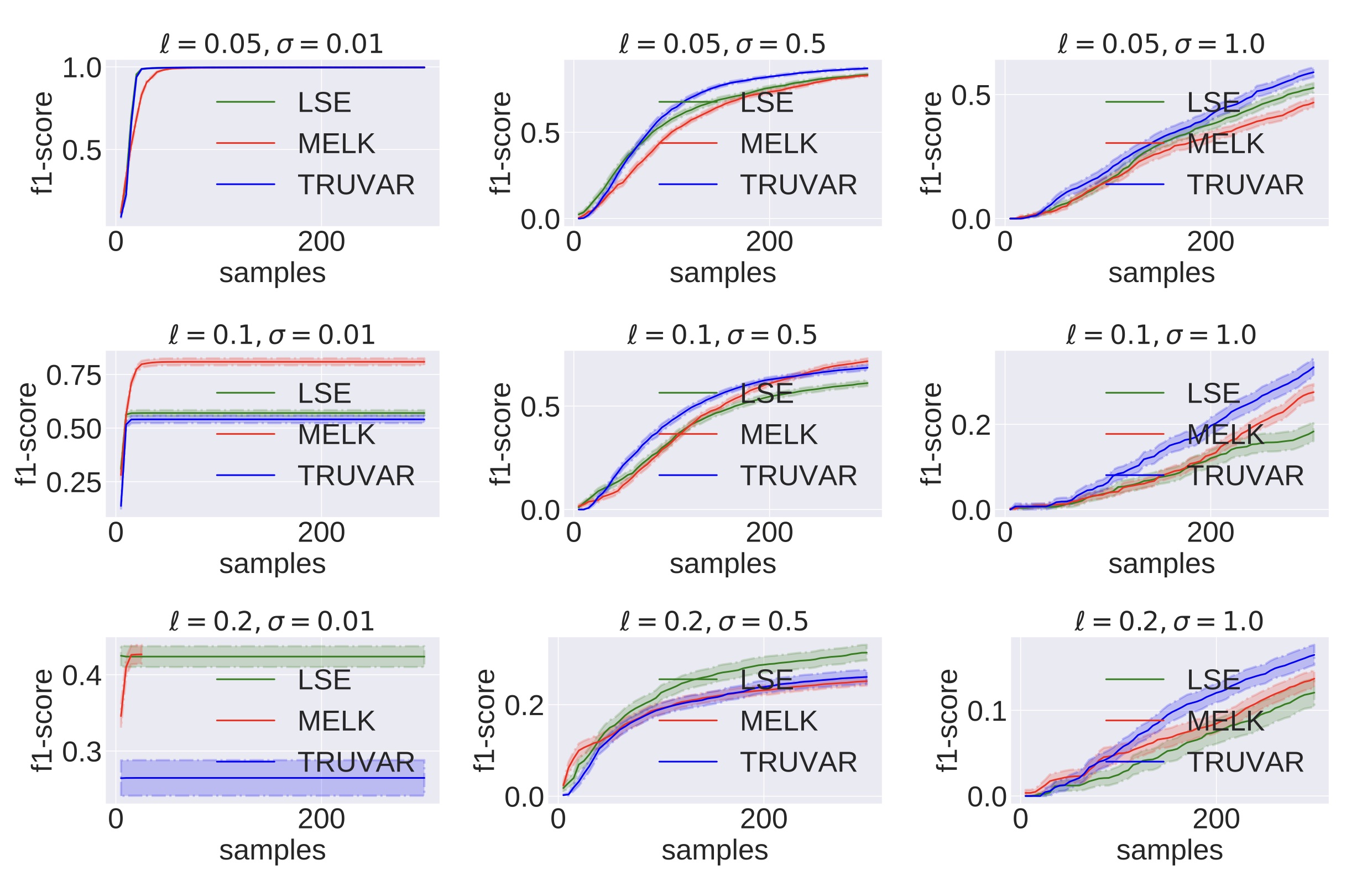}
\caption{$f(x) = \cos(8\pi x)$. $\sigma$ denotes the standard deviation of the noise and $\ell$ denotes the bandwidth of the kernel (i.e., $k(\bx, \by) = \exp(-\|\bx - \by\|/2\ell^2))$. }
\label{fig:supp_miss_spec}
\end{figure}

\begin{figure}[H]
\centering
\includegraphics[width=0.8\linewidth]{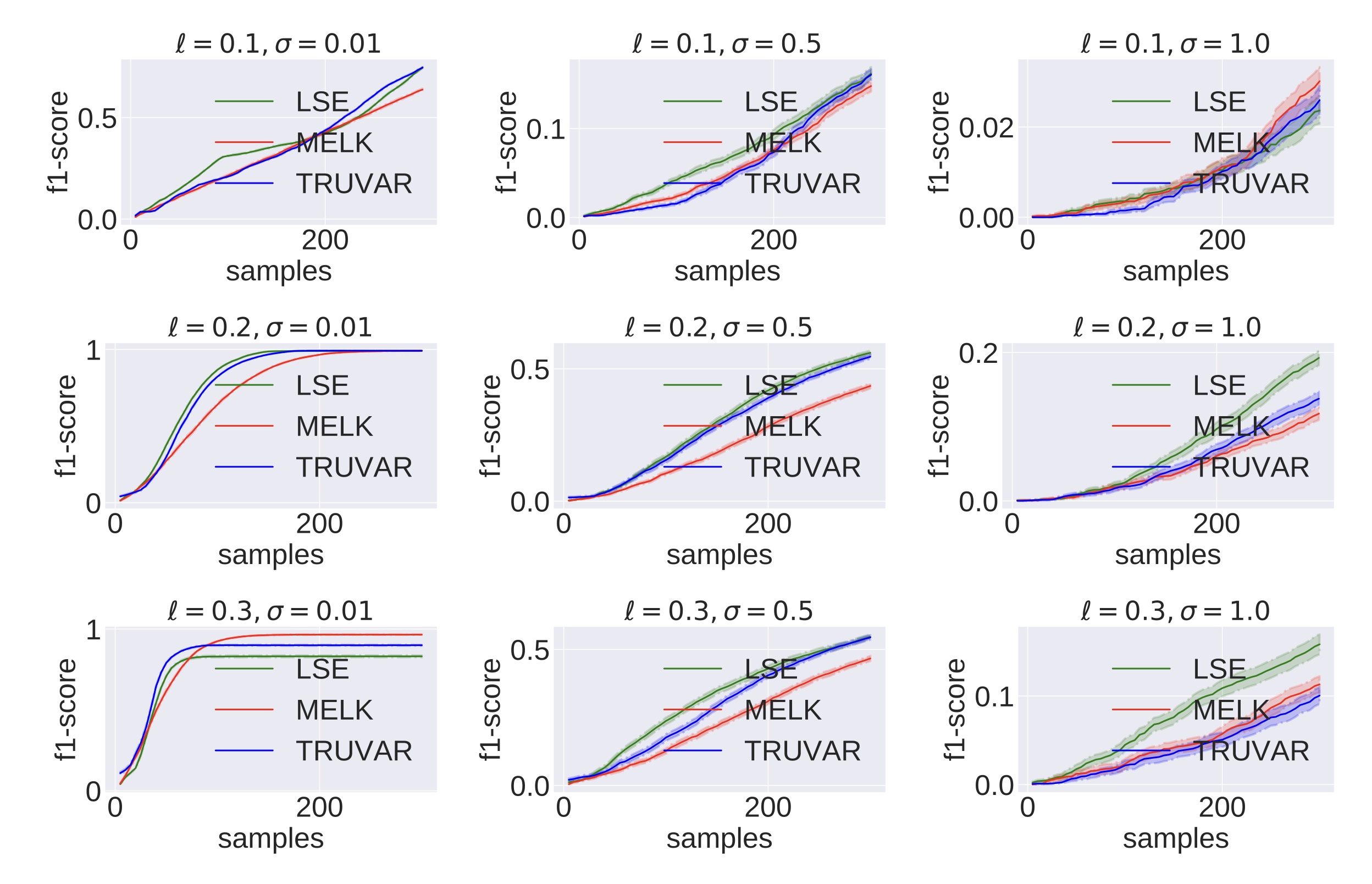}
\caption{$f(x,y) = \cos(2\pi x)\sin(2\pi y)$. $\sigma$ denotes the standard deviation of the noise and $\ell$ denotes the bandwidth of the kernel (i.e., $k(\bx, \by) = \exp(-\|\bx - \by\|/2\ell^2))$. }
\label{fig:supp_2dmiss_spec}
\end{figure}

\textbf{Linear Examples with true confidence widths}

Finally we compare the performance of the methods using exact confidence widths. 

\begin{figure}[H]
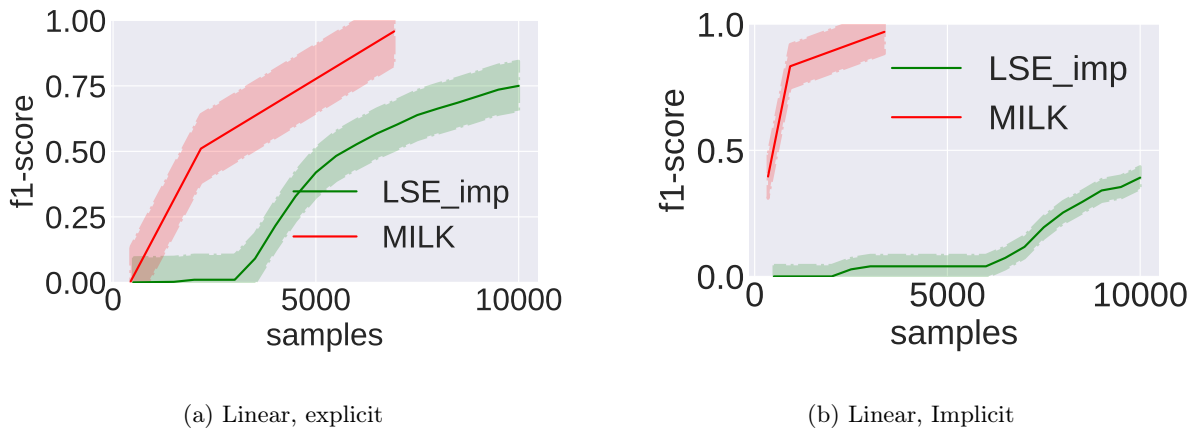

\centering

\begin{subfigure}{0.48\textwidth}
  \centering
  \includegraphics[width=\linewidth]{img/soare_explicit_v2.pdf}
  \caption{Linear, explicit}
  \label{fig:lin_explicit_supp}
    \end{subfigure}
\begin{subfigure}{.48\textwidth}
  \centering
  \includegraphics[width=\linewidth]{img/soare_implicit_v2.pdf}
\caption{Linear, Implicit}
\label{fig:lin_implicit_supp}
\end{subfigure}
\label{fig:soare}
\caption{Comparison of algorithms using theoretically justified confidence widths on a linear bandit setting.}
\end{figure}

For the Linear kernel experiments in Figures~\ref{fig:lin_explicit_supp} and \ref{fig:lin_implicit_supp}, we run all algorithms with exact confidence intervals as specified by theoretical guarantees and use the theoretical upper bound on information gain $\gamma_T$ shown in \citep{srinivas2009gaussian} for the confidence widths from \citep{valko2013finite} needed for LSE. 
We compare the methods on a benchmark example from the linear bandits literature. For $\bx_1, \cdots, \bx_{n} \in \R^{d}$, we take $\bx_1 = \bx_\ast = \theta_\ast = e_1$ and $\bx_2 = e_2$. The remaining $\bx_3, \cdots, \bx_n$  are set so that their first two coordinates are $\cos(\pi/4(1 + \xi))e_1$ and $\sin(\pi/4(1 + \xi))e_2$ for $\xi \sim \text{Unif}(-.2, .2)$. We set the threshold $\alpha = 0.5$, $n=100$, and $d=25$. Figure~\ref{fig:lin_explicit_supp} shows that \texttt{MELK} outperforms \texttt{LSE} when both algorithms are run with their exact confidence widths. 

In the implicit setting, this example is especially informative and highlights the importance of designing to choose which arms to sample. 
Though it is far below $\alpha$, sampling arm $\bx_2$ provides the most information about which arms exceed the implicit threshold. Indeed, we see in  \ref{fig:lin_implicit_supp} that both \texttt{MILK} greatly outperforms \texttt{LSE-imp} respectively.

% \begin{enumerate}
%     \item \textbf{Sinusoid well-specified Experiment}: In the well-specified Sinusoid experiment we consider the function $\sin \left(10 x_{1}\right)+\cos \left(4 x_{2}\right)-\cos \left(3 x_{1} x_{2}\right)$ whose contours are shown in \Cref{fig:allocations} (top-left). The contours are plotted on a box of $[0,1]\times[0,1]$ uniformly separated into a grid of $30\times30$. So there are $900$ points (arms) in this setting. The noise parameter is set $\sigma^2 = 1$. The feature maps $\phi(x)$ is constructed using the radial basis function (rbf) kernel with the rbf bandwidth set as $\sigma_{\text{rbf}} = 0.1$. To construct this as a well-specified case we choose the means $f(x) = \theta_{\ast}^T\phi(\bx)$. Finally the threshold is set at $\alpha = -0.05$. A similar synthetic experiment has also been considered by \citep{zanette2018robust}. For MILK we set the $\epsilon = 0.6$.
    
%     \item \textbf{Sinusoid mis-specified experiment}: In the mis-specified setting 
% \end{enumerate}
\section{Reducing Experimental Design in an RKHS to a finite dimensional optimization}\label{sec:kernel_compute}

In this section we describe the use of the kernel trick and Frank-Wolfe to compute the design
\[f(\lambda) = \min_{\lambda\in \triangle_{\X}}\max_{x\in C} \|\phi(\bx)\|_{A^{\gamma}(\lambda)^{-1}}\]
where $C\subset \X$.

Since this is a convex optimization problem on the finite dimensional simplex $\triangle_{\X}$ we employ the Frank-Wolfe algorithm.
\setlength{\textfloatsep}{5pt}
\begin{algorithm}[tbh]
\caption{Frank-Wolfe to minimize $f$}
\begin{algorithmic}[1]
\Require{Arms $\X$, iterations $T$}
\State{$\lambda_0 = \mathbf{e}_1$ (first standard basis vector)}

\For{$\bx \in \A_t$}
   \State{$x_t \gets \arg\max_{x\in \mc{X}} \|\phi(\bx)\|^2_{A^{\gamma}(\lambda)}$ }
   \State{$g_t = \nabla_{\lambda_{t-1}}\|\phi(\bx_{t})\|^2_{A^{\gamma}(\lambda)}$ }
   \State{$j_t = \arg\max_{1\leq j\leq |X|} e_i^{\top} g_t $}
   \State{$\eta_{t} = \frac{1}{t+2}$}
   \State{$\lambda_t = (1-\eta_{t})\lambda_{t-1} + \eta_t$}
\EndFor
%\Return{$\hG_{t}$}
\Return{$\lambda_{T}$}
\end{algorithmic} 
\end{algorithm} 
Note that $\lambda_t$ is at most $t$-sparse. The primary challenge is in the computation of the gradient of $f$. To do so we leverage a small modification of Lemma 1 of ~\cite{camilleri2021highdimensional}. 
\begin{lemma} Assume that $\lambda$ is $s$-sparse and (without loss of generality) with it's support corresponding to $x_1, \cdots, x_s\in \mc{X}$. Then,
\[\phi(\bx)^{\top}A^{\gamma}(\lambda)\phi(\by) = \frac{k(\bx,\by)}{\gamma} - \frac{1}{\gamma}k_{\lambda}(\bx)^{\top}(K_{\lambda} + \gamma I_{s})k_{\lambda}(\by)\]
where $k_{\lambda}(\cdot)\in \mathbb{R}^s$ with $[k_{\lambda(\bx)}]_i = \sqrt{\lambda_i}k(\bx_i, \bx)$ for $i\leq s$ and $K_{\lambda}\in \mathbb{R}^{s\times s}$ with $[K_{\lambda}]_{i,j} = \sqrt{\lambda_i\lambda_j} k(\bx_i, \bx_j)$.
\end{lemma}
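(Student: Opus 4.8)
The plan is to recognize the claimed identity as an instance of the Woodbury (matrix inversion) lemma, after observing that the left-hand side must read $\phi(\bx)^{\top}A^{(\gamma)}(\lambda)^{-1}\phi(\by)$ and that the inner $s \times s$ factor should be $(K_\lambda + \gamma I_s)^{-1}$: the structure of the right-hand side, together with the fact that the design objective $\|\phi(\bx)\|^2_{A^{(\gamma)}(\lambda)^{-1}}$ is what must be evaluated in Frank--Wolfe, makes clear that the relevant object is the \emph{inverse} (the displayed statement appears to drop the two inverses). First I would use the $s$-sparsity of $\lambda$: since only $\bx_1,\dots,\bx_s$ carry mass, we may write $A^{(\gamma)}(\lambda) = \gamma I + \sum_{i=1}^{s}\lambda_i \phi(\bx_i)\phi(\bx_i)^{\top} = \gamma I + \Psi\Psi^{\top}$, where $\Psi$ denotes the (possibly infinite-by-$s$) operator whose $i$-th column is $\sqrt{\lambda_i}\,\phi(\bx_i)$.

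Next I would apply Woodbury with base $\gamma I$ and low-rank factor $\Psi$, obtaining
\[
(\gamma I + \Psi\Psi^{\top})^{-1} = \tfrac{1}{\gamma}I - \tfrac{1}{\gamma^2}\Psi\bigl(I_s + \tfrac{1}{\gamma}\Psi^{\top}\Psi\bigr)^{-1}\Psi^{\top} = \tfrac{1}{\gamma}I - \tfrac{1}{\gamma}\Psi(\gamma I_s + \Psi^{\top}\Psi)^{-1}\Psi^{\top}.
\]
The remaining steps are pure bookkeeping with the kernel. The Gram operator $\Psi^{\top}\Psi$ is the $s\times s$ matrix with entries $\sqrt{\lambda_i\lambda_j}\,k(\bx_i,\bx_j) = [K_\lambda]_{ij}$, so $\Psi^{\top}\Psi = K_\lambda$; the vector $\Psi^{\top}\phi(\by)$ has entries $\sqrt{\lambda_i}\,k(\bx_i,\by) = [k_\lambda(\by)]_i$, so $\Psi^{\top}\phi(\by) = k_\lambda(\by)$ and symmetrically $\phi(\bx)^{\top}\Psi = k_\lambda(\bx)^{\top}$. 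Substituting these into the displayed inverse and using the reproducing property $\phi(\bx)^{\top}\phi(\by) = k(\bx,\by)$ gives exactly $\tfrac{k(\bx,\by)}{\gamma} - \tfrac{1}{\gamma}k_\lambda(\bx)^{\top}(K_\lambda + \gamma I_s)^{-1}k_\lambda(\by)$.

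The computation itself is mechanical, so the only genuine obstacle is operator-theoretic rather than algebraic: $\Psi$ maps between the finite-dimensional coordinate space $\R^{s}$ and the potentially infinite-dimensional RKHS $\mathcal{H}$, so I must justify that the finite-dimensional matrix-inversion identity transfers verbatim to this bounded-operator setting. Because $\gamma>0$, the operator $\gamma I + \Psi\Psi^{\top}$ is bounded, self-adjoint, and strictly positive on $\mathcal{H}$, hence boundedly invertible, and $\gamma I_s + \Psi^{\top}\Psi$ is an invertible $s\times s$ matrix; the one point requiring care is the push-through identity $\Psi(\gamma I_s + \Psi^{\top}\Psi)^{-1} = (\gamma I + \Psi\Psi^{\top})^{-1}\Psi$, which I would verify directly by left- and right-multiplying by the two invertible factors and checking the resulting operator equality $\Psi(\gamma I_s + \Psi^{\top}\Psi) = (\gamma I + \Psi\Psi^{\top})\Psi$, which holds term by term. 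Everything after that reduces to the substitutions above.
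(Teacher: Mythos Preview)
Your proposal is correct: the identity is precisely the Woodbury/Sherman--Morrison--Woodbury formula applied to $\gamma I + \Psi\Psi^\top$ with $\Psi$ the weighted feature matrix, and your identification of the two missing inverses (on $A^{(\gamma)}(\lambda)$ and on $K_\lambda+\gamma I_s$) is accurate and necessary for the formula to make sense. The paper does not supply its own proof of this lemma; it simply invokes it as a minor modification of Lemma~1 of \cite{camilleri2021highdimensional}, whose argument is exactly the Woodbury computation you carry out. Your additional care about the push-through identity in the infinite-dimensional setting is more than the paper (or the cited reference) spells out, but it is the right thing to check and your verification is sound.
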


Now, identifying $\X$ with an indexing of it's entries, i.e. $\X = \{\bx^1, \cdots, \bx^{|X|}\}$ a computation shows that 
\[\mathbf{e}_i^{\top}[g_t] = -(\phi(\bx_t)A^{\gamma}(\lambda_t) \phi(\bx^i))^2\]
which can be computed by the above lemma.
Note that computationally, the most difficult step is the inversion of a $t\times t$ matrix at iteration $t$. For a small number of iterations (<2000), this is not prohibitive.

\end{document}